\documentclass[twoside,11pt]{article}

\usepackage{blindtext}

% Any additional packages needed should be included after jmlr2e.
% Note that jmlr2e.sty includes epsfig, amssymb, natbib and graphicx,
% and defines many common macros, such as 'proof' and 'example'.
%
% It also sets the bibliographystyle to plainnat; for more information on
% natbib citation styles, see the natbib documentation, a copy of which
% is archived at http://www.jmlr.org/format/natbib.pdf

% Available options for package jmlr2e are:
%
%   - abbrvbib : use abbrvnat for the bibliography style
%   - nohyperref : do not load the hyperref package
%   - preprint : remove JMLR specific information from the template,
%         useful for example for posting to preprint servers.
%
% Example of using the package with custom options:
%
% \usepackage[abbrvbib, preprint]{jmlr2e}

\usepackage{jmlr2e}

% Definitions of handy macros can go here

% Heading arguments are {volume}{year}{pages}{date submitted}{date published}{paper id}{author-full-names}

\usepackage{lastpage}
\jmlrheading{23}{2025}{(Submitted)}{10/25; Revised -}{-}{21-0000}{Xi He}

%%% my package

\usepackage{color}
\usepackage{array}
\usepackage{amsmath}
\usepackage{amssymb}
\usepackage{graphicx}
\usepackage{varwidth}
\usepackage{float}

\usepackage{tikz}

\makeatletter

%%%%%%%%%%%%%%%%%%%%%%%%%%%%%% LyX specific LaTeX commands.
%% Because html converters don't know tabularnewline
\providecommand{\tabularnewline}{\\}
%% A simple dot to overcome graphicx limitations

%%%%%%%%%%%%%%%%%%%%%%%%%%%%%% LyX specific LaTeX commands.
\newcommand*\LyXZeroWidthSpace{\hspace{0pt}}
%% Because html converters don't know tabularnewline
\providecommand{\tabularnewline}{\\}
%% Variable width box for table cells
\newenvironment{cellvarwidth}[1][t]
{\begin{varwidth}[#1]{\linewidth}}
	{\@finalstrut\@arstrutbox\end{varwidth}}
\floatstyle{ruled}
\newfloat{algorithm}{tbp}{loa}
\providecommand{\algorithmname}{Algorithm}
\floatname{algorithm}{\protect\algorithmname}

\newtheorem{fact}{Fact}

% Short headings should be running head and authors last names

\ShortHeadings{Foundational theory for decision trees}{Xi He}
\firstpageno{1}

\begin{document}

\title{Foundational theory for optimal decision tree problems. \\
	I. Algorithmic and geometric foundation.}

\author{\name Xi He \email xihe@pku.edu.cn\\
	\addr School of Computer Science\\
	Peking University\\
	Beijing, 100084, China\\}

\editor{My editor}

\maketitle

\begin{abstract}%   <- trailing '%' for backward compatibility of .sty file
In 2017, Bertsimas and Dunn proposed the first optimal decision tree
(ODT) algorithm using general-purpose solvers, sparking significant
interest in subsequent studies. However, existing \emph{combinatorial
	methods}—primarily based on branch-and-bound method—focus on much
simpler variants, such as trees with binary feature data or axis-parallel
hyperplanes. Although the first combinatorial algorithm appeared in
1989, little subsequent work has directly built upon it. Researchers
have proposed various algorithms with different data structures and
frameworks, but few adopt a unified framework comparable to general-purpose
solvers. Many so-called optimal algorithms rely on ad-hoc algorithmic
processes and bounding strategies, with complex pseudo-code reducing
accessibility and ambiguously stated claims hindering independent
validation. However, optimal algorithms require rigorous verification—they
must provably guarantee optimal solutions for a given search space
and objective. Previous BnB algorithms often fail to meet this level
of rigor, rendering claims of fully reliable optimality, at best,
premature.

We argue that the root cause lies in the absence of a formal framework
for designing optimal algorithms using combinatorial methods and standardized
criteria for verifying correctness. Any algorithm claiming optimality
must be formally proven. To address this, in the first paper (part
I) of this series of two, we examined the ambiguity and informality
in studies of ODT problems and discussed the consequences of relying
on informal formalisms in optimal algorithm design. We then presented
criteria that an optimal algorithm should satisfy to ensure a verifiable
and transparent design process.

Beyond identifying limitations, our primary goal—and the central contribution
of this work—is to develop a foundational theory for solving ODT problems.
We aim to establish a comprehensive algorithmic framework for \emph{all}
optimal decision tree problems studied in machine learning, encompassing
\emph{axis-parallel, hyperplane}, \emph{hypersurface decision trees},
and those defined over \emph{binary feature data}. In Part I, we introduce
\emph{five} rigorous formulations of ODT problems: three for size-constrained
trees and one for depth-constrained trees, and one for trees over
binary feature data. These definitions are stated unambiguously through
\emph{executable recursive programs}, satisfying all criteria we propose
for a formal specification. In this sense, they resemble the ``standard
form'' used in the study of general-purpose solvers.

Grounded in algebraic programming theory—a relational formalism for
deriving correct-by-construction algorithms from specifications—we
can not only establish the \emph{existence} or \emph{nonexistence}
of dynamic programming solutions but also derive them \emph{constructively}
whenever they exist. Consequently, the five generic problem definitions
yield \emph{five optimal algorithms} for ODT problems with \emph{arbitrary
	splitting rules} satisfies the axioms and \emph{objective functions}
of a given form. These algorithms encompass the known ODT algorithms
as special case, while providing a unified, efficient, and elegant
solution for the general ODT problem.

\end{abstract}

\begin{keywords}
  decision tree, dynamic programming, global optimal algorithm
\end{keywords}

\section{Introduction}

Decision trees are widely used models in various areas of computer
science, including \emph{binary space partitioning trees} \citep{toth2005binary,fan2018binary,de2000computational}
and the \emph{$K$-$D$-tree} \citep{bentley1975multidimensional}
in computational geometry, as well as decision tree models in machine
learning (ML) \citep{breiman1984classification}.

Due to their simplicity, decision trees are particularly attractive
for providing highly interpretable models of the underlying data—an
aspect that has gained increasing importance in recent years. \citet{breiman2001statistical}
aptly noted, ``On interpretability, trees rate an A+.'' The most common
approach to optimizing decision trees is to employ top-down greedy
algorithms, such as the Classification and Regression Tree (CART)
\citet{breiman1984classification} and C4.5 \citep{quinlan2014c4}
algorithms, to optimize the tree.

However, with the rapid development of ML research, the greedy top-down
optimization algorithms are no longer as effective compared with state-of-the-art
ML algorithms. To improve the accuracy of decision tree models, two
common approaches are typically adopted. The first is to find the
globally optimal decision tree (ODT) for the given data. Alternatively,
instead of restricting models to axis-parallel hyperplanes, more complex
splitting rules can be applied. For example, a generalization of the
classical decision tree problem is the hyperplane (or oblique) decision
tree, which employs hyperplane decision boundaries to potentially
simplify complex boundary structures.

However, the decision tree optimization is notoriously difficult.
\citet{laurent1976constructing} showed that finding the \emph{smallest
	decision tree} (defined by the path lengths) is NP-complete. Consequently,
finding an optimal axis-parallel decision tree for a given dataset
is NP-hard. Similarly, for hyperplane decision trees, even the greedy
top-down optimization approach is NP-hard, as the number of possible
hyperplane splits for a dataset of size $N$ in dimension $D$ is
$O\left(N^{D}\right)$, which grows exponentially with $D$.

However, instead of addressing the general decision tree problem that
seeks the \emph{smallest }tree, successful algorithms such as CART
and C4.5 were not designed with this objective. Rather, their stopping
criteria explicitly constrain either the tree size (i.e., the number
of branch nodes) or the tree depth. Interestingly, \citet{ordyniak2021parameterized}
demonstrated that the decision tree problem with fixed size or depth
is \emph{fixed-parameter tractable}. In other words, when either parameter
is fixed, the problem can be solved in \emph{polynomial time}.

In this first paper of a two-part series, we present a framework for
constructing efficient algorithms—polynomial in the worst case and
trivially parallelizable—for obtaining globally optimal solutions
to decision tree optimization problems. Our exposition offers solutions
to nearly\textbf{ all }ODT problems encountered in practice, including
classical \emph{axis-parallel hyperplane}, \emph{general hyperplane},
\emph{polynomial hypersurface}, and decision trees over binary feature
data. In Part II \citep{he2025FoODT_II}, we will detail the construction of the \emph{first
	optimal hypersurface decision tree} algorithms, emphasizing their
practical implementation and applicability.

As our goal is to provide a comprehensive algorithmic framework for
all decision tree problems relevant in practice, this paper is written
in a pedagogical style. Nonetheless, it includes several novel contributions.
A summary of our main contributions in Part I is as follows:
\begin{itemize}
	\item \textbf{Critique and proposed solutions}: In Section \ref{subsec: 2.1},
	We critically examine current approaches to designing algorithms and
	defining problems in the study of optimal decision trees, highlighting
	the consequences of relying on informal problem formulations paired
	with ad-hoc algorithmic solutions—a practice common in combinatorial
	studies of ODT problems, particularly those employing BnB methods.
	Beyond this critique, the central distinction of our research lies
	in philosophy: we argue that deriving truly optimal algorithms demands
	the highest levels of rigor and clarity. Accordingly, we propose \emph{criteria}
	for formally defining problems in the study of optimal algorithms,
	along with the \emph{key characteristics} that a good optimal algorithm
	should possess. These discussions provide a foundational step for
	advancing research on ODT problems using combinatorial methods.
	\item \textbf{Formal combinatorial definitions}: Following the \emph{proper
		decision tree framework} examined in \citet{he2025odt}, which defines
	decision trees through recursive datatypes and establishes four axioms,
	In Subsection \ref{subsec:Four-distinct-definitions of ODT}, we established
	\textbf{four}\emph{ formal combinatorial definitions} for the ODT
	problem—covering both \emph{size-} and \emph{depth-constrained} trees.
	Importantly, we rigorously defined the \emph{search space} and the
	\emph{objective function} of the ODT problems as directly executable
	programs. This is vitally important because it allows researchers
	to inspect the search space by running the program and verifying the
	resulting decision trees. Such transparency not only validates our
	solution but also fosters unambiguous communication. To our knowledge,
	no prior work on BnB algorithms for ODT problems has provided a problem
	definition with this level of rigor, combining \emph{axiomatic definitions
	}of decision trees with \emph{programmatic definitions} of the search
	space and objective function.
	\item \textbf{Generic optimal decision tree algorithms}: In Subsection \ref{subsec:Four solutions to ODT},
	We present \textbf{four}\emph{ generic algorithms} for solving ODT
	problems with \emph{arbitrary splitting rules} over \emph{proper decision
		trees}, capable of optimizing \emph{arbitrary objective functions}
	of a specified form. One algorithm was previously introduced in our
	earlier work \citet{he2025odt}, while the remaining three are new
	contributions in this exposition. All algorithms are derived constructively
	and presented in a formal manner. Specifically, we provide three algorithms
	for \emph{size-constrained} trees and one for \emph{depth-constrained}
	trees. Notably, our depth-constrained ODT algorithm not only offers
	a formal proof of the results in \citet{brita2025optimal} for the
	axis-parallel ODT problem but also generalizes their findings to broader
	settings.
	\item \textbf{Extension to decision tree over binary feature data}: \citet{he2025odt}
	showed that the ODT-BF problem is a non-proper decision tree problem,
	requiring a distinct formulation of its search space. Based on this
	formulation, we derived a provably optimal algorithm for solving it.
	To the best of our knowledge, this is the first optimal algorithm
	that guarantees exhaustive exploration of the ODT-BF search space,
	thereby establishing a solid foundation for addressing this problem.
	\item \textbf{Filter and thinning fusion}: In Subsection \ref{subsec: thinning and filtering},
	We propose two fusion theorems—\\
	filtering and thinning—to enhance the
	flexibility and practical applicability of ODT algorithms while maintaining
	formal rigor. The filtering fusion theorem allows additional constraints,
	such as \emph{depth }and\emph{ leaf-size constraints}, to be incorporated
	into ODT algorithms defined over size-constrained trees. Similarly,
	\emph{tree-size constraints} can be applied to ODT algorithms defined
	over depth-constrained trees. The thinning fusion theorem integrates
	well-known\emph{ bounding technique}s into the framework. Together,
	these theorems extend the practical utility of the algorithms without
	compromising optimality.
	\item \textbf{Proving the properness of hypersurface decision trees}: In
	Section \ref{sec:Geometric-foundation}, by generalizing \citet{he2023efficient}'s
	0-1 loss linear/hypersurface classification theorem from linear models
	to decision tree models, and combining it with the geometric characterization
	of polynomial hypersurfaces as combinations of data points, we demonstrate
	that hypersurface decision trees are proper decision trees and can
	therefore be solved exactly using our proposed algorithms.
	\item \textbf{Splitting rule generators}: In Subsection \ref{subsec:Splitting-rule-generator},
	leveraging the sophisticated combination generator introduced by \citet{he2025CGs},
	we develop efficient algorithms for generating both \emph{hypersurface
		splitting rules} and\emph{ mixed-splitting rules} for a given dataset.
	The \emph{hypersurface splitting rule generator} provides the foundation
	for constructing efficient hypersurface decision tree algorithms developed
	in Part II, while the \emph{mixed-splitting rule generator} establish
	a foundation for future work on constructing optimal ODT algorithms
	with mixed splitting rules.
	\item \textbf{Complexity analysis for optimal decision tree problems}: In
	Subsection \ref{subsec:Complexity-of ODT problems}, Most studies
	on ODT algorithms either do not analyze worst-case complexity or analyze
	it using incorrect combinatorial characterizations. Our analysis provides
	an upper bound for size-constrained decision trees, which also implies
	trivial lower bounds for depth-constrained axis-parallel decision
	tree problem. Our lower bound analysis reveals a discrepancy with
	the function used to count the combinatorial complexity in the source
	code of \citet{mazumder2022quant}. This observation suggests a possible
	limitation in the exhaustiveness of their algorithm, which could warrant
	further investigation to refine the approach.
\end{itemize}
A summary of Part I of this paper is as follows. In Section \ref{sec:How-to-define},
we examine the informality of problem definitions in existing studies
of optimal decision tree algorithms. Subsection \ref{subsec: 2.1}
analyzes the consequences of adopting such informal definitions. In
Subsection \ref{subsec:What-should-we}, we discuss potential remedies
and outline the desired characteristics of a sound optimal algorithm.
Section \ref{sec:Notations-and-auxiliary} introduces the notations
and auxiliary functions used throughout the paper.

Section \ref{sec:Theory of decision tree} presents the main \emph{algorithmic
	foundations}. We begin by defining a proper decision tree, which allows
decision trees to be formalized using datatypes and axioms. Based
on these axioms, Subsection \ref{subsec:Four-distinct-definitions of ODT}
constructs four formal definitions of both size-constrained and depth-constrained
ODT problems, each yielding a generic solution algorithm. The construction
of these algorithms is detailed in Subsection \ref{subsec:Four solutions to ODT}.
In Subsection \ref{subsec: thinning and filtering}, we extend the
framework by introducing filtering and thinning algorithms, enabling
constraints and bounding techniques to be applied via simple conditions
derived from our theorems. Subsection \ref{subsec:Comparison-of-four algs}
provides a thorough comparison of the four algorithms.

In Section \ref{sec:Geometric-foundation}, we extend the results
of \citet{he2023efficient} to establish the \emph{geometric foundation}
for hypersurface decision tree problems. We show that, when the data
are in general position, decision trees with hypersurfaces defined
by data points are proper. Building on the combination and nested
combination generators of \citet{he2025CGs}, Subsection \ref{subsec:Splitting-rule-generator}
introduces two splitting rule generators: one for \emph{hypersurface
	splits} and another for \emph{mixed splits}, allowing a single tree
to incorporate axis-parallel, hyperplane, and hypersurface divisions.

Finally, Section \ref{sec:Conclusion} presents the conclusion.

\section{How to define an optimal decision tree problem formally and why it
	matters\label{sec:How-to-define}}

A useful analogy for understanding formal problem definitions comes
from the study of general-purpose solvers, where strict rules ensure
that problems are expressed in a precise form, often referred to as
the ``standard form.'' For instance, the standard form of a linear
programming problem requires the objective function to be minimized,
all constraints to be equalities, and all variables to be non-negative.
These strict requirements ensure unambiguous communication among researchers
and reproducibility of results.

In contrast, there appears to be no widely accepted standard for formally
defining optimal algorithms that use combinatorial methods. To appreciate
the importance of formal definitions, it is first necessary to clarify
what constitutes an informal definition and why such informality can
lead to significant consequences. We then discuss the key criteria
that must be satisfied to resolve ambiguity, as well as the \emph{characteristics}
that a \emph{good} optimal algorithm should possess.

\subsection{Informality and consequences\label{subsec: 2.1}}

Data back to the dawn of 20th century, after \citet{godel1931formal}
announced his well-known incompleteness theorem, there is a problem
determining a formal definition of ``effectively calculable.'' To
address this, three foundational models of computation were proposed:
lambda calculus, invented by \citep{church1936unsolvable}, general
recursive functions, proposed by Gödel \citep{gandy1988confluence},
and Turing machine, by \citet{turing1936computable}. Church give
his definition of $\lambda$-definability and \emph{claim} all calculation
that is $\lambda$-definable are effective calculable. However, his
argument was unsatisfactory to Gödel. As \citet{wadler2015propositions}
noted:`` Church merely presented the definition of $\lambda$-definability
and baldly claimed it corresponded to effective calculability.''

In contrast, Turing conducted a detailed analysis of the capabilities
of a ``computer''—at the time, the term referred to a human performing
a computation assisted by paper and pencil. \citet{gandy1988confluence}
pointed out that Turing's argument amounts to a theorem asserting
any computation a human with paper and pencil can perform can also
be performed by a Turing machine. It was finally Turing's argument
convinced Gödel, and eventually prove the three definitions had been
proved equivalent.

Returning to the study of the optimal decision tree problem, a similar
issue arises in the design of optimal algorithms using combinatorial
methods. In many studies, \emph{algorithms are directly defined and
	claimed to be optimal for a given problem, yet a rigorous problem
	definition is never formally established}. While precise problem definitions
may seem negligible for most approximate or heuristic algorithms in
machine learning, their absence can have serious consequences in the
design of true optimal algorithms. Informality in problem definition
not only \emph{poses risks for high-stakes applications} that require
strict optimality but can also \emph{mislead subsequent research}.
This is because the design of globally optimal algorithms has a \textbf{singular
	goal}: to identify a configuration (or multiple configurations with
the same optimal objective) over the entire search space, for which
there exists a \emph{unique} optimal value.

Unfortunately, a common trend in the studies of optimal algorithms
using combinatorial methods are to focus on proving bounding techniques
without establishing the correctness of the main algorithm. However,
establishing the optimality of an algorithm requires more than just
validating a bounding technique. In essence, much of these works amounts
to the design of ``optimal bounding techniques'' rather than the development
of truly ``optimal algorithms'' for the decision tree problem. These
are fundamentally different research directions, as a \emph{bounding
	technique is meaningful only when it supports a provably correct algorithm
	in the first place}. We believe this inclination stems from a programming
philosophy that prioritizes efficiency as the primary, and sometimes
sole, objective. This tendency is further compounded by the lack of
widely accepted criteria for defining an optimal algorithm.

\subsubsection*{What is an ``informal'' definition, and what are the consequences
	of adopting such a definition in optimal algorithm design?}

For instance, studies of ODT algorithms using combinatorial methods
often leave the problem undefined\footnote{We claim that these studies leave the problem undefined because they
	commonly do not specify the solution space over which they are searching
	for an optimal decision tree. In some cases, the space is not even
	ambiguously defined, as in specification (\ref{eq: specification in literature}),
	making it unclear which ODT problem they are actually attempting to
	solve.} \citep{lin2020generalized,zhang2023optimal,van2023necessary,demirovic2022murtree,hu2019optimal}
or specify their problem ambiguously, as in \citep{mazumder2022quant,brita2025optimal},
where the problem is expressed in a form similar to
\begin{equation}
	s^{*}=\text{argmin}_{s\in\mathcal{S}}\sum_{n\in\mathcal{N}}E\left(s\left(x_{n}\right)\neq t\right)\label{eq: specification in literature}
\end{equation}
where $\mathcal{S}$ is often defined as the search space of all possible
decision trees with a constrained depth or size. $E$ is the objective
function, which is often defined as the \emph{0-1 loss} $E_{\text{0-1}}=\mathbf{1}\left(s\left(x_{n}\right)\neq t\right)$
(count the number of misclassification) (counting the number of misclassifications)
for classification problems, and the\emph{ squared loss} $E_{\text{0-1}}=\left(s\left(x_{n}\right)\neq t\right)^{2}$
for regression problems.

Although (\ref{eq: specification in literature}) appears intuitively
obvious, it suffers from several major flaws. First, the search space
$\mathcal{S}$ usually denoted by a symbol and defined through \emph{text
	description}. For example, \citet{brita2025optimal} wrote:`` $\mathcal{T}\left(\mathcal{D},d\right)$
describe the set of all decision trees for the datasets $\mathcal{D}$
with a maximum depth of $d$ .'' Adopting such a textual description
for $\mathcal{T}\left(\mathcal{D},d\right)$ is inherently ambiguous
and dangerous. Even though it appears simple, different researchers
may interpret the search space differently due to variations in the
constraints imposed on the trees, and some may even adopt entirely
different definitions of the decision tree. In contrast, MIP specifications
force researchers to define precisely what they mean by a decision
tree by specifying the constraints first, enabling an unambiguous
definition.

As a result, studies that adopt such informal problem definitions
cannot verify the most important step in proving the optimality of
an algorithm—showing that the algorithm provably searches \textbf{all
	possible solutions} in $\mathcal{S}$ ($\mathcal{T}\left(\mathcal{D},d\right)$
in \citet{brita2025optimal}), because $\mathcal{S}$ is not formally
defined. Consequently, research on BnB algorithms often focuses solely
on proving the correctness of the bounding techniques and then claims
the algorithm is ``optimal.''

Additionally, the ambiguous problem definition inevitably leads to
inconsistencies in the study of optimal algorithms. For example, although
\citet{mazumder2022quant} and \citet{brita2025optimal} claim to
solve the same problem, we found that their algorithms return different
trees with significantly different objective values on the same datasets—differences
too large to be attributed to numerical issues. This suggest that
either \citet{mazumder2022quant}'s algorithm is non-optimal or that
the two studies are solving different ODT problems. Evidence favors
the first explanation: a function in \citet{mazumder2022quant}'s
source code counts the number of possible trees in the search space,
which contradicts the lower bound we provide in Subsection \ref{subsec:Complexity-of ODT problems}.

Secondly, for most objectives in machine learning, comparisons between
two configurations are defined by a \emph{preorder}: for any pair
of configurations $a$ and $b$, $a\leq b$ and $b\leq a$ does not
implies $a=b$. In other words, two configurations with the same objective
value are not necessarily identical. As a result, the $\text{argmin}$
over a preorder does not always yield a unique solution. How we define
$\text{argmin}$ thus becomes critical, especially in ML applications
where we care about the predictive outcomes of the solution: different
configurations with the same objective can lead to completely different
predictions.

Furthermore, interpretations of what constitutes an ``optimal algorithm''
vary. Some restrict optimality to a specific objective, while others
do not. For instance, in classification tasks, most researchers consider
the solution minimizing the 0-1 loss as the standard for optimality,
because optimizing convex surrogates such as hinge or squared loss
is relatively straightforward. This explains why the support vector
machine, which is optimal for hinge loss, is not typically claimed
as ``optimal algorithm.'' In contrast, some studies on empirical risk
minimization problem for two-layer neural networks \citep{arora2016understanding,pilanci2020neural}
optimize \emph{convex losses} but claim the optimality. This highlights
the importance of establishing a common and unambiguous definition
of the objective function to avoid misunderstandings about the problem
setting.

Finally, if $\text{argmin}$ is left undefined, then $E$ is often
undefined as well. Although the definitions (or implementations) of
$E$ and $\text{argmin}$ may seem trivial, they are in fact crucial
for proving the correctness of an algorithm. As we will see in our
exploration, the existence of a dynamic programming or greedy algorithm
is almost entirely determined by the structure of\textbf{ $E$ }and
$\mathcal{S}$; in other words, it is governed by the \textbf{structural
	compatibility} between $E$ and $\mathcal{S}$.

\subsection{What should we do next?\label{subsec:What-should-we}}
\begin{quote}
	The difficulty I foresaw were a consequence of the absence of generally
	accepted criteria, and although I was convinced of the validity of
	the criteria I had chosen to apply, I feared that my review would
	I be refused or discarded as ``a matter of personal taste.'' I still
	think that such reviews would be extremely useful and i am longing
	to see them appear, for their accepted appearance would be a sure
	sign of maturity of the computing community. — \citet{dijkstra1972humble}
\end{quote}

\paragraph{Criteria of formal specification}

In \citet{dijkstra1972humble}'s 1972 talk, `` The Humble Programmer,''
Dijkstra criticized the programming philosophy that prioritizes efficiency
optimization above all else, arguing that it leads to unstructured
designs that are hard to program. Similarly, in our study of optimal
algorithms for machine learning, an exclusive focus on efficiency
when solving optimal algorithms has led to many oversights. These
issues, I argue, stem from the facts that algorithm designers also
prioritizes the theoretical computational efficiency and absence of
widely accepted criteria in the study of optimal algorithms, concerns
echoed in Dijkstra's discussion.

The lack of standardized criteria for validating optimality in studies
of combinatorial methods for optimal algorithms has consequences reminiscent
of Dijkstra's observations. I foresaw that my critique of this issue
might be dismissed as subjective, much like Dijkstra's fear that his
reviews would be rejected as ``a matter of personal taste.'' Despite
this, I believe such critical reviews are essential and would signal
the maturity of the optimization community if widely accepted.

Here, I summarize several key criteria that are essential for the
design of optimal algorithms but have often been overlooked. I believe
that omitting discussion of these points can lead to severe consequences
for the reproducibility and verifiability of optimal algorithm design.
To bolster the credibility of these arguments, I draw parallels with
insights from \citet{dijkstra1972humble} insights. These considerations
can be broadly categorized as follows:
\begin{enumerate}
	\item \textbf{Exact algorithms must be provable correct}: This is the central
	philosophy we aim to convey: an exact algorithm must be provably correct,
	which demands the highest level of rigor in establishing its correctness.
	As Dijkstra emphasized:`` We must not forget that it is not our business
	to make programs; it is our business to design classes of computations
	that will display a desired behavior.'' When designing optimal algorithms
	using combinatorial methods, it is essential to rigorously prove that
	the algorithm exhibits the intended behavior. Specifically, an optimal
	algorithm must be proved to either (i) exhaustively explore all valid
	solutions to guarantee identification of the optimal one with respect
	to the given objective, or (ii) soundly eliminate non-optimal solutions.
	Without such proof, any claim of optimality remains unfounded.
	\item \textbf{Proof requires an unambiguous problem specification}: Logically,
	a specification (definition) \textbf{cannot} be ``proved''; it must
	be \textbf{defined} or \textbf{introduced} as clearly as possible
	to allow for verification and rigorous reasoning. In computer science,
	one of the clearest ways to specify a problem is through an \emph{actual
		program} rather than \emph{pseudocode}, which can obscure crucial
	details and hinder reproducibility. For combinatorial problems, it
	is essential to answer: \emph{What exactly is the search space?} and
	\emph{What objective function is being optimized?} Ideally, the search
	space and objective function should either be defined by a program
	that are \emph{immediately executable} or expressed in a standard
	(MIP) form for studies using general-purpose solvers. Without such
	clarity, inconsistencies in problem definitions and their corresponding
	solutions are inevitable.
	\item \textbf{Algorithms should be intellectually manageable}: For instance,
	\citet{demirovic2022murtree}'s algorithm spans nearly five pages
	of pseudocode, making it intellectually challenging to understand,
	implement, and verify. Although a common strategy in algorithm design
	is to first construct a program and then verify that it satisfies
	the required properties, this process is often cumbersome and error-prone.
	As \citet{dijkstra1972humble} famously quote:`` Program testing can
	be a very effective way to show the presence of bugs, but it is hopelessly
	inadequate for showing their absence. The only effective way to raise
	the confidence level of a program significantly is to give a convincing
	proof of its correctness. But one should not first make the program
	and then prove its correctness, because then the requirement of providing
	the proof would only increase the poor programmer's burden.'' Similarly,
	research on combinatorial methods—particularly BnB algorithms—often
	adopts the practice of constructing ad-hoc algorithms and proving
	their correctness post-hoc using complex proofs (or simply without
	proof). However, this approach is problematic. Complex algorithms
	typically result in complex proofs, which not only make the algorithms
	less accessible but also harder to verify To address this, Dijkstra—and
	likewise, this work—advocates a \emph{constructive} approach to programming:
	derive programs from well-defined specifications, thereby ensuring
	correctness \emph{by design}. In combinatorial optimization, we begin
	with an unambiguous \emph{brute-force specification} of the optimization
	problem and systematically derive a more efficient algorithm. This
	guarantees that the final implementation inherits the properties of
	the original specification.
	\item \textbf{The importance of complexity analysis}: This issue arises
	from earlier challenges, such as the lack of clear problem definitions
	and bounding techniques, which make formal complexity analysis infeasible.
	The situation is further exacerbated by the fact that many ad-hoc
	algorithms designed in BnB research are not intellectually manageable.
	For instance, many \emph{operational components}—such as the use of
	depth-first search strategies—are implicitly embedded in the pseudocode
	of the designed algorithms. While these additional ingredients may
	provide advantages, such as reducing runtime memory usage, they also
	introduce side effects: the resulting algorithms become harder to
	analyze formally. Consequently, most studies on BnB algorithms do
	not provide time or space complexity analyses. However, complexity
	analysis remains critically important for algorithm design. Without
	it, the scalability and practical applicability of the algorithm remain
	unpredictable.
	\item \textbf{Acceleration techniques must be demonstrably effective}: This
	may seem counterintuitive: if a technique provably reduces the search
	space, how could it be ineffective? In practice, the overhead of applying
	the technique can outweigh its benefits. The key is to balance the
	cost of applying an acceleration method against the potential reduction
	in search space. For example, \citet{brita2025optimal} claim that
	caching is a major factor behind their algorithm's efficiency. However,
	\citet{he2025odt} provide a simple analysis showing that overlapping
	subproblems in decision tree construction are rare—particularly for
	the ODT problem with axis-parallel or more complex splitting rules—making
	caching or memoization largely ineffective. In our experiments, we
	even observed cases where \citet{brita2025optimal}'s algorithm explored
	over a million decision trees without a single cache hit, suggesting
	that in such scenarios, disabling caching entirely may improve performance.
	\item \textbf{Designing algorithms and designing bounding techniques are
		distinct tasks}: As discussed, there is widespread confusion in the
	community regarding the distinction between designing optimal algorithms
	and developing bounding techniques for them. Bounding techniques are
	only meaningful when applied to algorithms that are already optimal.
	Therefore, research on these two aspects should be analyzed and presented
	separately. Studies on bounding techniques should focus on assessing
	their effectiveness by providing theoretical guarantees or empirical
	evidence. In contrast, the design of optimal algorithms must first
	establish their correctness and optimality, followed by a rigorous
	analysis of their time and space complexity as well as their empirical
	efficiency—entirely independent of any bounding techniques.
\end{enumerate}

\paragraph{Characteristics of an good optimal algorithm}

Having discussed the critical criteria for designing optimal algorithms,
a natural question arises: if several algorithms satisfy these criteria,
which one is the best, and what characteristics should a good optimal
algorithm possess? Indeed, many algorithms have been proposed for
the optimal decision tree problem, but identifying the best approach
is often challenging, as comparisons are frequently conducted unfairly
or omit essential considerations.

By contrast, in the study of sorting algorithms, several well-established
characteristics allow researchers to assess an algorithm's quality
systematically.A good sorting algorithm typically aims to satisfy
four key properties, though achieving all simultaneously is nontrivial:
\begin{enumerate}
	\item \textbf{Fast}: The algorithm should be asymptotically optimal in the
	number of comparisons, and the constants involved in other operations
	should also be small.
	\item \textbf{Smooth}: Performance should improve on partially sorted inputs;
	the more sorted the input, the faster the algorithm performs.
	\item \textbf{Stable}: Records with equal keys should retain their relative
	order in the output.
	\item \textbf{Compact}: The algorithm should be economical in both space
	and runtime.
\end{enumerate}
For example, mergesort algorithm is fast, stable, and compact, but
not smooth, as it maintains $O\left(N\log\left(N\right)\right)$ complexity
even for already sorted inputs.

By analogy, we believe a good optimal algorithm also should satisfy
the following properties:\emph{ provably optimal}, \emph{conceptually
	simple}, \emph{computationally efficient}, and \emph{inherently parallelizable}.
In the context of ML research, such as the ODT problem, it is also
important to consider an additional property: the \emph{generality
	of the solution}. Optimizing a single objective or specific splitting
rules does not guarantee robust performance across diverse scenarios.
An algorithm capable of handling \emph{flexible objective functions}
or constructing \emph{decision trees with complex splitting rules}
is typically more practical and broadly applicable. While we have
already discussed the importance of simplicity and provable optimality,
we now clarify what we mean by efficiency, generality, and parallelizability.
\begin{enumerate}
	\item \textbf{Efficiency}: The efficiency of an optimal algorithm should
	not be judged solely by its theoretical worst-case complexity. Factors
	such as hidden constants in the big-O notation, and more importantly,
	compatibility with hardware and software, significantly influence
	practical performance. Algorithms with predictable blocked array usage
	and structured computational processes often achieve higher efficiency
	in practice. Thus, we argue that designing hardware-friendly algorithms
	is as important as designing algorithms that achieve superior theoretical
	efficiency. In Part II, we demonstrate that an algorithm with superior
	theoretical performance may still run slower than a less theoretically
	efficient one if the latter is better aligned with hardware structure.
	\item \textbf{Abstraction and generality}. Abstraction not only simplifies
	complex problems but also enhances the generality of solutions. As
	\citet{dijkstra1972humble} noted, ``the effective exploitation of
	his powers of abstraction must be regarded as one of the most vital
	activities of a competent programmer. the purpose of abstracting is
	not to be vague, but to create a new semantic level in which one can
	be absolutely precise.'' In algorithm design, abstraction provides
	substantial advantages. In part I, we will demonstrate that adopting
	a generic principle can lead to a deeper understanding of problem
	structure, ultimately enabling a generic algorithm capable of solving
	the ODT problem with arbitrary splitting rules and a broad class of
	objective functions.
	\item \textbf{Parallelism}: For problems with intractable combinatorics—such
	as NP-hard optimization problems—parallelism is perhaps the only practical
	way to scale the algorithm to handle large scale datasetss. We do
	\textbf{not} merely advocate designing complex algorithms and parallelizing
	them post-hoc; rather, we argue that algorithms should be inherently
	parallelizable \emph{from the beginning of the design process}. In
	other words, parallelizability should be treated as a fundamental
	property during algorithm design. Designing inherently parallel algorithms
	while maintaining optimality is challenging and requires a deep understanding
	of both the problem and algorithmic structure. The success of neural
	networks illustrates this well: GPU hardware significantly enhanced
	their effectiveness by aligning with their computational structure.
	Similarly, we argue that designing hardware-friendly algorithms is
	as important as achieving theoretical efficiency. For example, in
	Part I, we show that solving the ODT problem with intractable splitting
	rules depends heavily on the algorithm's ability to exploit parallelism.
	Furthermore, in Part II, we demonstrate that an algorithm with superior
	theoretical performance may still run slower than a less theoretically
	efficient one if the latter is better aligned with parallel hardware.
\end{enumerate}
Rather than merely offering criticism or vacuous suggestions, we aim
to demonstrate the effectiveness of the criteria we proposed. In the
following paper, we formally address the ODT problem, thereby ensuring
that these criteria are satisfied and that the resulting algorithms
are intellectually manageable. We show that approaching the problem
in such an intellectually manageable manner not only simplifies the
derivation and makes it more accessible, but also enables the construction
of more generic and efficient solutions that exhibit the desired properties
of a ``good optimal algorithm'' as we proposed—even though not all
properties may be fully achieved simultaneously.

\section{Notations and auxiliary functions\label{sec:Notations-and-auxiliary}}

\subsection{Notations}

In this paper, the symbol $:$ is used in two distinct contexts. When
appearing in code expressions, $:$ denotes the list-construction
(cons) operator, which prepends an element to an existing list. For
example, $a:\left[b,c\right]=\left[a,b,c\right]$. When appearing
in type signatures, the same symbol $:$ specifies the type of an
expression; for example, $x:\mathbb{N}$ means ``$x$ is a natural
number.''

The types of real and natural numbers are denoted by $\mathbb{R}$
and $\mathbb{N}$, respectively. We use square brackets $\left[\mathcal{A}\right]$
to denote the set of all finite lists of elements $a:\mathcal{A}$,
where $\mathcal{A}$ (or letters $B$ and $\mathcal{C}$ at the front
of the alphabet) represent type variables. A data point is denoted
as a $D$-tuple $x=\left(x_{1},\ldots,x_{D}\right):\mathbb{R}^{D}$.
Hence, $\left[\mathcal{R}\right]$ and $\left[\mathbb{R}^{D}\right]$
(We use $\mathcal{D}$ as a short-hand synonym for $\left[\mathbb{R}^{D}\right]$),
$\left[\mathcal{H}^{M}\right]$, denote the set of all finite lists
of splitting rules, lists of data in $\mathbb{R}^{D}$, and hypersurfaces
defined by degree $M$ polynomials, respectively. Two special type
synonyms, $C=\left[\mathbb{N}\right]$ and $NC=\left[\left[\mathbb{N}\right]\right]$
are used to denote combinations of natural numbers and nested combinations
(i.e., combinations of combinations).

Variables of these types are denoted using their corresponding lowercase
letters; for example, $r:\mathcal{R}$, $h:\mathcal{H}$, $h^{M}:\mathcal{H}^{M}$.
In particular, a degree-one polynomial hypersurface is defined as
$h=\boldsymbol{w}^{T}x+c$ , where $\boldsymbol{w},x:\mathbb{R}^{D}$
and $c\in\mathbb{R}$. Such a hypersurface is known as a hyperplane;
we denote $\mathcal{H}^{1}=\mathcal{H}$ by default. Similarly, $\mathcal{H}^{0}$
denotes the set of all axis-parallel hyperplanes, where an axis-parallel
hyperplane corresponds to fixing one coordinate xi to a constant value,
defined as $h=\boldsymbol{e}_{i}^{T}\boldsymbol{x}+c$, with $e_{i}$
being a unit vector along the $i$-th coordinate axis.

We denote a list of elements of the same type as $\mathit{as}:\left[\mathcal{A}\right]$;
for instance, $rs:\left[\mathcal{R}\right]$, $xs:\mathcal{D}$, $hs:\left[\mathcal{H}\right]$,
and $\mathit{hs}^{M}:\left[\mathcal{H}^{M}\right]$. The size of a
list as is denoted $\left|as\right|$ , and $as!d$ indexes the $d$-th
element of the list. Similarly, $\mathit{ass}$ denotes a list of
lists of elements of the same type. The operations $+\!\!+$ list
concatenation; for example, $a:\left[b,c\right]=\left[a\right]+\!\!+\left[b,c\right]=\left[a,b,c\right]$.

Both the affine $D$-space, for $D\geq1\in\mathbb{N}$, and the vector
$D$-space over $\mathbb{R}$ are denoted by $\mathbb{R}^{D}$. Both
are the set of all $D$-tuples of elements of $\mathbb{R}$. To distinguish
a vector in the vector space and a point in affine space, an element
$x=\left(x_{1},x_{2}\ldots,x_{D}\right)\in\mathbb{R}^{D}$ will be
called a \emph{point} in the affine space $\mathbb{R}^{D}$, an element
$x=\left(x_{1},x_{2}\ldots,x_{D}\right)^{T}\in\mathbb{R}^{D}$ is
called a \emph{vector} in the vector space $\mathbb{R}^{D}$, where
$x_{i}$ is called the \emph{coordinate} of $x$. Note that, except
for the data point vector $x$, we denote all other vectors, such
as the normal vector $\boldsymbol{w}$, and the polynomial coefficient
vector $\boldsymbol{\alpha}$, using bold symbols. The data point
$x$ is written in plain format, rather than bold, to maintain consistency
with the data list $\mathit{xs}$. Moreover, when referring to a data
point vector, we omit subscripts to avoid confusion with coordinate
indices $x_{i}$.

\subsection{Function deifinitions and point-free style functional composition}

One of the most common ways of defining a list or set is through list/set
comprehension. List/set comprehension is a mathematical notation for
defining a set by specifying a distinct property that all its members
must satisfy. Set comprehension takes the form
\[
\left\{ f\left(x\right)\mid x\in\mathcal{X},p\left(x\right)=\text{True}\right\} .
\]
This is read as:`` The set of all outputs $f\left(x\right)$ such
that element $x$ is a member of the source set $\mathcal{X}$ and
the predicate $p\left(x\right)=\text{True}$ holds.'' Similarly, list
comprehension takes the form

\[
\left[f\left(x\right)\mid x\leftarrow\mathit{xs},p\left(x\right)=\text{True}\right].
\]
This is read as:`` The list of all output $f\left(x\right)$ such
that element $x$ is drawn from source list $\mathit{xs}$ \emph{sequentially}
and the predicate $p\left(x\right)=\text{True}$ holds.''

Throughout the discussion, we will use some common functions from
functional programming languages. One frequently used function is
the map function over \emph{list}
\[
\mathit{mapL}\left(f,\mathit{xs}\right)=\left[f\left(x\right)\mid x\leftarrow\mathit{xs}\right],
\]
which applies function $f:\mathcal{A}\to\mathcal{B}$ to each element
$x$ in the list $\mathit{xs}$.

Similarly, another common function used frequently in our exploration
is the concatenate (or flatten) function 
\[
\mathit{concat}=[x\mid x\leftarrow\mathit{xs},\mathit{xs}\leftarrow\mathit{xss}],
\]
which flattens a list of lists into a single list. These two functions
can also be defined recursively. For example, the map function can
be written as:
\begin{align*}
	\mathit{mapL} & :\left(\mathcal{A}\to\mathcal{B}\right)\times\left[\mathcal{A}\right]\to\left[\mathcal{B}\right]\\
	\mathit{mapL} & \left(f,\mathit{xs}\right)=\left[f\left(x\right)\mid x\leftarrow\mathit{xs}\right]
\end{align*}
and the concatenate function as
\begin{align*}
	\mathit{concat} & :\left[\left[\mathcal{A}\right]\right]\to\left[\mathcal{A}\right]\\
	\mathit{concat} & \left(\left[\:\right]\right)=\left[\:\right]\\
	\mathit{concat} & \left(\mathit{as}:\mathit{ass}\right)=\mathit{as}+\!\!+\mathit{mapL}\left(\mathit{as}\right)
\end{align*}
Another useful function is the map function over the tree datatype,
which can be defined as follows

\[
\begin{aligned}\mathit{mapD} & :\left(\mathcal{B}\to\mathcal{C}\right)\times\mathit{DTree}\left(\mathcal{A},\mathcal{B}\right)\to\mathit{DTree}\left(\mathcal{A},\mathcal{C}\right)\\
	\mathit{mapD} & \left(f,\mathit{DL}\left(a\right)\right)=\mathit{DL}\left(f\left(a\right)\right)\\
	\mathit{mapD} & \left(f,\mathit{DN}\left(u,r,v\right)\right)=DN\left(\mathit{mapD}\left(f,u\right),r,\mathit{mapD}\left(f,v\right)\right),
\end{aligned}
\]
where the ``decision tree datatype'' $\mathit{DTree}\left(\mathcal{A},\mathcal{B}\right)$
will be formally introduced when we define decision trees.

To maintain the elegance of function definitions, we adopt\emph{ point-free
	style} programming paradigm, a style in which functions are defined
without explicitly naming their arguments. Instead, functions are
expressed solely through the composition of other functions, chaining
transformations without introducing intermediate variables.

For example, the function $\mathit{concatMap}$ can be defined as

\[
\mathit{concatMap}_{f}=\mathit{concat}\circ\mathit{mapL}_{f},
\]
where $\circ$ denotes function composition. This means that $\mathit{mapL}_{f}$
is applied first, followed by $\mathit{\mathit{concat}}$. Here, $\mathit{mapL}_{f}$
is an example of a section—a technique in Haskell for partially applying
an operator to create a new function by fixing one of its arguments.
Specifically, $\mathit{mapL}_{f}$ is a partial application of the
$\mathit{mapL}$ function, parameterized by a function $f:\mathcal{A}\to\mathcal{B}$,
resulting in a new function of type $\mathit{mapL}_{f}:\left[\mathcal{A}\right]\to\left[\mathcal{B}\right]$.
In the definition of $\mathit{concatMap}_{f}$, the input list $\mathit{xs}$
is not explicitly mentioned; this is why the style is called point-free.
Conceptually, $\mathit{concatMap}_{f}$ applies $f$ to each element
of the input list and concatenates the resulting lists.

More generally, when a function $g:\mathcal{X}\times\mathcal{Y}\to\mathcal{Z}$
is partially applied to an argument $x:\mathcal{X}$, we obtain a
new function $g_{x}:\mathcal{Y}\to\mathcal{Z}$. However, if the subscript
of a function name is already used for descriptive purposes—for example,
$g_{\text{SomeName}}\left(x,y\right):\mathcal{X}\times\mathcal{Y}\to\mathcal{Z}$—we
denote its partially applied form as $g_{\text{SomeName}}\left(x\right):\mathcal{Y}\to\mathcal{Z}$
to avoid ambiguity.

We acknowledge that point-free style may be unfamiliar to some readers.
Whenever a function is expressed in point-free form, we will provide
its type whenever possible to clarify its behavior.

\section{Algorithmic foundation\label{sec:Theory of decision tree}}

\subsection{Definition of decision tree \label{subsec:comb def of tree}}

Before discussing our formal definition of decision trees and the
ODT problem, we first extend the specification in (\ref{eq: specification in literature})
by introducing additional constraints. We then progressively explain
how to define decision trees and decision tree problems rigorously.

The ODT problem seeks to identify a \textbf{binary tree} $T$ that
minimizes the number of misclassifications, where the internal nodes
of $T$ are characterized by a list of splitting rules $\mathit{rs}:\left[\mathcal{R}\right]$.
In our exposition, $\mathit{rs}$ is a \emph{finite} set generated
from a finite set of data points $\mathit{xs}$. This \emph{size-constrained}
ODT problem typically involves two hyperparameters. The trade-off
between accuracy and tree complexity is managed either by:
\begin{enumerate}
	\item Fixing the tree size $\left|T\right|$ (the number of branching nodes,
	with the number of leaves equal to $\left|T\right|+1$), or
	\item Adding a penalty term $\lambda\left|T\right|$ to the objective function.
\end{enumerate}
Both approaches are equivalent because a fixed $\lambda$ will corresponds
to a tree with a fixed size $\left|T\right|$. In this exposition,
we focus on the case where $\left|T\right|=K$. The second hyperparameter
controls the minimum number of data items required in each leaf, denoted
$N_{\text{min}}$. The size-constrained ODT problem can therefore
be formulated as follows
\begin{equation}
	\begin{aligned}T^{*}= & \underset{T\in\mathcal{S}_{\text{size}}\left(K,\mathit{rs}\right)}{\text{ argmin }}E\left(T\right)\\
		\text{s.t., } & \left|l\right|\geq N_{\text{min}},\forall l\in\text{leaves}\left(T\right)
	\end{aligned}
	\label{eq: MIP-ODT-size}
\end{equation}
where $E\left(T\right)$ is the objective function. In classification
problems, it typically counts the number of misclassifications (i.e.,
the 0-1 loss), which will be the primary focus of this paper. The
set $\mathcal{S}_{\text{size}}\left(K,\mathit{rs}\right)$ denotes
the combinatorial search space of all decision trees of size $K$
with respect to a list of splitting rules $rs$, and $\text{leave}\left(T\right)$
represents all leaf nodes of the tree $T$.

Alternatively, one can define the ODT problem with a depth constraint
$D$ as

\begin{equation}
	\begin{aligned}T^{*}= & \underset{T\in\mathcal{S}_{\text{depth}}\left(D,\mathit{xs}\right)}{\text{ argmin }}E\left(T\right)\\
		\text{s.t., } & \left|l\right|\geq N_{\text{min}},\forall l\in\text{leaves}\left(T\right)
	\end{aligned}
	\label{eq: MIP-ODT-depth}
\end{equation}
where $\mathcal{S}_{\text{depth}}\left(D,\mathit{xs}\right)$ denotes
the search space of decision trees of depth $D$ with respect to the
data list $\mathit{xs}$ .

It is important to recognize that these definitions do not explicitly
define $T$ beyond stating that it is a binary tree. Different authors
may have slightly different interpretations of a decision tree. Some
define a decision tree purely in terms of its splitting rules, where
predictions are determined solely by the rules, while others consider
the predictions themselves as an essential part of the tree's structure.

The primary difference between our study and previous work on combinatorial
methods is that all previously undefined terms in (\ref{eq: MIP-ODT-size})
and (\ref{eq: MIP-ODT-depth})—such as $\text{leave}$, $E$, $T$,
and $\mathcal{S}_{\text{size}}$ (or alternatively, $\mathcal{S}_{\text{depth}}\left(K,\mathit{xs}\right)$
for depth-constrained tree)—will be defined unambiguously using \textbf{programs}
or \textbf{datatypes with formal axioms}.

Some astute readers may wonder why we define the search space of depth-constrained
trees using the data list $\mathit{xs}$ and size-constrained trees
using $\mathit{rs}$. This is an operational choice aimed at maintaining
elegance in the algorithm's expression, as we will explain in Subsection
\ref{subsec:Four-distinct-definitions of ODT}. Another reason for
treating $\mathit{xs}$ and $\mathit{rs}$ separately is to avoid
arbitrary choices of splitting rules, which could lead to constructing
an algorithm that does not terminate. If the set of possible splitting
rules were infinite, it would be impossible to design an algorithm
that constructs an ODT in finite time.

Therefore, the first and most important step is to define unambiguously
\emph{what we mean by a decision tree}\textbf{ }$T$. This step is
often overlooked in prior literature using combinatorial methods.
Without a formal definition of a ``decision tree,'' it is impossible
to rigorously define any other terms related to it.

\subsubsection*{Proper decision tree}

As mentioned, a decision tree is a binary tree that can be defined
recursively using the following datatype

\[
\mathit{DTree}\left(\mathcal{A},\mathcal{B}\right)=DL\left(\mathcal{B}\right)\mid DN\left(\mathit{DTree}\left(\mathcal{A},\mathcal{B}\right),\mathcal{A},\mathit{DTree}\left(\mathcal{A},\mathcal{B}\right)\right).
\]
This definition states that a binary decision tree $\mathit{DTree}\left(\mathcal{A},\mathcal{B}\right)$
is either a leaf $DL\left(b\right)$ with a label of type $b:\mathcal{B}$,
or an internal node$DN\left(u,a,v\right)$, where $u,v:\mathit{DTree}\left(\mathcal{A},\mathcal{B}\right)$
are the left and right subtrees, respectively, and $a:\mathcal{A}$
is the root node. A \emph{path} is defined as the sequence of nodes
along the edges from the root to a leaf. For each leaf $L$ in the
tree, there exists exactly one path from the root to $L$, which we
denote by $P_{L}$.

However, not every binary tree qualifies as a valid decision tree.
To characterize decision trees precisely, the datatype must be accompanied
by a set of \textbf{axioms}. We have previously formalized this notion
by defining a proper decision tree, associating $\mathit{DTree}\left(\mathcal{A},\mathcal{B}\right)$with
four axioms introduced in \citet{he2025odt}, as described below.
\begin{definition}
	Axioms for proper decision trees.\emph{ We call a decision tree consists
	of splitting rules that satisfies the following axioms, a proper
	decision tree:\label{def:Axioms-for Proper-DT}}
\end{definition}
\begin{enumerate}
	\item \emph{Structural constraint one} (Ambient space partition): Each branch
	node is defined by a single splitting rule $r:\mathcal{R}$, and each
	splitting rule subdivides the ambient space into two \emph{disjoint}
	and connected subspaces, $r^{+}$ and $r^{-}$.
	\item \emph{Structural constraint two} (Path intersection in leafs): Each
	leaf $L$ is defined by the intersection of subspaces $\bigcap_{p\in P_{L}}r_{p}^{\pm}$
	for all the splitting rules $\left\{ r_{p}\mid p\in P_{L}\right\} $
	in the path $P_{L}$ from the root to leaf $L$. The connected region
	(subspace) defined by $\bigcap_{p\in P_{L}}r_{p}^{\pm}$ is referred
	to as the \emph{decision} \emph{region}.
	\item \emph{Structural constraint three} (Partition transitivity): Part
	I: The ancestry relation between any pair of splitting rules $r_{i}\left(\swarrow\vee\searrow\right)r_{j}$
	is transitive; in other words, if $r_{i}\left(\swarrow\vee\searrow\right)r_{j}$
	and $r_{j}\left(\swarrow\vee\searrow\right)r_{k}$ then $r_{i}\left(\swarrow\vee\searrow\right)r_{k}$.
	Part II: Moreover, $\boldsymbol{K}_{ij}=\pm1$ if $r_{j}$ can be
	generated from $r_{i}^{\pm}$. As a result, any new decision rule
	$r$ added to a leaf must be generated within its corresponding decision
	region.
	\item \emph{Ancestral constraint} (Uniqueness of the ancestry relation):
	For any pair of splitting rules $r_{i}$ and $r_{j}$, only one of
	the following three cases is true: $r_{i}\swarrow r_{j}$, $r_{i}\searrow r_{j}$,
	and $r_{i}\overline{\left(\swarrow\vee\searrow\right)}r_{j}$; additionally,
	$r_{i}\overline{\left(\swarrow\vee\searrow\right)}r_{i}$ is always
	true; in other words, the possible value of $\boldsymbol{K}_{ij}\in\left\{ 1,0,-1\right\} $
	is unique determined for all $i,j$ , and $\boldsymbol{K}_{ii}=0$
	for all $i$.
\end{enumerate}
A key observation in \citet{he2025odt} is that if a tree's splitting
rules satisfy the structural constraints (Axioms 1-3), they define
a class of decision tree problems. However, the ancestral constraint
(Axiom 4), which specifies pairwise relationships between splitting
rules, implicitly determines the subclass of decision tree problems
under consideration. The proper decision tree framework adopts Axiom
4 as the core ancestral constraint because it encompasses most decision
tree models in machine learning, including axis-parallel hyperplane,
general hyperplane, and polynomial hypersurface decision trees. For
decision trees over binary feature data, however, Subsection \ref{subsec:Extension-to-decision}
explains how this problem violates Axiom 4, necessitating a different
algorithmic approach. Fortunately, thanks to the deep algorithmic
insights provided by our framework, this adaptation is achieved with
minimal changes.

Consider a tree with splitting rules $r$ defined as hyperplanes $h$,
referred as the \emph{decision hyperplanes} $h_{1}$, $h_{2}$, and
$h_{3}$ defined by normal vector $\boldsymbol{w}_{1}$, $\boldsymbol{w}_{2}$,
and $\boldsymbol{w}_{3}$, respectively. A tree of depth two can be
illustrated as follows

\begin{center}
	\begin{tikzpicture}[level distance=1.5cm, edge from parent/.style={draw,-latex}]
		\node[circle, draw] {$h_1$} 
		[sibling distance=6cm] 
		child {node[circle, draw] {$h_2$}
			[sibling distance=3.5cm] 
			child {node {${h_1^+\cap h_2^+}$} edge from parent node[left] {\(\bar{\boldsymbol{w}}_{2}^{T}\boldsymbol{x} \geq 0\)}}
			child {node {${h_1^+\cap h_2^-}$} edge from parent node[right] {\(\bar{\boldsymbol{w}}_{2}^{T}\boldsymbol{x} < 0\)}}
			edge from parent node[left] {\(\bar{\boldsymbol{w}}_{1}^{T}\boldsymbol{x} \geq 0\)}}
		child {node[circle, draw] {$h_3$}
			[sibling distance=3.5cm] 
			child {node {${h_1^-\cap h_3^+}$} edge from parent node[left] {\(\bar{\boldsymbol{w}}_{3}^{T}\boldsymbol{x} \geq 0\)}}
			child {node {${h_1^-\cap h_3^-}$} edge from parent node[right] {\(\bar{\boldsymbol{w}}_{3}^{T}\boldsymbol{x} < 0\)}}
			edge from parent node[right] {\(\bar{\boldsymbol{w}}_{1}^{T}\boldsymbol{x} < 0\)}};
	\end{tikzpicture}
\end{center}Each hyperplane $h_{i}$ partitions $\mathbb{R}^{D}$ into two regions:
the positive region\\
 $h_{i}^{+}=\left\{ x\mid x\in\mathbb{R}^{D},\boldsymbol{w}_{i}^{T}\bar{x}\geq0\right\} $
and the negative region $h_{i}^{-}=\left\{ x\in\mathbb{R}^{D},\boldsymbol{w}_{i}^{T}\bar{x}<0\right\} $,
where $\bar{x}=\left(x,1\right)\in\mathbb{R}^{D+1}$ denotes the data
points in \emph{homogeneous coordinates}. For brevity, we omit the
labels on the branch arrows $\boldsymbol{w}^{T}x<0$ and $\boldsymbol{w}^{T}x\geq0$,
assuming by default that the left branch corresponds to the ``less
than'' case and the right branch to the ``greater than or equal to''
case.

In ML, the most commonly used decision rules are defined by axis-parallel
hyperplanes, denoted by $\mathcal{H}^{0}$. In our discussion we will
generalize the classical use of $\mathcal{H}^{0}$ to \emph{$M$-degree
	polynomial hypersurfaces} $\mathcal{H}^{M}$ and \emph{mixed splitting
	rules}\textbf{ }$\mathcal{H}^{m}$, for all $0\leq m\leq M$.

The set $\mathcal{H}^{m}$ consists of arbitrary polynomial hypersurfaces
defined as $h^{m}=\left\{ x\mid\boldsymbol{w}^{T}\tilde{\boldsymbol{x}}<0\right\} $,
where $\tilde{\boldsymbol{x}}=\rho_{m}\left(\bar{\boldsymbol{x}}\right)$
is the $m$-tuple embedding of $\bar{\boldsymbol{x}}$ (The embedding
will be defined formally in Subsection \ref{subsec: combinatorial geometry}).
This definition holds for all $0\leq m\leq M$, implying that a decision
tree can \emph{simultaneously incorporate} axis-parallel hyperplane
$\mathcal{H}^{0}$, general hyperplane $\mathcal{H}$, and hypersurfaces
$\mathcal{H}^{M}$.

\subsubsection*{Ancestry relation matrix}

\begin{figure}[h]
	\begin{centering}
		\includegraphics[viewport=200bp 250bp 1080bp 580bp,clip,scale=0.35]{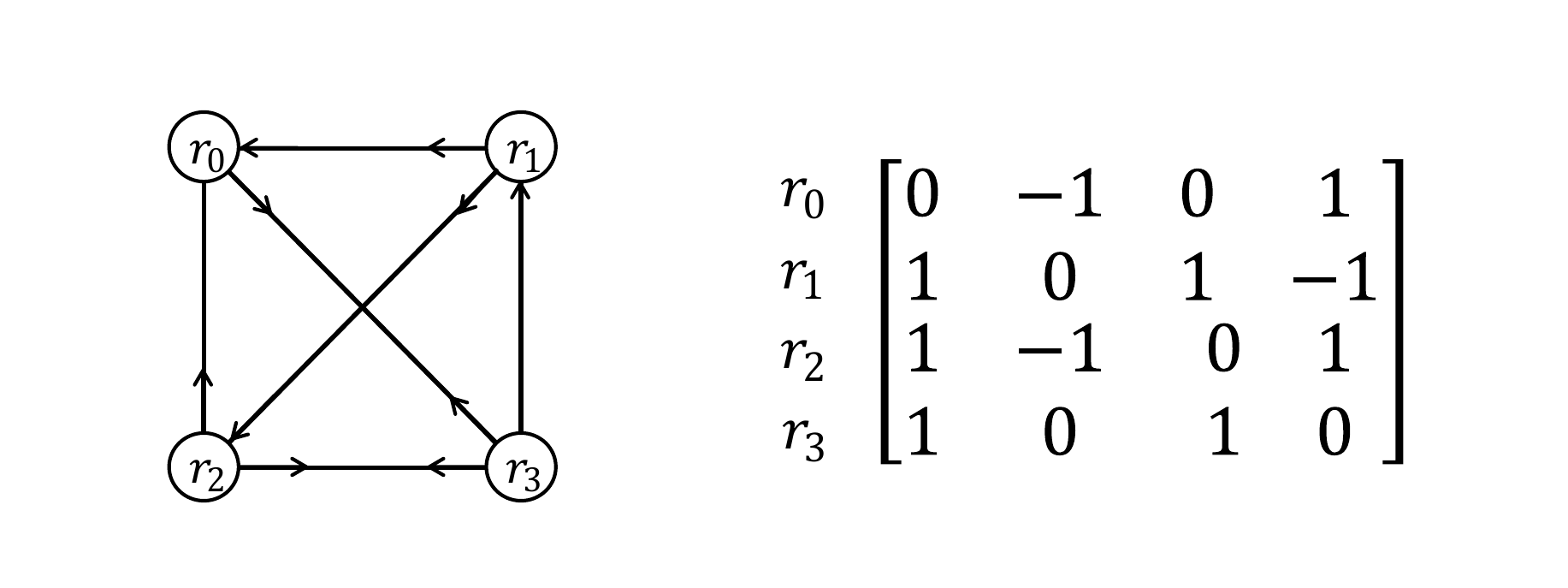}
		\par\end{centering}
	\caption{The \emph{ancestry} \emph{relation} \emph{graph} (left) captures all
		ancestry relations between four splitting rules $\left[r_{0},r_{1},r_{2},r_{3}\right]$.
		In this graph, nodes represent rules, and arrows represent ancestral
		relations. An incoming arrow from $r_{j}$ to a node $r_{i}$ indicates
		that $r_{j}$ is the right-child of $r_{i}$. The absence of an arrow
		indicates no ancestral relation. An outgoing arrows from $r_{i}$
		to a node $r_{j}$ indicates that $r_{j}$ is the left-child of $r_{i}$.
		The ancestral relation matrix (right) $\boldsymbol{K}$, where the
		elements $\boldsymbol{K}_{ij}=1$, $\boldsymbol{K}_{ij}=-1$, and
		$\boldsymbol{K}_{ij}=0$ indicate that $r_{j}$ lies on the positive
		side, negative side of $r_{i}$, or that there is no ancestry relation
		between them, respectively. \label{fig:Equivalent-representation-relation}}
\end{figure}

A key observation is that arrows $\searrow$ and $\swarrow$ are\emph{
	binary relations}. In particular, we enforce arrows $\searrow$ and
$\swarrow$ must be read from left to right, because $r_{i}\swarrow r_{j}$
and $r_{i}\searrow r_{j}$ do not imply $r_{j}\searrow r_{i}$ and
$r_{j}\swarrow r_{i}$. In other words, $\searrow$ and $\swarrow$
are \textbf{not} \emph{commutative }relations. Binary relations can
be characterized as \emph{Boolean} \emph{matrices}. However, to encode
two binary relations,$\searrow$ and $\swarrow$ in one matrix, the
values $1$ and $-1$ are used to distinguish them. We define the
ancestry relation matrix as follows.
\begin{definition}
	\emph{Ancestry relation matrix}. Given a list of $K$ rules
	$\mathit{rs}=\left[r_{1},r_{2},\ldots r_{K}\right]$, the ancestry
	relations between any pair of rules can be characterized as a $K\times K$
	square matrix $\boldsymbol{K}$, with elements defined as follows:
\end{definition}
\begin{itemize}
	\item $\boldsymbol{K}_{ij}=1$ if $r_{i}\swarrow r_{j}$ (i.e.,$r_{j}$
	is in the left subtree of $r_{i}$),
	\item $\boldsymbol{K}_{ij}=-1$ if $r_{i}\searrow r_{j}$ (i.e.,$r_{j}$
	is in the right subtree of $r_{i}$),
	\item $\boldsymbol{K}_{ij}=0$ if $r_{i}\overline{\left(\swarrow\vee\searrow\right)}r_{j}$,
	where $\overline{R}$ represent the complement relation of $R$. According
	to De Morgan's law $r_{i}\overline{\left(\swarrow\vee\searrow\right)}r_{j}=\left(r_{i}\overline{\swarrow}r_{j}\right)\wedge\left(r_{i}\overline{\searrow}r_{j}\right)$.
	In other words, $\boldsymbol{K}_{ij}=0$ if and only if $h_{j}$ is
	not a branch node in both the left and right subtree of $h_{i}$ and
	$\boldsymbol{K}_{ij}\neq0$, $i\neq j$, if $r_{i}$ is the ancestor
	of $r_{j}$.
\end{itemize}
As we will see shortly, abstracting the ancestry relation using a
matrix provides a compact and elegant representation for comparing
different splitting rules and serves as a key component in constructing
efficient algorithms for solving the size-constrained ODT problem.

\subsection{Four definitions for optimal decision tree problems\label{subsec:Four-distinct-definitions of ODT}}

After clearly defining what we mean by a ``decision tree,'' we now
turn to constructing an unambiguous and intellectually manageable
definition for optimal decision tree problem. In combinatorial optimization,
the brute-force algorithm (also known as the exhaustive enumeration
paradigm) is widely recognized as one of the most transparent approaches
to solving combinatorial optimization problems. It offers a simple
yet provably correct formalism: systematically enumerate all candidate
solutions in the search space and select the optimal one.

In this section, we first introduce a general formalism for programmatically
specifying brute-force algorithms and explain its significance. We
then demonstrate how the ODT problem can be specified unambiguously
using brute-force algorithms.

\subsubsection{Defining combinatorial optimization problem formally through brute-force
	enumeration}

In the theory of transformational programming (also known as constructive
algorithmics) \citep{bird1996algebra}, combinatorial optimization
problems such as (\ref{eq: specification in literature}) is solved
using the following, generic, \emph{generate-filter-select} paradigm,

\begin{equation}
	s^{*}=\mathit{min}_{E}\left(\mathit{filter}_{p}\left(\mathit{gen}\left(\mathit{xs}\right)\right)\right)\label{eq: combinatorial definition}
\end{equation}
The \emph{combinatorial generator} $\mathit{gen}:\left[\mathcal{A}\right]\to\left[S\right]$
generates a \emph{list} of all solutions of type $S$ corresponding
to elements in the search space $\mathcal{S}\left(\mathit{xs}\right)$
(parameterized by $\mathit{xs}$), with respect to a data list $\mathit{xs}:\left[\mathcal{A}\right]$.
Infeasible solutions are then filtered out using the function $\mathit{filter}:\left(S\to\mathit{Bool}\right)\times\left[S\right]\to\left[S\right]$
by retaining only those solutions that satisfy a predicate $p:S\to\mathit{Bool}$.
Finally the function $\mathit{min}_{E}$ select the optimal solution
with respect to objective $E$. In other words, (\ref{eq: combinatorial definition})
is essentially a brute-force program.

Astute readers may observe that (\ref{eq: combinatorial definition})
differs from (\ref{eq: specification in literature}) only in \textbf{form};
they are not fundamentally different. In other words, specification
(\ref{eq: combinatorial definition}) remains ambiguous, since we
have not yet defined $\mathit{min}_{E}$, $\mathit{filter}_{p}$,
and $\mathit{gen}$. The definitions of $\mathit{min}_{E}$ and $\mathit{filter}_{p}$
are relatively straightforward. We can define $\mathit{min}$ recursively
with respect to an objective function $E$ as

\begin{equation}
	\begin{aligned}\mathit{min}_{E} & \left(\left[a\right]\right)=a\\
		\mathit{min}_{E} & \left(a:\mathit{as}\right)=\mathit{smaller}_{E}\left(a,\mathit{min}_{E}\left(\mathit{as}\right)\right),
	\end{aligned}
\end{equation}
where $\mathit{smaller}_{E_{\text{0-1}}}\left(a,b\right)=a$ if $\ensuremath{E_{\text{0-1}}\left(a\right)\leq E_{\text{0-1}}\left(b\right)}$
and $b$ otherwise. Similarly the $\mathit{filter}_{p}$ is defined
recursively as

\begin{equation}
	\begin{aligned}\mathit{filter}_{p} & \left(\left[\:\right]\right)=\left[\:\right]\\
		\mathit{filter}_{p} & \left(a:\mathit{as}\right)=\begin{cases}
			a:\mathit{filter}_{p}\left(\mathit{as}\right) & p\left(a\right)=\mathit{True}\\
			\mathit{filter}_{p}\left(\mathit{as}\right) & p\left(a\right)=\mathit{False}
		\end{cases},
	\end{aligned}
\end{equation}
The definition of $\mathit{gen}$ will be the main focus of the remainder
of this subsection. We provide four distinct ways of defining $\mathit{gen}$
each yielding a corresponding definition of the optimal decision tree
problem. Before presenting the definitions of $\mathit{gen}$ in the
context of ODT problems, we first consider the following three questions:
\begin{quote}
	Why is (\ref{eq: combinatorial definition}) important for the study
	of optimal algorithms? How does (\ref{eq: combinatorial definition})
	differ from MIP specifications or the ambiguous definitions used in
	BnB methods? Most importantly, how can we construct an efficient program
	that guarantees solutions aligned with Equation (\ref{eq: combinatorial definition})?
\end{quote}

\paragraph{Importance of brute-force specifications in deriving efficient solutions}

The primary challenge in combinatorial optimization is to construct
efficient, optimal solutions that satisfy the constrains of the original
problem definition, such as that given in (\ref{eq: combinatorial definition}).
A MIP specification in standard form can directly yield efficient
solutions when using a general-purpose solver. However, a brute-force
specification, as defined in (\ref{eq: combinatorial definition})
may initially seem computationally infeasible due to the intractable
combinatorics of many problems. The deceptive simplicity of the brute-force
algorithm can make it seem irrelevant to the construction of efficient
combinatorial algorithms, such as dynamic programming, branch-and-bound
(BnB), greedy methods, or divide-and-conquer (D\&C). However, the
relationship between brute-force specifications and efficient combinatorial
optimization algorithms is far more nuanced than this deceptive appearance
suggests.

Within the constructive programming community \citep{bird1996algebra,de1994categories,meertens1986algorithmics},
it has long been recognized that brute-force algorithms can serve
as a foundation for deriving efficient optimization algorithms through
\emph{constructive proofs}. \emph{A constructive proof is a form of
	mathematical reasoning in which one explicitly demonstrates how to
	construct the object or solution being claimed to exist. }In other
words, it not only establishes existence but also provides a computational
procedure, typically derived through equations between programs. In
computer science, this process, known as \emph{program calculus},
involves establishing equivalences between programs. The \emph{Bird-Meertens
	formalism} \citep{bird1987introduction,meertens1986algorithmics,bird1996algebra}
is a notable framework that systematically derives programs from their
specifications through equational reasoning.

Using this formalism, the derived program inherently satisfies the
properties of the original specification, ensuring both computational
efficiency and formal correctness. Thus, for combinatorial optimization
problems, a rigorous and systematic approach begins with a provably
correct brute-force specification, from which efficient solutions
are derived constructively. This method ensures that the resulting
algorithms are both computationally efficient and formally verified,
serving as the central formalism for algorithm derivation in this
paper.

A natural question arises: Can efficient solutions (efficient combinatorial
optimization algorithms) always be derived from brute-force specifications,
and how comprehensive is this derivation paradigm across combinatorial
methods? \citet{bird1996algebra} formalized classical combinatorial
methods, such as dynamic programming, greedy algorithms, and D\&C,
using a generic recursive program (hylomorphism) and derived efficient
solutions as theorems. Building on this, \citet{he2025ROF} extended
the framework to include BnB by incorporating \emph{search strategies}
into Bird's theorems, further broadening the applicability of this
approach. In practice, one only needs to verify the conditions of
these theorems, after which program fusion follows automatically.

Furthermore, \citet{he2025ROF} established the following inclusion
relationships among different combinatorial optimization methods:

\begin{equation}
	\text{SDP}\subseteq\text{Greedy algorithm}\subseteq\text{BnB}\subseteq\text{General SDP}\subseteq\text{ DP},\text{Classical D\&C}\subseteq\text{\text{General D\&C}},\label{eq: inclusion relations for CO methods}
\end{equation}
where SDP stands for ``sequential decision process,'' a special form
of recursion characterized by sequential and deterministic recursive
steps. In simple, the inclusion relations in (\ref{eq: inclusion relations for CO methods})
shows that a general definition of D\&C algorithms encompasses all
classical combinatorial optimization methods. Both DP and classical
D\&C correspond to problems with overlapping and non-overlapping subproblems,
respectively, and both can be viewed more generally than the BnB method,
which is a sequential decision process combined with additional techniques,
such as thinning and search strategies. Greedy algorithms represent
the simplest instance, corresponding to the fused program of an SDP
generator with the selector $\mathit{min}$.

Importantly, the inclusion (\ref{eq: inclusion relations for CO methods})
is \emph{not} based on intuitive classification but on a rigorous
formal abstraction of each method using abstract recursion, and these
inclusion relations naturally emerge from the abstract level of these
recursions. See Section II.2.9 of \citet{he2025ROF} for precise definitions
of these terms and explanations.

In order to construct an efficient combinatorial optimization algorithm
using constructive approach, two key aspects must be considered in
the design of the generator: (i) the efficiency of the generator itself,
and (ii) its feasibility for program fusion, i.e., the compatibility
between the structures of $\mathit{gen}$ and the objective function
$E$.

A carefully designed generator whose structure aligns with both the
algorithmic process and the underlying hardware/software (e.g., enabling
vectorization) often yields more efficient solutions than algorithms
with better theoretical complexity but poor hardware compatibility.
Therefore, when designing $\mathit{gen}$, hardware compatibility
and parallelizability should be considered from the outset, rather
than designing an ad-hoc sequential algorithm and attempting to optimize
and parallelize it post-hoc. Moreover, structural compatibility between
the objective function $E$ and the generator $\mathit{gen}$ enables
program fusion for $\mathit{min}_{E}$ and $\mathit{gen}$. Such fusion
produces programs that are not only more succinct but also more efficient,
as infeasible or suboptimal configurations can be pruned before being
extended into complete solutions.

\paragraph{Comparison of brute-force and mixed integer programming Specifications}

Compared to a MIP specification, such as that presented in \citet{bertsimas2017optimal},
a brute-force specification defined programmatically (rather than
through pseudo-code) can be equally or more rigorous. Programmatic
definitions often provide greater clarity in computational contexts
because the minimization function in MIP specifications may lack precise
definition due to its reliance on set theory. Set theory, while mathematically
robust, is less suitable for computer science applications, as algorithms
require deterministic procedures that account for element ordering,
which set theory typically disregards.

In contrast to BnB algorithms, where the search space is often ambiguously
described using symbolic notation, a brute-force specification can
define the search space $\mathcal{S}\left(\mathit{xs}\right)$ clearly
using a combinatorial generator $\mathit{gen}\left(\mathit{xs}\right)$.
This allows researchers to verify whether the search space aligns
with the problem's requirements. If the search space $\mathcal{S}\left(\mathit{xs}\right)$
for an ODT problem is not explicitly defined, or if an algorithm is
proposed without a clear problem formulation, verifying its correctness
becomes impossible. In the following subsection, we present three
distinct definitions for $\mathcal{S}_{\text{size}}\left(K,\mathit{rs}\right)$
(size-constrained ODT) and one definition for $\mathcal{S}_{\text{depth}}\left(d,\mathit{xs}\right)$
(depth-constrained ODT). These definitions serve as the foundation
for future research on ODT problems. They not only resolve ambiguities
of problem definitions in previous studies that relied on the BnB
method but also enable researchers to approach ODT problems from different
perspectives.

\subsubsection{Defining optimal decision tree problem through brute-force specifications}

Given a list of rule $\mathit{rs}:\left[\mathcal{R}\right]$, and
the size constraint $K:\mathbb{N}$, if we can define a combinatorial
generator $\mathit{genDTSs}\left(K,\mathit{rs}\right)$ (short for
``generate decision trees with size constraints'') that exhaustively
enumerates all possible decision trees with exactly $K$ leaves with
respect to $\mathit{rs}$, we can define the size-constrained decision
tree problem as
\begin{align}
	\mathit{odk}_{\text{size}}\left(K\right) & :\left[\mathcal{R}\right]\to\mathit{DTree}\left(\mathcal{R},\mathcal{D}\right)\label{eq: specification of odt_size}\\
	\mathit{odk}_{\text{size}}\left(K\right) & =\mathit{min}_{E}\circ\mathit{genDTSs}\left(K\right)
\end{align}
Since there are three possible definitions for $\mathit{odk}_{\text{size}}$,
we replace its subscript with the name of the construction method.
For example, $\mathit{odk}_{\text{x}}$ denotes the ``x method for
optimal decision tree algorithm with size constraints''. Similarly,
the depth-constrained tree is defined as

\begin{align}
	\mathit{odk}_{\text{depth}}\left(K\right) & :\mathcal{D}\to\mathit{DTree}\left(\mathcal{R},\mathcal{D}\right)\label{eq: specification of odt_depth}\\
	\mathit{odk}_{\text{depth}}\left(K\right) & =\mathit{min}_{E}\circ\mathit{genDTDs}\left(K\right)
\end{align}
where $\mathit{genDTSDs}:\mathbb{N}\times\mathcal{D}\to\left[\mathit{DTree}\left(\mathcal{R},\mathcal{D}\right)\right]$
(short for ``generate decision trees with depth constraints'') exhaustively
enumerates all possible decision trees of a given depth. We note that
$\mathit{odk}_{\text{depth}}$ and $\mathit{odk}_{\text{size}}$ take
different types of input for the sake of program elegance. The input
$\mathit{rs}:\left[\mathcal{R}\right]$ of $\mathit{odk}_{\text{size}}$
can be generated from data, which will be discussed in Subsection
\ref{subsec:Splitting-rule-generator}.

For a decision tree satisfying Axiom \ref{def:Axioms-for Proper-DT},
\citet{he2025odt} demonstrated the following important theorem, which
reveals a deep connection between the combinatorics of \emph{decision
	trees} and \emph{permutations}.
\begin{theorem}
	\emph{A decision tree consisting of $K$ splitting rules corresponds
		to a unique $K$-permutation if and only if it is proper. In other
		words, there exists an injective mapping from proper decision trees
		to valid $K$-permutation (i.e., $K$-permutations that satisfies
		the proper decision tree axiom).}
\end{theorem}
\begin{proof}
	See \citet{he2025odt} for more details.
\end{proof}
As a consequence, the following lemma is immediate.
\begin{lemma}
	\emph{The search space of decision trees with $K$ splitting rule
		$\mathcal{S}_{\text{size}}\left(K,\mathit{rs}\right)$ with respect
		to a list of rules $rs$ is included in $\mathcal{S}_{\text{kperms}}\left(K,\mathit{rs}\right)$,
		the space of $K$-permutations with respect to $rs$. In other
		words, we have}
		\begin{equation}
			\mathcal{S}_{\text{size}}\left(K,\mathit{rs}\right)\subseteq\mathcal{S}_{\text{kperms}}\left(K,\mathit{rs}\right)
		\end{equation}	
\end{lemma}
Moreover, it can be shown that a decision tree attains maximal combinatorial
complexity, i.e., $\mathcal{S}_{\text{DTree}}\left(K,\mathit{rs}\right)=\mathcal{S}_{\text{kperms}}\left(K,\mathit{rs}\right)$,
when any rule can serve as the root and each branch node has exactly
one child \citet{he2025odt}.

Since permutations are among the most extensively studied combinatorial
structures, this characterization provides a fundamental basis for
analyzing the combinatorial and algorithmic properties of decision
trees. \citet{he2025CGs} present an elegant and efficient $K$-permutation
generator expressed in a divide-and-conquer form. For simplicity,
we denote this generator as $\mathit{kperms}\left(K,rs\right)$.

As a result, we can formally define the search space $\mathcal{S}_{\text{size}}\left(K,\mathit{rs}\right)$
using a combinatorial generator $\mathit{genDTSs}_{\text{kperms}}$
(short for ``generate size-constrained decision trees based on $K$-permutation.''),
defined as

\begin{equation}
	\begin{aligned}\mathit{genDTSs}_{\text{kperms}}\left(K\right) & :\left[\mathcal{R}\right]\to\left[\left[\mathcal{R}\right]\right]\\
		\mathit{genDTSs}_{\text{kperms}}\left(K\right) & =\mathit{filter}_{p}\circ\mathit{kperms}_{K}.
	\end{aligned}
	\label{eq: specification of genDTKs based on kperms}
\end{equation}
The program first generates all possible $K$-permutations using $\mathit{kperms}_{K}$,
and then filters out, via $\mathit{filter}_{p}$, those permutations
that cannot be used to construct proper decision trees. This two-step
process ensures that only \emph{valid permutations}—i.e., those $K$-permutations
of splitting rules satisfying the decision tree axiom \ref{subsec:Four-distinct-definitions of ODT},
and meeting the structural and combinatorial requirements of proper
decision trees—are retained for further computations.

Therefore, the following specification for the optimal decision tree
problem follows immediately:
\begin{align*}
	\mathit{odk}_{\text{size}}\left(K\right) & :\left[\mathcal{R}\right]\to\left[\mathcal{R}\right]\\
	\mathit{odk}_{\text{size}}\left(K\right) & =\mathit{min}_{E_{\text{0-1}}}\circ\mathit{genDTSs}_{\text{kperms}}\left(K\right)
\end{align*}
where $\mathit{odk}_{\text{kperms}}\left(K,\mathit{rs}\right)$ select
the optimal $K$-permutation from all feasible $K$-permutations generated
by $\mathit{genDTKs}_{\text{kperms}}\left(K,\mathit{rs}\right)$.

\paragraph{Definition of the generator based on tree datatype for size-constrained
	tree}

The definition based on $K$-permutations remains unsatisfactory.
The number of proper decision trees for a given set of rules is typically
much smaller than the total number of permutations, and determining
whether a permutation is feasible is often non-trivial. In the hyperplane
decision tree problem, for example, the feasibility test requires
$O\left(K^{2}D^{2}\right)$ operations in the worst-case, where $D$
is the dimension of the data. As predicted by \citet{he2025odt},
if each hyperplane classifies a data point into the positive or negative
class with equal probability (i.e., 1/2 for each class), the probability
of a decision tree achieving maximal combinatorial complexity is only
$O\left(\left(\frac{1}{2}\right)^{D\times K^{2}}\right)$.

Nevertheless, the following observation suggests a more efficient
approach. Consider the search space $\mathcal{S}_{\text{size}}\left(K,\mathit{rs}\right)$
for $\left|\mathit{rs}\right|\geq K$. The following equivalence holds

\begin{equation}
	\mathcal{S}_{\text{size}}\left(K,\mathit{rs}\right)=\bigcup_{\mathit{rs}_{K}\in\mathcal{S}_{\text{kcombs}}\left(K,\mathit{rs}\right)}\mathcal{S}_{\text{size}}\left(K,\mathit{rs}_{K}\right),\label{eq: size_k equivalence}
\end{equation}
where $\mathit{rs}_{K}=\left[r_{1},r_{2},\ldots r_{K}\right]$ denote
a list of $K$ rules. By assuming that $\mathcal{S}_{\text{size}}\left(K,\mathit{rs}_{K}\right)$
can be generated from $\mathit{genDTs}\left(\mathit{rs}_{K}\right)$
(short for ``generate decision trees with exactly $K$ splitting rules'',
where the parameter $K$ is implicit, since $\left|\mathit{rs}_{K}\right|=K$)
where $\mathit{genDTs}\left(\mathit{rs}_{K}\right)=\mathit{genDTSs}_{\text{kperms}}\left(K,\mathit{rs}_{K}\right)$
is one possible definition, we can define \ref{eq: size_k equivalence}
programmatically

\begin{equation}
	\mathit{genDTSs}\left(K\right)=\mathit{concatMapL}_{\mathit{genDTs}}\circ\mathit{kcombs}\left(K\right),\label{eq: new-specification of genDTKs-origin}
\end{equation}
Interestingly, \citet{he2025odt} showed that $\mathcal{S}_{\text{size}}\left(K,\mathit{rs}_{K}\right)$
can be generated more efficiently by using a function $\mathit{genDTs}_{\text{rec}}$
(short for ``recursive method for generating decision tree with exactly
$K$ splitting rules'') than with $\mathit{genDTSs}_{\text{kperms}}\left(K,\mathit{rs}_{K}\right)$.
The recursive generator $\mathit{genDTs}_{\text{rec}}:\left[\mathcal{R}\right]\times\mathcal{D}\to\left[\mathit{DTree}\left(\mathcal{R},\mathcal{D}\right)\right]$
is based on the tree datatype $\mathit{DTree}\left(\mathcal{R},\mathcal{D}\right)$,
which is defined as

\begin{equation}
	\begin{aligned}\mathit{genDTs}_{\text{rec}} & :\mathcal{D}\times\left[\mathcal{R}\right]\to\left[\mathit{DTree}\left(\mathcal{R},\mathcal{D}\right)\right]\\
		\mathit{genDTs}_{\text{rec}} & \left(\mathit{xs},\left[\;\right]\right)=\left[\mathit{DL}\left(\mathit{xs}\right)\right]\\
		\mathit{genDTs}_{\text{rec}} & \left(\mathit{xs},\left[r\right]\right)=\left[DN\left(\mathit{DL}\left(\mathit{xs}^{+}\right),r,\mathit{DL}\left(\mathit{xs}^{-}\right)\right)\right]\\
		\mathit{genDTs}_{\text{rec}} & \left(\mathit{xs},\mathit{rs}\right)=\bigg[\mathit{DN}\left(\mathit{mapD}{}_{\cap_{\mathit{xs}^{+}}}\left(u\right),r_{i},\mathit{mapD}{}_{\cap_{\mathit{xs}^{-}}}\left(v\right)\right)\mid\\
		& \begin{aligned} & \quad\quad\quad\left(\mathit{rs}^{+},r_{i},\mathit{rs}^{-}\right)\leftarrow\mathit{splits}\left(\mathit{rs}\right),u\leftarrow\mathit{genDTs}\left(\mathit{xs},\mathit{rs}^{+}\right),v\leftarrow\mathit{genDTs}\left(\mathit{xs},\mathit{rs}^{-}\right)\bigg].\end{aligned}
	\end{aligned}
\end{equation}
where $splits\left(\mathit{rs}\right)=\left[\left(\mathit{rs}^{+},r_{i},\mathit{rs}^{-}\right)\mid r_{i}\leftarrow rs,\mathit{all}\left(r_{i},\mathit{rs}\right)=\mathit{True}\right]$
($\mathit{all}\left(r_{i},rs\right)$ returns true if all rules $r_{j}$
in $rs$ satisfy $K_{ij}\neq0$ for $i\neq j$, and false otherwise),
$\mathit{rs}^{\pm}=\left[r_{j}\mid r_{j}\leftarrow rs,\boldsymbol{K}_{ij}=\pm1\right]$,
$\mathit{xs}\cap\mathit{ys}=\left[x\mid x\leftarrow\mathit{xs},x\in\mathit{ys}\right]$
(all elements that both in list $\mathit{xs}$ and $\mathit{ys}$),
and $\mathit{xs}^{\pm}=\mathit{xs}\cap r^{\pm}=\left[x\mid x\leftarrow\mathit{xs},x\in r^{\pm}\right]$
($\mathit{xs}^{+}$ collect all data points lies in the space defined
by $r^{+}$) . Assume that the ancestry relation matrix $\boldsymbol{K}$
can be pre-stored in memory (as shown for hypersurface splitting rules
in Part II). Under this assumption, the $\mathit{splits}$ function
requires only $O\left(K\right)$ operations.

The $\mathit{genDTs}$ generator function recursively constructs larger
proper decision trees $N\left(u,r_{i},v\right)$ from smaller proper
decision trees $\mathit{genDTs}\left(\mathit{rs}^{+}\right)$ and
$\mathit{genDTs}\left(\mathit{rs}^{-}\right)$, with the $\mathit{splits}$
function ensuring that only feasible splitting rules can become subtree
roots during recursion. At each step, $\mathit{genDTs}$ accumulates
information by creating a root $r_{i}$ for every proper decision
subtree generated by $\mathit{genDTs}\left(rs^{+},xs\right)$, using
the $\mathit{mapD}{}_{\cap_{r_{i}^{+}}}$ function. The $\mathit{genDTs}$
function will generate $K!$ number of tree in the worst-case, the
worst-case is achieved is precisely when the tree has a ``chain-like''
structure (every nodes of the tree has at most one child), in this
case the proper decision tree established a one-to-one correspondence
to the $K$-permutation.

This above observation leads to the following program for generating
$\mathcal{S}_{\text{size}}\left(K,\mathit{rs}\right)$

\begin{equation}
	\mathit{genDTSs}_{\text{rec}}\left(K,\mathit{rs}\right)=\mathit{concatMapL}_{\mathit{genDTs}_{\text{rec}}\left(\mathit{xs}\right)}\left(\mathit{kcombs}\left(K,\mathit{rs}\right)\right),\label{eq: new-specification of genDTKs}
\end{equation}
such that $\left|\mathit{rs}\right|\geq K$, where $\mathit{genDTSs}_{\text{rec}}\left(K\right)$
are short for ``recursive method for generating decision trees of
$K$ splitting rules.''

After formalizing the search space $\mathcal{S}_{\text{size}}\left(K,\mathit{rs}\right)$
by defining it as a concrete program (\ref{eq: new-specification of genDTKs}),
now we can formally define what we mean by optimal decision problem
by the following brute-force algorithm

\begin{equation}
	\begin{aligned}\mathit{odt}_{\text{rec}}\left(K\right) & :\left[\mathcal{R}\right]\to\mathit{DTree}\left(\mathcal{R},\mathcal{D}\right)\\
		\mathit{odt}_{\text{rec}}\left(K\right) & =\mathit{min}_{E_{\text{0-1}}}\circ\mathit{genDTSs}_{K}.\\
		& =\mathit{min}_{E_{\text{0-1}}}\circ\mathit{concatMapL}{}_{\mathit{genDTs}_{\text{rec}}\left(\mathit{xs}\right)}\circ\mathit{kcombs}\left(K\right).
	\end{aligned}
	\label{specification based on dtree}
\end{equation}
Alternatively, if the definition $\mathit{genDTs}$ is defined by
$\mathit{genDTSs}_{\text{kperms}}\left(K,\mathit{rs}_{K}\right)$,
we have 
\begin{align*}
	\mathit{odt}_{\text{kperms}}\left(K\right) & =\mathit{min}_{E_{\text{0-1}}}\circ\mathit{concatMapL}{}_{\mathit{genDTSs}_{\text{kperms}}\left(K\right)}\circ\mathit{kcombs}_{K}\\
	& =\mathit{min}_{E_{\text{0-1}}}\circ\mathit{concatMapL}{}_{\mathit{filter}_{p}\circ\mathit{kperms}_{K}}\circ\mathit{kcombs}_{K}.
\end{align*}

\paragraph{A sequential definition based on tree datatype for size-constrained
	tree}

For a size-constrained decision tree, the number of leaves is fixed,
so we know that there are at most $K$ recursive steps to construct
a full decision tree with $K$-fixed splitting rules. This provides
us an iterative method of defining the decision tree generator

\begin{equation}
	\begin{aligned}\mathit{genDTs}_{\text{vec}} & :\mathcal{D}\times\left[\mathcal{R}\right]\to\left[\mathit{DTree}\left(\mathcal{R},\mathcal{D}\right)\right]\\
		\mathit{genDTs}_{\text{vec}} & \left(\mathit{xs},\left[\;\right]\right)=\left[\mathit{DL}\left(\mathit{xs}\right)\right]\\
		\mathit{genDTs}_{\text{vec}} & \left(\mathit{xs},\mathit{rs}\right)=\mathit{concat}\circ\left[\mathit{updates}\left(r,\mathit{rs}^{\prime},xs\right)|\left(r,\mathit{rs}^{\prime}\right)\longleftarrow\mathit{candidates}\left(\mathit{rs}\right)\right]
	\end{aligned}
\end{equation}
where $\mathit{candidates}\left(\mathit{rs}\right)=\left[\left(r,\mathit{rs}\backslash r\right)\mid r\leftarrow\mathit{rs}\right]$
generates all possible rules in $\mathit{rs}$ and $\mathit{rs}\backslash r$
denotes eliminating $r$ from list $\mathit{rs}$, and $\mathit{updates}$
is defined as
\begin{equation}
	\begin{aligned}\mathit{updates} & :\mathcal{R}\times\left[\mathcal{R}\right]\to\left[\mathit{DTree}\left(\mathcal{R},\mathcal{D}\right)\right]\\
		\mathit{updates} & \left(r,\mathit{rs}\backslash r,\mathit{xs}\right)=\mathit{catMaybes}\left[\mathit{update}\left(r,t\right)\mid t\leftarrow\mathit{genDTs}_{\text{vec}}\left(\mathit{rs}\backslash r,\mathit{xs}\right)\right]
	\end{aligned}
\end{equation}
which call $\mathit{genDTs}_{\text{vec}}$ recursively, and append
a new rule $r$ to every tree $t$ generated by $\mathit{genDTs}_{\text{vec}}\left(\mathit{rs}\backslash r,xs\right)$
by using $\mathit{update}$ function

\begin{equation}
	\begin{aligned}\mathit{update} & :\mathcal{R}\to\mathit{DTree}\left(\mathcal{R},\mathcal{D}\right)\to\mathit{Maybe}\left(\mathit{DTree}\left(\mathcal{R},\mathcal{D}\right)\right)\\
		\mathit{update} & \left(r,\mathit{DL}\left(\mathit{xs}\right)\right)=\mathit{Just}\left(DN\left(\mathit{DL}\left(xs^{+}\right),r,\mathit{DL}\left(xs^{-}\right)\right)\right)\\
		\mathit{update} & \left(r,\mathit{DN}\left(u,s,v\right)\right)=\begin{cases}
			\begin{cases}
				\mathit{DN}\left(\mathit{update}\left(r,u\right),s,v\right) & \mathit{update}\left(r,u\right)\neq\mathit{Nothing}\\
				\mathit{Nothing} & \text{otherwise}
			\end{cases} & \boldsymbol{K}_{sr}=1\\
			\begin{cases}
				\mathit{DN}\left(u,s,\mathit{update}\left(r,v\right)\right) & \mathit{update}\left(r,v\right)\neq\mathit{Nothing}\\
				\mathit{Nothing} & \text{otherwise}
			\end{cases} & \boldsymbol{K}_{sr}=-1\\
			\mathit{Nothing} & \text{otherwise}
		\end{cases}.
	\end{aligned}
\end{equation}
where $\mathit{Maybe}\left(\mathcal{A}\right)=\mathit{Nothing}\mid\mathit{Just}\left(\mathcal{A}\right)$
represents a value that may or may not be present and $\texttt{\ensuremath{\mathit{catMaybes}}}:\left[\mathit{Maybe}\left(\mathcal{A}\right)\right]\to\left[\mathcal{A}\right]$
filters out $\mathit{Nothing}$ from a list of $\mathit{Maybe}$ values
and extracts the contents of$\mathit{Just}$ values into a plain list.

Although the definition of e $\mathit{update}$ update appears complex,
it is applied recursively along the path of the old tree $\mathit{DN}\left(u,s,v\right)$
to determine whether adding a new splitting rule $r$ would result
in a non-proper tree. If the tree remains proper, $r$ is added to
the leaf; otherwise, $\mathit{Nothing}$ is returned, indicating a
non-proper decision tree.

Because the computation in $\mathit{genDTs}_{\text{vec}}$ is sequential,
it can be fully vectorized. The generator $\mathit{genDTs}_{\text{vec}}$
(short for ``vectorized method generating decision trees with $K$
splitting rules.'') benefits particularly from this property, as it
allows full utilization of modern hardware, such as GPUs. We discuss
this advantage in detail in Subsection \ref{subsec:Comparison-of-four algs}.

One can check $\mathit{genDTs}_{\text{vec}}$ will generate the same
\emph{set} of proper decision trees as $\mathit{genDTs}_{\text{rec}}$,
but will produce different \emph{lists} of trees. This is because
\emph{different valid $K$-permutation can correspondent to the same
	tree (vice is not true)}. For instance, the tree

\begin{center}
	\begin{tikzpicture}
		% Root node r1
		\node[draw, circle] (r1) at (0, 0) {$r_1$};
		% Left subtree node r2
		\node[draw, circle] (r2) at (-1.5, -1.5) {$r_2$};
		\node[draw, circle] (r3) at (1.5, -1.5) {$r_3$};
		% Draw an edge between r1 and r2
		\draw (r1) -- (r2);
		\draw (r1) -- (r3);
	\end{tikzpicture}
\end{center} can be constructed by sequentially traversing list $\left[r_{1},r_{2},r_{3}\right]$
or $\left[r_{1},r_{3},r_{2}\right]$. To remove duplicate trees, we
need an additional $\mathit{unique}:\left[\mathit{DTree}\left(\mathcal{R},\mathcal{D}\right)\right]\to\left[\mathit{DTree}\left(\mathcal{R},\mathcal{D}\right)\right]$
operation which removes all duplicate trees in a list. This gives
us

\begin{equation}
	\begin{aligned}\mathit{genDTs}_{\text{vec}} & :\mathcal{D}\times\left[\mathcal{R}\right]\to\left[\mathit{DTree}\left(\mathcal{R},\mathcal{D}\right)\right]\\
		\mathit{genDTs}_{\text{vec}} & \left(\mathit{xs},\left[\;\right]\right)=\left[\mathit{DL}\left(\mathit{xs}\right)\right]\\
		\mathit{genDTs}_{\text{vec}} & \left(\mathit{xs},\mathit{rs}\right)=\mathit{unique}\circ\mathit{concat}\circ\left[\mathit{updates}\left(r,\mathit{rs}^{\prime},xs\right)|\left(r,\mathit{rs}^{\prime}\right)\longleftarrow\mathit{candidates}\left(\mathit{rs}\right)\right]
	\end{aligned}
\end{equation}
Now this new $\mathit{genDTs}_{\text{vec}}$ will produce the same
trees as $\mathit{genDTs}_{\text{rec}}$, albeit in different ordering.

Finally, a sequential definition (a brute-force program) for the size-constrained
optimal decision tree problem is defined as
\begin{equation}
	\mathit{odt}_{\text{vec}}\left(K\right)=\mathit{min}_{E_{\text{0-1}}}\circ\mathit{concatMapL}{}_{\mathit{genDTs}_{\text{vec}}\left(\mathit{xs}\right)}\circ\mathit{kcombs}_{K}.\label{eq: specfication based on vectorzied}
\end{equation}

\paragraph{Definition based on tree datatype for depth-constrained tree}

Depth-constrained decision trees are often used in the study of ODT
problems with smaller combinatorial complexity, such as axis-parallel
decision trees \citep{mazumder2022quant,brita2025optimal} or decision
trees over binary feature data \citet{demirovic2022murtree,verwer2019learning,nijssen2007mining,aglin2020learning,nijssen2010optimal}.
However, none of these studies formally define their problem or explain
why their algorithm correctly solves the desired problem. In this
subsection, we provide a formal characterization of decision trees
defined in this manner.

Moreover, while depth-constrained trees are widely used in previous
studies on axis-parallel and binary-data decision trees, they are
unsuitable for ODT problems with more complex splitting rules, such
as hyperplanes or hypersurfaces. We explain this limitation from an
algorithmic perspective in Subsection \ref{subsec:Comparision-between-leave-depth}.

A depth-constrained tree generator $\mathit{genDTDs}_{\text{depth}}$
receives a pre-specified tree depth $d:\mathbb{N}$ and a list of
data $\mathit{xs}$ as input, and is defined as

\begin{equation}
	\begin{aligned}\mathit{genDTDs}_{\text{depth}} & :\mathbb{N}\times\mathcal{D}\to\left[\mathit{DTree}\left(\mathcal{R},\mathcal{D}\right)\right]\\
		\mathit{genDTDs}_{\text{depth}} & \left(0,\mathit{xs}\right)=\left[\mathit{DL}\left(\mathit{xs}\right)\right]\\
		\mathit{genDTDs}_{\text{depth}} & \left(1,\mathit{xs}\right)=\left[DN\left(\mathit{DL}\left(\mathit{xs}^{+}\right),r,\mathit{DL}\left(\mathit{xs}^{-}\right)\right)\mid r\leftarrow\mathit{gen_{splits}}\left(\mathit{xs}\right)\right]\\
		\mathit{genDTDs}_{\text{depth}} & \left(d,\mathit{xs}\right)=\big[\mathit{DN}\left(u,r,v\right)\mid\\
		 u\leftarrow&\mathit{genDTDs}_{\text{depth}}\left(\mathit{xs}^{+},d-1\right),v\leftarrow\mathit{genDTDs}_{\text{depth}}\left(\mathit{xs}^{-},d-1\right),r\leftarrow\mathit{gen_{splits}}\left(\mathit{xs}\right)\big]
	\end{aligned}
	\label{spec: ODT_depth}
\end{equation}
This definition states that all possible decision trees of depth $d$
with respect to data list $\mathit{xs}$ can be constructed from decision
trees of depth $d-1$ on the subsets $\mathit{xs}^{+}$ and $\mathit{xs}^{-}$.
Currently, a concrete definition for $\mathit{gen_{splits}}$ is not
provided, as the splitting rule $r$ defining the decision tree problem
has not yet been formalized. For the classical axis-parallel ODT problem,
$\mathit{gen_{splits}}$ is straightforward: splitting rules can be
generated from axis-parallel hyperplanes based on the data points.
Equivalently, \citet{brita2025optimal,mazumder2022quant} use the
medians of adjacent pairs of data points for each dimension $D$.
A more detailed discussion of the various definitions of $\mathit{gen_{splits}}$
for solving different ML problems is provided in Subsection \ref{subsec:Splitting-rule-generator}.

For a data list of size $N$, there are $N\times D$ possible splitting
rules. Each rule $r$ corresponds to an integer value defined by the
$d$-th feature of the data points $x\in\mathbb{R}^{D}$, with $r^{+}=\left\{ x\mid x_{d}>r\right\} $
and $r^{-}=\left\{ x\mid x_{d}\leq r\right\} $. We will define $\mathit{gen_{splits}}$
for hypersurface decision trees in Subsection \ref{subsec:Splitting-rule-generator},
after discussing the geometry of hypersurface splitting rules.

Therefore, the optimal decision tree problem with depth constraint
$d:\mathbb{N}$ is defined as
\begin{equation}
	\begin{aligned}\mathit{odt}_{\text{depth}} & :\mathbb{N}\times\mathcal{D}\to\mathit{DTree}\left(\mathcal{R},\mathcal{D}\right)\\
		\mathit{odt}_{\text{depth}} & \left(d\right)=\mathit{min}{}_{E}\circ\mathit{genDTDs}_{\text{depth}}\left(d\right)
	\end{aligned}
	\label{specification based on depth constrain}
\end{equation}

Some readers may find the recursion \ref{spec: ODT_depth} closely
related to the algorithm given by \citet{brita2025optimal}, which
has similar recursive structure. We will see shortly that \citet{brita2025optimal}'s
algorithm can indeed be derived from the specification (\ref{specification based on depth constrain}),
which proves why \citet{brita2025optimal}'s algorithm is optimal.

\subsection{Four solutions for optimal decision tree problems\label{subsec:Four solutions to ODT}}

As mentioned earlier, the efficiency of an algorithm for solving a
problem is largely determined by the design of its generator, which
implicitly defines the problem. In this section, we show that the
four definitions introduced in \ref{subsec:Four-distinct-definitions of ODT}
each lead to an efficient algorithm for solving the ODT problem.

\subsubsection{Optimal decision tree algorithms for size-constrained tree}

\paragraph{Simplified decision tree problem}

The ODT problem defined in (\ref{specification based on dtree}) is
inherently difficult to solve directly. Fortunately, \citet{he2025odt}
demonstrated that the original ODT problem can be reduced to solving
a simplified optimal decision tree problem, without loss of generality.
This reduction not only makes the resulting algorithm significantly
more efficient but also provides greater flexibility for designing
parallel algorithms.
\begin{theorem}
        Simplified decision tree problem.\emph{ Given a list of rules $\mathit{rs}:\left[\mathcal{R}\right]$
		and a size constraint $K:\mathbb{N}$, we have following equivalence
		\begin{equation}
			\mathit{min}_{E}\circ\mathit{concatMapL}{}_{\mathit{sodt}\left(\mathit{xs}\right)}\circ\mathit{kcombs}_{K}=\mathit{min}_{E}\circ\mathit{concatMapL}{}_{genDTs\left(\mathit{xs}\right)}\circ\mathit{kcombs}_{K}\label{eq: simplified ODT problem}
		\end{equation}
		where $\mathit{sodt}\left(\mathit{xs}\right)=\mathit{min}_{E_{\text{0-1}}}\circ\mathit{genDTs}\left(\mathit{xs}\right)$.}
\end{theorem}
\begin{proof}
	The result is a simple consequence of distributivity 
	\[
	\begin{aligned} & \mathit{min}{}_{E}\circ\mathit{mapL}{}_{genDTs\left(\mathit{xs}\right)}\circ\mathit{kcombs}{}_{K}\\
		\equiv & \text{ distributivity \ensuremath{\mathit{min}_{E}\circ\mathit{concat}=\mathit{min}_{E}\circ\mathit{mapL}_{\mathit{min}_{E}}}}\\
		& \mathit{min}{}_{E}\circ\mathit{mapL}{}_{\mathit{min}_{E}\circ genDTs\left(\mathit{xs}\right)}\circ\mathit{kcombs}{}_{K}\\
		\equiv & \text{ define \ensuremath{\mathit{sodt}\left(\mathit{xs}\right)=\mathit{min}_{E}\circ genDTs\left(\mathit{xs}\right)}}\\
		& \mathit{min}_{E}\circ\mathit{mapL}_{\mathit{sodt}\left(\mathit{xs}\right)}\circ\mathit{kcombs}{}_{K}.
	\end{aligned}
	\]
\end{proof}
The function $\mathit{sodt}\left(\mathit{xs},\mathit{rs}_{K}\right)$
returns the optimal decision tree with respect to a \textbf{specific}
$K$-combination of splitting rules $\mathit{rs}_{K}$ over the data
list $\mathit{xs}$. By applying $\mathit{sodt}\left(\mathit{xs},\mathit{rs}_{K}\right)$
to \textbf{each} $K$-combination of rules $rs_{K}\in\mathit{kcombs}{}_{K}\left(\mathit{rs}\right)$
and then selecting the globally optimal one, we obtain the optimal
solution to the ODT problem—the left-hand side of (\ref{eq: simplified ODT problem}).
This solution is equivalent to that obtained by $\mathit{odt}\left(K\right)$
(right-hand side of (\ref{eq: simplified ODT problem})).

The left-hand formulation in (\ref{eq: simplified ODT problem}) is
computationally more efficient than the right-hand formulation because,
for each $K$-combination of $rs$, only a \textbf{single} optimal
decision tree is produced by $\mathit{sodt}$. Consequently, the $\mathit{min}{}_{E}$
function on the left-hand side of (\ref{eq: simplified ODT problem})
needs to select the optimal decision tree only from the set generated
by $\mathit{mapL}_{\mathit{sodt}\left(\mathit{xs}\right)}\circ\mathit{kcombs}{}_{K}$,
which is significantly smaller than the set of all possible decision
trees of size $K$.

Obviously, the three size-constrained generators—based on $K$-permutations,
based on recursive tree structure, and sequential process, as discussed
previously—yield three distinct definitions for $\mathit{sodt}$.
For instance, the $K$-permutation generator can be factorized into
the following two programs:
\begin{equation}
	\mathit{kperms}_{K}=\mathit{concatMap}_{\mathit{perms}}\circ\mathit{kcombs}_{K}\label{eq: k-perms factorization}
\end{equation}
where $\mathit{perms}:\left[\mathcal{A}\right]\to\left[\left[\mathcal{A}\right]\right]$
receives a list and returns all its permutations. In other words,
the set of all possible $K$-permutations is equivalent to the set
obtained by first generating all $K$-combinations and then computing
the permutations of each combination.

In particular, $\mathit{perms}$ can be defined recursively as
\begin{equation}
	\begin{aligned}\mathit{perms} & \left(\left[\:\right]\right)=\left[\left[\:\right]\right]\\
		\mathit{perms} & \left(r:\mathit{rs}\right)=\mathit{ins}\left(r,\mathit{perms}\left(\mathit{rs}\right)\right)
	\end{aligned}
\end{equation}
where function $\mathit{ins}$ insert $r$ into all possible positions
for each partial permutations in $\mathit{perms}\left(\mathit{rs}\right)$
( see subsection II.1.2.4 in \citet{he2025ROF} for details). Comparing
(\ref{eq: k-perms factorization}) with (\ref{eq: simplified ODT problem}),
we can define $\mathit{sodt}$ using $\mathit{perms}$ through the
following equivalence

\begin{equation}
	\begin{aligned}\mathit{sodt}_{\text{kperms}} & :\left[\mathcal{R}\right]\to\left[\mathcal{R}\right]\\
		\mathit{sodt}_{\text{kperms}} & =\mathit{min}_{E}\circ\mathit{genDTs}_{\text{kperms}}\\
		& =\mathit{min}_{E}\circ\mathit{filter}_{p}\circ\mathit{kperms}_{K}\\
		& =\mathit{min}_{E}\circ\mathit{filter}_{p}\circ\mathit{perms}
	\end{aligned}
	\label{eq:sodt-kperms}
\end{equation}
Similarly, by substituting $\mathit{genDTs}_{\text{rec}}$ and $\mathit{genDTs}_{\text{vec}}$
into the definition of \emph{$\mathit{sodt}$}, we obtain the corresponding
recursive and vectorized formulations of \emph{$\mathit{sodt}$}:
\begin{align}
	\mathit{sodt}_{\text{rec}}\left(\mathit{xs}\right) & =\mathit{min}_{E}\circ\mathit{genDTs}_{\text{rec}}\left(\mathit{xs}\right)\label{eq: sodt-rec}\\
	\mathit{sodt}_{\text{vec}}\left(\mathit{xs}\right) & =\mathit{min}_{E}\circ\mathit{genDTs}_{\text{vec}}\left(\mathit{xs}\right)\label{eq: sodt-vec}
\end{align}

\paragraph{Structure of objective, monotonicity, and dynamic programming}

Once we have a formal definition, it is natural to ask whether an
efficient solution—such as dynamic programming (DP) or a greedy algorithm—can
be derived for solving $\mathit{sodt}$. It has long been recognized
that the existence of a DP or greedy solution relies on identifying\emph{
	monotonicity} \citep{bird1996algebra}. Specifically, when the objective
function $E$ has an algorithmic structure compatible with that of
$\mathit{gen}$, monotonicity can be exploited to enable program fusion.
In such cases, the $\mathit{min}_{E}$ function applied within $\mathit{genDTs}$
yields an efficient recursive definition for the original specification.
The resulting solution is often referred to as a \emph{dynamic programming}
or \emph{greedy} algorithm.

To illustrate this, we show that when the objective function is defined
according to a given general scheme, two DP algorithms can be derived
from specifications (\ref{eq: sodt-rec}) and (\ref{spec: ODT_depth}).
Moreover, we can not only prove the existence of DP algorithms in
this case, but also demonstrate the non-existence of a DP formulation
for (\ref{eq:sodt-kperms}) and (\ref{eq: sodt-vec}).

We consider that the decision tree problem conforms to the following
general scheme:
\begin{equation}
	\begin{aligned}E & :\left(\mathcal{D}\to\mathbb{R}\right)\times\left(\mathbb{R}\times\mathbb{R}\to\mathbb{R}\right)\times\mathit{DTree}\left(\mathcal{R},\mathcal{D}\right)\to\mathbb{R}\\
		E & \left(f,g,\mathit{DL}\left(\mathit{xs}\right)\right)=f\left(\mathit{xs}\right)\\
		E & \left(f,g,\mathit{DN}\left(u,r,v\right)\right)=g\left(E\left(\mathit{mapD}{}_{\cap_{r^{+}}}\left(u\right)\right),E\left(\mathit{mapD}{}_{\cap_{r^{-}}}\left(v\right)\right)\right).
	\end{aligned}
	\label{eq: objective function value scheme}
\end{equation}
such that $g\left(a,b\right)\geq\max\left(a,b\right)$. For example,
the classical objective function used in classification tasks—central
to the discussion in Part II of this paper—is the \emph{0-1 loss objective},
which can be defined as
\begin{equation}
	\begin{aligned}E_{\text{0-1}} & :\mathit{DTree}\left(\mathcal{R},\mathcal{D}\right)\to\mathbb{N}\\
		E_{\text{0-1}} & \left(\mathit{DL}\left(\mathit{xs}\right)\right)=\sum_{\left(x_{i},y_{i}\right)\in\mathit{xs}}\mathbf{1}\left[\hat{y}\neq y_{i}\right]\\
		E_{\text{0-1}} & \left(\mathit{DN}\left(u,r,v\right)\right)=E_{\text{0-1}}\left(\mathit{mapD}{}_{\cap_{r^{+}}}\left(u\right)\right)+E_{\text{0-1}}\left(\mathit{mapD}{}_{\cap_{r^{-}}}\left(v\right)\right),
	\end{aligned}
\end{equation}
where $\hat{y}=\underset{k\in\mathcal{K}}{\text{argmax}}\sum_{\left(x_{i},y_{i}\right)\in\mathit{xs}}\mathbf{1}\left[y_{i}=k\right]$,
which represents the majority class in a leaf. The following lemma
trivially holds.
\begin{lemma}
	Monotonicity in the decision tree problem\emph{. Given left subtrees
		$u$ and $u^{\prime}$ and right subtrees $v$ and $v^{\prime}$ rooted
		at $r$, the implication }
\begin{equation}
	\begin{aligned}
		E\left(\mathit{mapD}_{\cap_{r^{+}}}(u)\right) \leq &  E\left(\mathit{mapD}_{\cap_{r^{\prime+}}}(u^{\prime})\right) \wedge E\left(\mathit{mapD}_{\cap_{r^{-}}}(v)\right) \leq E\left(\mathit{mapD}_{\cap_{r^{\prime-}}}(v^{\prime})\right) \\
		&\quad \Longrightarrow E\left(DN(u,r,v)\right) \leq E\left(DN(u^{\prime},r^{\prime},v^{\prime})\right)
	\end{aligned}
\end{equation}
	\emph{only holds, in general, if $r=r^{\prime}$.\label{Monotonicity}}
\end{lemma}

\paragraph{Dynamic programming algorithm for size-constrained decision tree}

Astute readers may have noticed that the recursive structure of $E$
aligns with that of $\mathit{genDTs}_{\text{rec}}$. We now show that
this alignment implies a dynamic programming (DP) solution. By leveraging
the monotonicity property \ref{Monotonicity}, we can fuse the minimizer
$\mathit{min}_{E}$ with $\mathit{genDTs}_{\text{rec}}$ in the simplified
decision tree problem defined in \ref{eq: sodt-rec}. This yields
a DP solution $\mathit{sodt}_{\text{rec}}$ for the problem $\mathit{min}_{E}\circ\mathit{genDTs}_{\text{rec}}$
\ref{eq: sodt-rec}. Thus providing an efficient algorithm for solving
the optimal decision tree problem by substituting the DP solution
$\mathit{sodt}_{\text{rec}}$ into $\mathit{odt}_{\text{rec}}$.
\begin{theorem}
	Dynamic programming algorithm for size-constrained decision tree.\emph{
		\label{thm: DP for leaf tree} Given a list of $K$ rules $\mathit{rs}_{K}:\left[\mathcal{R}\right]$.
		If the objective function $E$ can be defined according to (\ref{eq: objective function value scheme})—that
		is, if Lemma Lemma \ref{Monotonicity} holds—then the solution obtained
		by}
	\begin{equation}
		\begin{aligned}\mathit{sodt}_{\text{rec}} & :\mathcal{D}\times\left[\mathcal{R}\right]\to\mathit{DTree}\left(\mathcal{R},\mathcal{D}\right)\\
			\mathit{sodt}_{\text{rec}} & \left(\mathit{xs},\left[\;\right]\right)=\left[\mathit{DL}\left(\mathit{xs}\right)\right]\\
			\mathit{sodt}_{\text{rec}} & \left(\mathit{xs},\left[r\right]\right)=\left[\mathit{DN}\left(\mathit{DL}\left(xs^{+}\right),r,\mathit{DL}\left(xs^{-}\right)\right)\right]\\
			\mathit{sodt}_{\text{rec}} & \left(\mathit{xs},\mathit{rs}\right)=\mathit{min}{}_{E}\left[\mathit{DN}\left(\mathit{sodt}_{\text{rec}}\left(\mathit{xs}^{+},\mathit{rs}^{+}\right),r,\mathit{sodt}_{\text{rec}}\left(\mathit{xs}^{-},\mathit{rs}^{-}\right)\right)\mid\left(\mathit{rs}^{+},r,\mathit{rs}^{-}\right)\leftarrow\mathit{splits}\left(\mathit{rs}\right)\right].
		\end{aligned}
		\label{eq: DP solution to sodt_rec}
	\end{equation}
	\emph{provides a solution to the brute-force specification (\ref{eq: sodt-rec}).
		In other words, we have following relation
		\begin{equation}
			\mathit{sodt}_{\text{rec}}\left(\mathit{xs}\right)\subseteq\mathit{min}_{E}\circ\mathit{genDTs}_{\text{rec}}\left(\mathit{xs}\right),
		\end{equation}
		where the symbol ``$\subseteq$'' means the solution on the left is
	}one of the solution\emph{ on the right.}
\end{theorem}
\begin{proof}
	Due to the monotonicity of the problem, we can now derive the program
	by following equational reasoning
	
	\begin{align*}
		& \mathit{min}_{E}\circ genDTs_{\text{rec}}\left(\mathit{xs}\right)\\
		\equiv & \text{ definition of \ensuremath{genDTs_{\text{rec}}}}\\
		& \begin{aligned} & \mathit{minR}{}_{E}\bigg[\mathit{DN}\left(\mathit{mapD}{}_{\cap_{\mathit{xs}^{+}}}\left(u\right),r,\mathit{mapD}{}_{\cap_{\mathit{xs}^{-}}}\left(v\right)\right)\mid\\
			& \quad\quad\left(\mathit{rs}^{+},r,\mathit{rs}^{-}\right)\leftarrow\mathit{splits}\left(\mathit{rs}\right),u\leftarrow\mathit{genDTs}_{\text{rec}}\left(\mathit{xs},\mathit{rs}^{+}\right),v\leftarrow\mathit{genDTs}_{\text{rec}}\left(\mathit{xs},\mathit{rs}^{-}\right)\bigg]
		\end{aligned}
		\\
		\subseteq & \text{ monotonicity}\\
		& \mathit{minR}_{E}\bigg[\mathit{DN}\bigg(\mathit{minR}_{E}\left[\mathit{mapD}{}_{\cap_{\mathit{xs}^{+}}}\left(u\right)\mid u\leftarrow\mathit{genDTs}_{\text{rec}}\left(\mathit{xs},\mathit{rs}^{+}\right)\right],r,\\
		& \quad\quad\mathit{minR}_{E}\left[\mathit{mapD}{}_{\cap_{\mathit{xs}^{-}}}\left(u\right)\mid u\leftarrow\mathit{genDTs}_{\text{rec}}\left(\mathit{xs},\mathit{rs}^{-}\right)\right]\bigg)\mid\left(\mathit{rs}^{+},r,\mathit{rs}^{-}\right)\leftarrow\mathit{splits}\left(rs\right)\bigg]\\
		\equiv & \text{ definition of \ensuremath{\mathit{mapL}}}\\
		& \begin{aligned} & \mathit{minR}_{E}\bigg[\mathit{DN}\bigg(\mathit{minR}_{E}\left(mapL_{\mathit{mapD}{}_{\cap_{\mathit{xs}^{+}}}}\left(\mathit{genDTs}_{\text{rec}}\left(\mathit{xs},\mathit{rs}^{+}\right)\right)\right),r,\\
			& \quad\quad\mathit{minR}_{E}\left(mapL_{\mathit{mapD}{}_{\cap_{\mathit{xs}^{-}}}}\left(\mathit{genDTs}_{\text{rec}}\left(\mathit{xs},\mathit{rs}^{-}\right)\right)\right)\bigg)\mid\left(\mathit{rs}^{+},r,\mathit{rs}^{-}\right)\leftarrow\mathit{splits}\left(\mathit{rs}\right)\bigg]
		\end{aligned}
		\\
		\equiv & \text{ definition of \ensuremath{\mathit{mapD}}}\\
		& \begin{aligned} & \mathit{minR}_{E}\bigg[\mathit{DN}\bigg(\mathit{minR}_{E}\left(\mathit{genDTs}_{\text{rec}}\left(\mathit{r}^{-}\cap\mathit{xs},\mathit{rs}^{+}\right)\right),r,\\
			& \quad\quad\mathit{minR}_{E}\left(\mathit{genDTs}_{\text{rec}}\left(\mathit{r}^{-}\cap\mathit{xs},\mathit{rs}^{-}\right)\right)\bigg)\mid\left(\mathit{rs}^{+},r_{i},\mathit{rs}^{-}\right)\leftarrow\mathit{splits}\left(\mathit{rs}\right)\bigg]
		\end{aligned}
		\\
		\equiv & \text{ definition of \ensuremath{\mathit{sodt}_{\text{rec}}}}\\
		& \mathit{minR}_{E}\bigg[\mathit{DN}\bigg(\mathit{sodt}_{\text{rec}}\left(\mathit{xs}^{+},\mathit{rs}^{+}\right),r,\mathit{sodt}_{\text{rec}}\left(\mathit{xs}^{-},\mathit{rs}^{-}\right)\bigg)\mid\left(\mathit{rs}^{+},r,\mathit{rs}^{-}\right)\leftarrow\mathit{splits}\left(\mathit{rs}\right)\bigg].
	\end{align*}
\end{proof}
Clearly, the solution produced by $\mathit{sodt}_{\text{rec}}$ is
more efficient than the brute-force algorithm $\mathit{min}_{E}\circ genDTs_{\text{rec}}$.
The key difference is that the brute-force approach first exhaustively
enumerates all possible solutions using $genDTs_{\text{rec}}$ and
then selects the best one, whereas $\mathit{sodt}_{\text{rec}}$ recursively
constructs the optimal solution $\mathit{sodt}_{\text{rec}}\left(\mathit{xs},\mathit{rs}\right)$
from the optimal sub-solutions $\mathit{sodt}_{\text{rec}}\left(\mathit{xs}^{+},\mathit{rs}^{+}\right)$
and $\mathit{sodt}_{\text{rec}}\left(\mathit{xs}^{-},\mathit{rs}^{-}\right)$.
\citet{he2025odt} showed that $\mathit{sodt}_{\text{rec}}$ has a
worst-case time complexity of $O\left(K!\times N\right)$, assuming
that $\mathit{splits}\left(\mathit{rs}\right)$ can be computed in
linear time with respect to the size of $\mathit{rs}$, and that $\mathit{xs}^{\pm}$
can be computed in $O\left(N\right)$ time for a data list $\mathit{xs}$
of size $N$.

Consequently, the optimal decision tree problem can be solved using
(\ref{eq: simplified ODT problem}), where the $\mathit{sodt}$ is
defined by using (\ref{eq: DP solution to sodt_rec}).

\subsubsection{Optimal decision tree algorithm for depth-constrained tree}

A similar reasoning applies to the ODT problem with depth constraints,
commonly used in axis-parallel and binary-feature decision trees.
\citet{brita2025optimal} present an algorithm for this problem;
however, instead of formally deriving it, they present it directly.
While their approach can be seen as a special case of the algorithm
derived here, a formal derivation is important for understanding the
underlying principles and for establishing a rigorous foundation.

The following theorem explains why the algorithm of \citet{brita2025optimal}
is correct and how it can be systematically derived from (\ref{specification based on depth constrain}).
\begin{theorem}
	Dynamic programming algorithm for depth-constrained decision tree.\emph{
		\label{thm: DP for depth tree}Given a depth constraint $d:\mathbb{N}$
		and a data list $\mathit{xs}$. If the objective function $E$ can
		be defined according to (\ref{eq: objective function value scheme}),
		i.e., if Lemma \ref{Monotonicity} holds, then the solution obtained
		by}
	
	\begin{equation}
		\begin{aligned}\mathit{odt}_{\text{depth}} & :\mathbb{N}\times\mathcal{D}\to\mathit{DTree}\left(\mathcal{R},\mathcal{D}\right)\\
			\mathit{odt}_{\text{depth}} & \left(0,\mathit{xs}\right)=\mathit{DL}\left(\mathit{xs}\right)\\
			\mathit{odt}_{\text{depth}} & \left(1,\mathit{xs}\right)=\mathit{min}_{E}\left[DN\left(\mathit{DL}\left(\mathit{xs}^{+}\right),r,\mathit{DL}\left(\mathit{xs}^{-}\right)\right)\mid r\leftarrow\mathit{gen_{splits}}\left(\mathit{xs}\right)\right]\\
			\mathit{odt}_{\text{depth}} & \left(d,\mathit{xs}\right)=\mathit{min}_{E}\left[\mathit{DN}\left(\mathit{odt}_{\text{depth}}\left(d-1,\mathit{xs}^{+}\right),r,\mathit{odt}_{\text{depth}}\left(d-1,\mathit{xs}^{-}\right)\right)\mid,r\leftarrow\mathit{gen_{splits}}\left(\mathit{xs}\right)\right]
		\end{aligned}
		\label{eq: DP solution to odt_Depth}
	\end{equation}
\end{theorem}
is a solution of $\mathit{min}_{E}\circ\mathit{genDTDs}_{\text{depth}}\left(d\right)$.
In other words, we have 
\[
\mathit{odt}_{\text{depth}}\subseteq\mathit{min}_{E}\circ\mathit{genDTDs}_{\text{depth}}\left(d\right).
\]

\begin{proof}
	The base cases ($d=1$ and $d=0$) follow directly from the definition
	of an optimal decision tree:
	
	For $d=0$, no classification is performed; all data points are assigned
	to a single leaf.
	
	For $d=1$, the optimal decision tree consists of a single splitting
	rule. Hence, it is equivalent to selecting the optimal splitting rule
	from $\mathit{gen_{splits}}\left(\mathit{xs}\right)$
	
	For the recursive case, we have:
	\begin{align*}
		\mathit{odt}_{\text{depth}}\left(d,\mathit{xs}\right)= & \mathit{min}_{E}\circ\mathit{genDTDs}_{\text{depth}}\left(d\right)\\
		= & \{\text{definition of \ensuremath{\mathit{genDTDs}_{\text{depth}}}}\}\\
		& \mathit{min}_{E}\big[\mathit{DN}\left(u,r,v\right)\mid u\leftarrow\mathit{genDTDs}_{\text{depth}}\left(d-1,\mathit{xs}^{+}\right),\\
		&\quad\quad v\leftarrow\mathit{genDTDs}_{\text{depth}}\left(d-1,\mathit{xs}^{-}\right),r\leftarrow\mathit{gen_{splits}}\left(\mathit{xs}\right)\big]\\
		\subseteq & \text{ \{ monotonicity (\ref{Monotonicity}) \}}\\
		& \big[\mathit{DN} \big( \mathit{min}_{E}\left(\mathit{genDTDs}_{\text{depth}}\left(d-1,\mathit{xs}^{+}\right)\right),\\
		& \quad \quad r,\mathit{min}_{E}\left(\mathit{genDTDs}_{\text{depth}}\left(d-1,\mathit{xs}^{-}\right)\right) \big) \mid r\leftarrow\mathit{gen_{splits}}\left(\mathit{xs}\right)\big]\\
		= & \text{ \{ definition of \ensuremath{\mathit{odt}_{\text{depth}}} \}}\\
		& \left[\mathit{DN}\left(\mathit{odt}_{\text{depth}}\left(d-1,\mathit{xs}^{+}\right),r,\mathit{odt}_{\text{depth}}\left(d-1,\mathit{xs}^{-}\right)\right)\mid r\leftarrow\mathit{gen_{splits}}\left(\mathit{xs}\right)\right]
	\end{align*}
	
	This proved that $\mathit{odt}_{\text{depth}}\left(d,\mathit{xs}\right)$
	can be computed recursively using optimal subtrees for $\mathit{xs}^{+}$
	and $\mathit{xs}^{-}$ of depth $d-1$.
\end{proof}
Interestingly, for AODT and ODT over binary-feature data, \citet{brita2025optimal}
and \citet{demirovic2022murtree} observed that $\mathit{odt}_{\text{depth}}\left(2,\mathit{xs}\right)$
can be solved more efficiently than the classical recursive method.
Expressing $\mathit{odt}_{\text{depth}}\left(2,\mathit{xs}\right)$
in a more compact form and substituting it into the recursion \ref{eq: DP solution to odt_Depth},
may help convey their algorithmic insights more clearly.

\subsubsection{Non-existence of monotonicity}

Another advantage of a rigorous formulation is that it allows us not
only to prove the existence of dynamic programming algorithms but
also to demonstrate their non-existence when the objective function
is restricted to the form. The following theorem formalizes the non-existence
of monotonicity for $\mathit{sodt}_{\text{kperms}}$ and $\mathit{sodt}_{\text{rec}}$.
\begin{theorem}
	Non-existence of monotonicity.\emph{ For objective function defined
		according to (\ref{eq: objective function value scheme}), the one-step
		recursive process in $\mathit{sodt}_{\text{kperms}}$ and $\mathit{sodt}_{\text{rec}}$
		does not have the monotonicity.}
\end{theorem}
\begin{proof}
	It suffices to show that the following implication does not hold:
	\[
	E\left(t\right)\leq E\left(t^{\prime}\right)\nRightarrow E\left(\mathit{update}\left(r,t\right)\right)\leq E\left(\mathit{update}\left(r,t^{\prime}\right)\right),
	\]
	where $\mathit{update}\left(r,t\right)$ denotes adding the rule $r$
	to the corresponding leaf of tree $t$. Intuitively, this means that
	even if $E\left(t\right)\leq E\left(t^{\prime}\right)$ the new tree
	obtained by adding $r$ to $t$ may create a new tree with higher
	error than $E\left(t^{\prime}\right)$.
	
	Similarly, for $\mathit{sodt}_{\text{kperms}}$, the following implication
	fails:
	\[
	E\left(\mathit{rs}\right)\leq E\left(\mathit{rs}^{\prime}\right)\nRightarrow E\left(\mathit{insR}\left(r,\mathit{rs}\right)\right)\leq E\left(\mathit{insR}\left(r,\mathit{rs}^{\prime}\right)\right)
	\]
	where $\mathit{insR}:\left[\mathcal{R}\right]\to\left[\mathcal{R}\right]$
	is the relational version of the $\mathit{ins}$ operation \citet{he2025ROF},
	which inserts $r$ to a random position in a existing permutation
	of rules $\mathit{rs}$. The implication fails because the newly inserted
	rule $r$ can become the root of the resulting tree, which can change
	the error non-monotonically.
	
	Therefore, dynamic programming or greedy algorithms do not exist for
	$\mathit{sodt}_{\text{kperms}}$ and $\mathit{sodt}_{\text{vec}}$,
	when the objective function is defined according to (\ref{eq: objective function value scheme}).
	However, this does not imply that DP or greedy solutions are impossible
	for objectives defined in other form. For instance, if the objective
	function is incrementally definable—such as the tree size—then a greedy
	solution exist.
\end{proof}

\subsubsection{The use of memoization or caching}

Given a DP solution for the optimal decision tree problem and the
shared subtrees across different trees, it is natural to consider
applying memoization, a well-established technique for improving DP
efficiency.

Caching provides a practical alternative when a bottom-up recursion
or iterative DP is either unavailable or too complex to implement
efficiently. This approach has been widely adopted in ODT algorithms
over binary feature data. For instance, \citet{aglin2021pydl8} employ
caches over \emph{itemsets} (collections of Boolean features), while
\citet{demirovic2022murtree} note a trade-off between \emph{branch}
and \emph{dataset caching}, as introduced by \citet{nijssen2007mining}.

The correctness of these caching techniques in ODT studies remains
questionable. The derivations of $\mathit{odt}_{\text{depth}}$ and
$\mathit{odt}_{\text{size}}$ depend on Lemma \ref{Monotonicity},
which holds only when the roots of two trees, $r$ and $r^{\prime}$
are equivalent. Therefore, correct use of memoization requires storing
both the root and its corresponding subtrees, because optimality of
subtrees $u$ and $v$ over subregions $r^{\pm}$ implies optimality
of the full tree $DN\left(u,r,v\right)$. Thus, no trade-off exists
between branch and dataset caching: both subtrees (depth- or size-constrained)
and datasets must be stored. In this context, the caching strategy
in \citet{brita2025optimal} appears correct, as their ConTree algorithm
reuses cached solutions defined both by sub-dataset and subtree depth.

However, \citet{he2025odt} predicted that memoization becomes ineffective
when the number of possible splitting rules is large. This is because
each rule $r$ generates a distinct sub-dataset, resulting in few
identical optimal subtrees for a given sub-dataset. Denote the set
of possible splitting rules with respect to data list $\mathit{xs}$
as $\mathcal{S}_{\mathit{\text{split}}}\left(\mathinner{xs}\right)$,
to enable full memoization for constructing an optimal decision tree
of size $K$, one would need to store at least:
\begin{equation}
	O\left(\left|\mathcal{S}_{\mathit{\text{split}}}\left(\mathinner{xs}\right)\right|\times\sum_{k=0}^{K-1}\left|\mathcal{S}_{\text{size}}\left(k,\mathinner{xs}\right)\right|\right)
\end{equation}
Similarly, for depth-constrained trees, the combinatorial complexity
becomes:
\begin{equation}
	O\left(\left|\mathcal{S}_{\mathit{\text{split}}}\left(\mathinner{xs}\right)\right|\times\sum_{i=0}^{d-1}\left|\mathcal{S}_{\text{depth}}\left(i,\mathinner{xs}\right)\right|\right)
\end{equation}
This results in formidable combinatorial complexity for most ODT problems.
For example, in the AODT problem, the number of possible rules is
$\mathcal{S}_{\mathcal{H}^{0}}=O\left(N\times D\right)$. For hyperplane
ODT problems, the number of possible splits grows to $\mathcal{S}_{\mathcal{H}}=O\left(N^{^{D}}\right)$
(see Section \ref{sec:Geometric-foundation}). Consequently, the caching
approach in \citet{brita2025optimal}, which stores only a few thousand
optimal sub-trees, has a very low probability of cache hits. Our tests
on multiple data lists confirm this: in extreme cases, their algorithm
examined millions of decision trees without achieving a single cache
hit. This suggests that the efficiency claimed for their ConTree algorithm
due to caching is overstated; most gains likely arise from factors
other than caching.

\subsection{Extension to decision tree problem over binary feature data (non-proper
	decision tree)\label{subsec:Extension-to-decision}}

\subsubsection{Definition of the search space}

A key advantage of adapting a general axiomatic framework for defining
decision trees is that it provides an unambiguous method for classifying
decision tree problems. The optimal decision tree problem over binary
feature data (ODT-BF) aims to construct a tree that encodes a mapping
over binary feature datasets, i.e., $t:\left\{ -1,1\right\} ^{D}\to L$,
where $L$ denotes the set of labels.

The ODT-BF problem is perhaps the most extensively studied case of
applying combinatorial methods \citet{hu2019optimal,lin2020generalized,zhang2023optimal,demirovic2022murtree,aglin2021pydl8,aglin2020learning,nijssen2007mining,nijssen2010optimal}.
The tree structure of ODT-BF is defined in the same way as proper
decision trees; in other words, ODT-BF also satisfies the structural
constraints of decision trees (Axioms 1–3) and is therefore a valid
decision tree problem. However, this problem has much simpler combinatorics,
and indeed \citet{he2025ROF} showed that its complexity is independent
of the input data size, and empirical results from \citet{hu2019optimal}
further indicate that, when the feature dimension is fixed, the algorithm
scales linearly with data size.
\paragraph{Characterization of splitting rules}

Unlike decision tree problems over continuous datasets, where splitting
rules can often be interpreted as geometric partitions—such as a hyperplane
dividing the space into two regions—ODT-BF cannot naturally be viewed
this way. Some researchers classify ODT-BF within the class of axis-parallel
hyperplanes, but this characterization is misleading, as it obscures
the geometric meaning of axis-parallel splits. Binary feature data
correspond to vertices of a hypercube, and splits cannot be defined
by midpoints between vertices, since the data are discrete rather
than continuous.

PA more natural interpretation is to view each splitting rule in ODT-BF
as a logical question. Specifically, given a data point $x\in\mathbb{R}^{D}$,
each splitting rule in ODT-BF problem corresponds to determining whether
the $i$-th feature of a data point $x\in\mathbb{R}^{D}$ is 0 or
1—equivalently, whether $x$ contains feature $i$?

An immediate consequence of this characterization is that the number
of splitting rules is independent of the dataset size, as a hypercube
in $\mathbb{R}^{D}$ has at most $2^{D}$ vertexes. Indeed, in the
ODT-BF problem, many data points are duplicates and occupy the same
vertices of the cube, and there are many bounding techniques are developed
based on this observation \citep{lin2020generalized,zhang2023optimal,hu2019optimal}.

\paragraph{Ancestral constraints and splitting function}

Having characterized the splitting rules in ODT-BF as simple logical
questions, we now analyze the ancestral constraints between pairs
of rules. Suppose a rule $r_{i}$ is an ancestor of another rule $r_{j}$.
The question of whether $x$ contains feature $i$ is independent
of whether it contains feature $j$. This means that fixing an ancestor
rule $r_{i}$ does not influence whether $r_{j}$ should go to the
left or right subtree of $r_{i}$. In other words, if $r_{i}$ is
the ancestor of $r_{j}$, both $r_{i}\swarrow r_{j}$ and $r_{i}\searrow r_{j}$
are valid. Consequently, Axiom 4 of the proper decision tree no longer
holds, and the ODT-BF problem cannot be solved using the $\mathit{odt}_{K}$
program.

As a result, ODT-BF satisfies only the structural constraints of decision
trees, with no ancestral constraints. It therefore reduces to an ordinary
labeled binary tree problem with no explicit constraints, yielding
a complexity of $K!\times\mathit{Catalan}\left(K!\right)$ for an
input of size $K$, since there are $\mathit{Catalan}\left(K!\right)$
possible tree shapes and $K!$ possible labelings.

Without ancestral constraints, the split function for ODT-BF is straightforward:
once a root $r$ is fixed, how should the two sublists $\mathit{rs}^{+}$
and $\mathit{rs}^{-}$ for its subtrees be determined from $\mathit{rs}/r$?
The answer is straightforward: \textbf{any subset of} $\mathit{rs}/r$
\textbf{is valid}! The following function produces all possible sublists
$\mathit{xs}^{\prime}$ along with their complementary subsets $\mathit{xs}/\mathit{xs}^{\prime}$
as a pair.
\[
\begin{aligned}\mathit{subsPair} & \left(\left[\:\right],\mathit{ys}\right)=\left[\left(\left[\:\right],\mathit{ys}\right)\right]\\
	\mathit{subsPair} & \left(x:\mathit{xs},\mathit{ys}\right)=\mathit{subsPair}\left(\mathit{xs},\mathit{ys}\right)\cup\mathit{mapL}\left(f,\mathit{subsPair}\left(\mathit{xs},\mathit{ys}\right)\right)
\end{aligned}
\]
where $f\left(a,\mathit{as},\mathit{bs}\right)=\left(a:\mathit{as},\mathit{bs}/a\right)$.

For instance, running $\mathit{subsPair}\left(\left[1,2,3\right],\left[1,2,3\right]\right)$
returns\\
 $\left[\left(\left[\:\right],\left[1,2,3\right]\right),\left(\left[3\right],\left[1,2\right]\right),\left(\left[2\right],\left[1,3\right]\right),\left(\left[2,3\right],\left[1\right]\right),\left(\left[1\right],\left[2,3\right]\right),\left(\left[1,3\right],\left[2\right]\right),\left(\left[1,2\right],\left[3\right]\right),\left(\left[1,2,3\right],\left[\:\right]\right)\right]$,
the $2^{3}$ possible sublists of $\left[1,2,3\right]$.
\begin{definition}
	The split function for the ODT-BF problem is defined as
	\begin{equation}
		\mathit{splits}_{\text{BF}}\left(\mathit{rs}\right)=\mathit{subsPair}\left(\mathit{rs},\mathit{rs}\right)
	\end{equation}
\end{definition}

\paragraph{Definition of the search space}

With the help of splitting function, \citet{he2025odt} proposed the
following generator to exhaustively generate the search space for
decision trees over binary feature data, as formalized below.
\begin{definition}
	The generator for the search space of the ODT-BF problem over the
	binary tree datatype is defined as
	\begin{equation}
		\begin{aligned}\mathit{genDTBFs} & :\mathcal{D}\times\left[\mathcal{R}\right]\to\left[\mathit{DTree}\left(\mathcal{R},\mathcal{D}\right)\right]\\
			\mathit{genDTBFs} & \left(\mathit{xs},\left[\:\right]\right)=\left[\:\right]\\
			\mathit{genDTBFs} & \left(\mathit{xs},\left[r\right]\right)=\left[\mathit{DN}\left(\mathit{DL}\left(xs^{+}\right),r,\mathit{DL}\left(xs^{-}\right)\right)\right]\\
			\mathit{genDTBFs} & \left(\mathit{xs},\mathit{rs}\right)=\big[\mathit{DN}\left(\mathit{mapD}{}_{\cap_{\mathit{xs}^{+}}}\left(u\right),r_{i},\mathit{mapD}{}_{\cap_{\mathit{xs}^{-}}}\left(v\right)\right)\mid r_{i}\leftarrow\mathit{rs},\\
			& \left(\mathit{rs}^{+},\mathit{rs}^{-}\right)\leftarrow\mathit{splits}_{\text{BF}}\left(\mathit{rs}/r_{i}\right),u\leftarrow\mathit{\mathit{genDTBFs}}\left(\mathit{rs}^{+}\right),v\leftarrow\mathit{\mathit{genDTBFs}}\left(\mathit{rs}^{-}\right)\big].
		\end{aligned}
		\label{eq: generator of ODT-BF}
	\end{equation}
\end{definition}
The combinatorial analysis in \citet{he2025odt} showed that $\mathit{genDTBFs}$
will generate $f_{\text{BF}}\left(K\right)=K!\times\mathit{Catalan}\left(K\right)$,
align with our interpretation of labeled binary tree's complexity.

Consequently the ODT-BF problem can be defined by following brute-force
program
\begin{equation}
	\mathit{odt}_{\text{BF}}\left(\mathit{xs}\right)=\mathit{min}_{E}\circ\mathit{genDTBFs}\left(\mathit{xs}\right)\label{eq: specification of odt-bf}
\end{equation}

\subsubsection{optimal decision tree algorithm over binary feature data}

As a result, following an exactly same derivation process as the Theorem
\ref{spec: ODT_depth} by changing $\mathit{splits}$ as $\mathit{splits}_{\text{BF}}$,
we can derive the following dynamic programming algorithm for solving
the \ref{eq: specification of odt-bf} problem.
\begin{theorem}
	Dynamic programming algorithm for size-constrained decision tree.\emph{
		\label{thm: DP for ODT-BF} Given a list of $K$ rules $\mathit{rs}_{K}:\left[\mathcal{R}\right]$.
		If the objective function $E$ can be defined according to (\ref{eq: objective function value scheme})—that
		is, if Lemma Lemma \ref{Monotonicity} holds—then the solution obtained
		by}
	\begin{equation}
		\begin{aligned}\mathit{odt}_{\text{BF}} & :\mathcal{D}\times\left[\mathcal{R}\right]\to\mathit{DTree}\left(\mathcal{R},\mathcal{D}\right)\\
			\mathit{odt}_{\text{BF}} & \left(\mathit{xs},\left[\;\right]\right)=\left[\mathit{DL}\left(\mathit{xs}\right)\right]\\
			\mathit{odt}_{\text{BF}} & \left(\mathit{xs},\left[r\right]\right)=\left[\mathit{DN}\left(\mathit{DL}\left(xs^{+}\right),r,\mathit{DL}\left(xs^{-}\right)\right)\right]\\
			\mathit{odt}_{\text{BF}} & \left(\mathit{xs},\mathit{rs}\right)=\mathit{min}{}_{E}\big[\mathit{DN}\left(\mathit{odt}_{\text{BF}}\left(\mathit{xs}^{+},\mathit{rs}^{+}\right),r,\mathit{odt}_{\text{BF}}\left(\mathit{xs}^{-},\mathit{rs}^{-}\right)\right)\mid\\
			& r \leftarrow \mathit{rs}, \left(\mathit{rs}^{+},r,\mathit{rs}^{-}\right)\leftarrow\mathit{splits}_{\text{BF}}\left(\mathit{rs}\right)\big].
		\end{aligned}
	\end{equation}
	\emph{provides a solution to the brute-force specification (\ref{eq: specification of odt-bf}).}
\end{theorem}

\subsection{Further acceleration—Thinning and filtering \label{subsec: thinning and filtering}}

\subsubsection{Filtering}

In machine learning, a common way to prevent overfitting is to require
each leaf node to contain at least $N_{\min}$ data points, avoiding
situations where a leaf has only a very small number of samples. A
straightforward way to enforce this constraint is to introduce a filtering
process:
\begin{align}
	\mathit{sodt}_{\text{filt}} & =\mathit{min}_{E}\circ\mathit{filter}_{p}\circ\mathit{genDTs}\label{eq: sodt_filt specification}\\
	\mathit{odt}_{\text{filtDep}} & =\mathit{min}_{E}\circ\mathit{filter}_{p}\circ\mathit{genDTDs}_{\text{depth}}\label{eq: odt_depth filt specification}
\end{align}
where $p:\mathit{DTree}\left(\mathcal{R},\mathcal{D}\right)\to\mathit{Bool}$
is a Boolean-valued predicate.

However, this direct approach is inefficient, as $\mathit{genDTs}$
and $\mathit{genDTDs}_{\text{depth}}$ generate a huge number of trees,
making post-generation filtering costly. When the predicate $p$ satisfies
certain conditions, filtering can be fused with tree generation, avoiding
unnecessary computation. Specifically, the following condition that
help identify how filtering can be fused into the generation process.
\begin{definition}
	\emph{Prefix-closed predicate}. A predicate $p:\mathit{DTree}\left(\mathcal{R},\mathcal{D}\right)\to\mathit{Bool}$
	is called\emph{ prefix-closed} if for any tree $\mathit{DN}\left(u,r,v\right)$,
	the following implication holds: 
	\begin{equation}
		p\left(\mathit{DN}\left(u,r,v\right)\right)\implies p\left(u\right)\wedge p\left(v\right).\label{eq: prefix-closed definition}
	\end{equation}
	For example, the predicate that checks whether a tree contains a leaf
	with at least $N_{\text{min}}$ data points is prefix-closed, because
	if a tree $\mathit{DN}\left(u,r,v\right)$ violates this condition,
	then its subtrees then $u$ and $v$ also violate it. Similarly, a
	tree-size predicate, which measures the number of splitting rules,
	is prefix-closed: $\mathit{DN}\left(u,r,v\right)$ has fewer than
	$K$ rules, then both $u$ and $v$ also have fewer than $K$ rules.
	The same holds for depth constraints: if a tree has depth less than
	$d$, its subtrees also satisfy this bound.
\end{definition}
Prefix-closed predicates provide an effective mechanism to fuse the
filtering operation $\mathit{filter}_{p}$ with the generator $\mathit{gen}$.
In particular, the following filter fusion theorem enables the integration
of depth- or size-constrained conditions into leaf- or size-constrained
ODT problems.
\begin{theorem}
	Filtering fusion through prefix-predicate. \emph{If a predicate $p$
		is prefix-closed, Then, the solution obtained using $\mathit{sodt}_{\text{filtRec}}$,
		defined as
		\begin{equation}
			\begin{aligned}\mathit{sodt}_{\text{filtRec}} & :\left[\mathcal{R}\right]\times\mathcal{D}\to\mathit{DTree}\left(\mathcal{R},\mathcal{D}\right)\\
				\mathit{sodt}_{\text{filtRec}} & \left(\left[\;\right],\mathit{xs}\right)=\left[\mathit{DL}\left(\mathit{xs}\right)\right]\\
				\mathit{sodt}_{\text{filtRec}} & \left(\left[r\right],\mathit{xs}\right)=\left[\mathit{DN}\left(\mathit{DL}\left(xs^{+}\right),r,\mathit{DL}\left(xs^{-}\right)\right)\right]\\
				\mathit{sodt}_{\text{filtRec}} & \left(\mathit{rs},\mathit{xs}\right)=\mathit{min}{}_{E}\big(\mathit{filter}_{p}\big[\mathit{DN}\left(\mathit{sodt}_{\text{filtRec}}\left(\mathit{rs}^{+},\mathit{xs}^{+}\right),r,\mathit{sodt}_{\text{filtRec}}\left(\mathit{rs}^{-},\mathit{xs}^{-}\right)\right)\mid\\
				& \quad \quad \left(\mathit{rs}^{+},r,\mathit{rs}^{-}\right)\leftarrow\mathit{splits}\left(\mathit{rs}\right)\big]\big).
			\end{aligned}
		\end{equation}
		provides a solution to $\mathit{min}_{E}\circ\mathit{filter}_{p}\circ\mathit{genDTs}_{\text{rec}}$.
		Similarly, the solution obtained using $\mathit{odt}_{\text{filtDep}}$,
		defined as}
	
	\emph{
		\begin{equation}
			\begin{aligned}\mathit{odt}_{\text{filtDep}} & :\mathbb{N}\times\mathcal{D}\to\mathit{DTree}\left(\mathcal{R},\mathcal{D}\right)\\
				\mathit{odt}_{\text{filtDep}} & \left(0,\mathit{xs}\right)=\mathit{DL}\left(\mathit{xs}\right)\\
				\mathit{odt}_{\text{filtDep}} & \left(1,\mathit{xs}\right)=\mathit{min}_{E}\left[DN\left(\mathit{DL}\left(\mathit{xs}^{+}\right),r,\mathit{DL}\left(\mathit{xs}^{-}\right)\right)\mid r\leftarrow\mathit{gen_{splits}}\left(\mathit{xs}\right)\right]\\
				\mathit{odt}_{\text{filtDep}} & \left(d,\mathit{xs}\right)=\mathit{min}_{E}\big(\mathit{filter}_{p}\big[\mathit{DN}\left(\mathit{odt}_{\text{filtDep}}\left(d-1,\mathit{xs}^{+}\right),r,\mathit{odt}_{\text{filtDep}}\left(d-1,\mathit{xs}^{-}\right)\right)\mid\\
				& \quad \quad r\leftarrow\mathit{gen_{splits}}\left(\mathit{xs}\right)\big]\big)
			\end{aligned}
			\label{eq: odt_Depth with filter}
		\end{equation}
		provides a solution to (\ref{eq: odt_depth filt specification}).\label{thm: filter fusion theorem}}
\end{theorem}
\begin{proof}
	The proof follows by substituting the fused generator and following
	the procedure of Theorems (\ref{thm: DP for leaf tree}) and (\ref{thm: DP for depth tree}).
	For more general results and formal proofs, see \citet{he2025ROF}.
\end{proof}
Theorem \ref{thm: filter fusion theorem} provides an efficient and
convenient method to incorporate any prefix-closed constraint into
the ODT algorithm. Any partial solution that does not satisfy the
prefix-closed constraint can be filtered out before extending it into
a complete tree, thus reducing unnecessary computation while ensuring
constraints are incorporated.

The derivation for $\mathit{sodt}_{\text{vec}}$ and $\mathit{sodt}_{\text{kperms}}$
similar to Theorem \ref{thm: filter fusion theorem}, but requires
a different prefix-closed predicate definition, as the recursive structure
of this two algorithms are sequential. See \citet{he2025ROF} for
more general results.

\subsubsection{Thinning algorithm (bounding technique)}

\paragraph{Thin introduction}

The \emph{thinning algorithm} is almost identical to the exploitation
of dominance relations in the algorithm design literature \citep{ibaraki1977power,bird1996algebra}.
Both thinning and classical dominance relations aim to improve the
time complexity of naive dynamic programming algorithms \citep{bird1996algebra}.
The key difference is that thinning considers not only the theoretical
effectiveness of bounds but also their practical implementation. This
aspect is often overlooked in studies of branch-and-bound algorithms,
where researchers focus on the existence alone of a bounding relation
but neglect its efficiency. As noted in subSection \ref{subsec: 2.1},
a bounding technique must be proven effective: the computational time
it saves must outweigh the time required to execute it. If a bounding
check is computationally expensive yet reduces the solution space
only marginally, its application becomes ineffective.

The thinning technique leverages the fundamental observation that
certain partial configurations are superior to others. Extending non-optimal
partial configurations is thus a waste of computational resources.
Formally, we define a function $\mathit{thin}:\left(\mathcal{A}\times\mathcal{A}\to\mathit{Bool}\right)\times\left[\mathcal{A}\right]\to\left[\mathcal{A}\right]$
which receives a relation $R:\mathcal{A}\times\mathcal{A}\to\mathit{Bool}$,
referred to as a dominance relation, and a sequence $\mathit{xs}$.
The output is a \emph{subsequence} $\mathit{ys}\subseteq\mathit{xs}$,
satisfying:

\begin{equation}
	\forall x\in\mathit{xs}:\exists y\in\mathit{ys}:R\left(y,x\right)\label{eq: universal property of thinning}
\end{equation}
In other words, for every element in the original sequence, there
exists a dominating element in the thinned subsequence, ensuring that
only potentially optimal configurations are retained for further computation.
Properties (1) and (2) are known as the \emph{universal property}
of the thinning algorithm \citep{bird1996algebra}.

Thinning is similar to, but distinct from, $\mathit{min}_{E}$. Indeed,
the $\mathit{min}_{E}$ function can be understood as a special case
of thinning with respect to a \emph{total order} defined by the objective
function $E$. In contrast, thinning is based on a \emph{preorder}
$R$, because \emph{transitivity} is required, and total orders are
too restrictive for the purposes of thinning \citep{he2025ROF,bird2020algorithm}.
In a preorder relation, some configurations may be incomparable. Consequently,
$\mathit{thin}_{R}:\left[\mathcal{A}\right]\to\left[\mathcal{A}\right]$
receives a list and returns a list, whereas $\mathit{min}_{E}:\left[\mathcal{A}\right]\to\mathcal{A}$
always returns a single element.

The thinning algorithm must satisfy the following laws:
\begin{align}
	\mathit{min}_{E} & =\mathit{min}_{E}\circ\mathit{thin}_{R_{2}}\label{eq:thin-intro}\\
	\mathit{mapL}_{f}\circ\mathit{thin}_{R_{1}} & \subseteq\mathit{thin}_{R_{2}}\circ\mathit{mapL}_{f}\label{eq: thin-map law}
\end{align}
such that $R_{1}\left(a,b\right)\implies R_{2}\left(a,b\right)$.
The first law (\ref{eq:thin-intro}) is called \emph{thin introduction}.
It states that applying thinning does not change the optimal solution:
the optimal solution after thinning is the same as the original optimal
solution. The second law (\ref{eq: thin-map law}), called \emph{thin-map},
governs the interaction between a map operation and thinning: mapping
over a thinned sequence produces a result contained within the thinning
of the mapped sequence.

\paragraph{Monotonicity and thinning}

Similar to monotonicity used in designing efficient combinatorial
optimization algorithms, it also plays a crucial role in identifying
valid dominance relations that enable thinning fusion. Intuitively,
monotonicity ensures that if a partial configuration $a$ dominates
another configuration $b$, then extending both configurations in
the same way preserves the dominance relation. Without this property,
thinning could discard potentially optimal solutions.
\begin{definition}
	\emph{Monotonicity in thinning}. Let $f:\mathcal{A}\to\mathcal{A}$
	be a \emph{one-step update relation} in a recursion (sequential decision
	process), We say that $f$ is monotonic with respect to a relation
	$R:\mathcal{A}\times\mathcal{A}\to\mathit{Bool}$ if 
	\begin{equation}
		R\left(a,b\right)\Longrightarrow R\left(f\left(a\right),f\left(b\right)\right)
	\end{equation}
	In words, if configuration $a$ dominates configuration $b$, then
	after applying the same update $f$ to both, the dominance relation
	still holds.
\end{definition}
For instance, in $\mathit{sodt}_{\text{vec}}$, the semantic interpretation
of monotonicity in thinning is expressed as: $R\left(t,t^{\prime}\right)\Longrightarrow R\left(\mathit{update}\left(r,t\right),\mathit{update}\left(r,t^{\prime}\right)\right)$.
Similarly, in $\mathit{sodt}_{\text{kperms}}$, we have: $R\left(\mathit{rs},\mathit{rs}^{\prime}\right)\Longrightarrow R\left(\mathit{insR}\left(r,\mathit{rs}\right),\mathit{insR}\left(r,\mathit{rs}^{\prime}\right)\right)$.
Likewise, in $\mathit{sodt}_{\text{rec}}$, the semantic interpretation
of monotonicity in thinning can be expressed as \\
$R\left(u,u^{\prime}\right)\wedge R\left(v,v^{\prime}\right)\subseteq R\left(\mathit{DN}\left(u,r,v\right),\mathit{DN}\left(u^{\prime},r,v^{\prime}\right)\right)$.

\paragraph{Finite dominance relation and global upper bound}

There are many ways to define a thinning algorithm \citet{he2025ROF},
and its effectiveness depends on the context. For our purposes—the
study of optimal decision tree problems—the dominance relation allows
us to implement $\mathit{thin}$ simply using the $\mathit{filter}$
function. Therefore, we will not discuss the general construction
of thinning, but instead focus on formalizing the specific dominance
relations studied in ODT algorithms for machine learning research.

\citet{he2025ROF} showed that dominance relations commonly used in
ML can be abstracted into two types: \emph{finite dominance relation}
(FDR) and\emph{ global upper bound} (GUB). Both relations rely on
two auxiliary functions defined as follows.
\begin{definition}
	\emph{Pessimistic upper bound} and \emph{optimistic lower bound}.
	Given a configuration $a:\mathcal{A}$, and let $f:\mathcal{A}\to\mathcal{A}$
	be a \emph{one-step update relation}\footnote{In the ODT problem, for $\mathit{sodt}_{\text{rec}}$ and $\mathit{odt}_{\text{depth}}$,
		$f$ corresponds to the tree constructor $\mathit{DN}$ with the same
		root $r$ and left/right subtrees $t$ but different right/left subtrees
		$x$ and $y$. In $\mathit{sodt}_{\text{vec}}$ and $\mathit{sodt}_{\text{kperms}}$,
		$f$ is a parameterized function, either $\mathit{update}\left(r\right)$
		or $\mathit{insR}\left(r\right)$ for adding a new splitting rule
		$r$ to a existing tree.} in a recursion. Assume smaller values are better: a function $\mathit{ub}$
	is called \emph{pessimistic} \emph{upper bound }if $\mathit{ub}\left(f^{n}\left(a\right)\right)\leq\mathit{ub}\left(a\right)$.
	A function $\mathit{lb}:\mathcal{A}\times\mathcal{D}\to\mathbb{R}$
	is called \emph{optimistic lower bound }if $\mathit{lb}\left(f^{n}\left(a\right)\right)\geq\mathit{lb}\left(a\right)$.
\end{definition}
Intuitively, these bounds capture the idea that as more information
is accumulated, the bounds become tighter, assume small is better:
\begin{itemize}
	\item The pessimistic upper bound provides a ``worst-case'' estimate of
	$a$, so any update $f^{n}\left(a\right)$ should yield a tighter
	(smaller) upper bound.
	\item The optimistic lower bound provides a ``best-case'' estimate of $a$,
	so any update $f^{n}\left(a\right)$ should yield a looser (larger)
	lower bound.
\end{itemize}
Many existing studies on ODT problems focus on developing effective
optimistic lower bounds or pessimistic upper bounds. \citet{he2025ROF}
demonstrated that the following two generic dominance relations, derived
using $\mathit{ub}$ and $\mathit{lb}$, satisfy the thinning laws.
These abstract the dominance relations used in most optimal decision
tree algorithms for machine learning.
\begin{theorem}
	Finite dominance relations and global upper bound.\emph{ Given two
		partial configuration $a,b:\mathcal{A}$, and a global upper-bound
		constant $\mathit{ub}^{*}:\mathcal{A}$, defines}
\end{theorem}
\begin{enumerate}
	\item \textbf{Finite dominance relation}:\emph{ 
		\[
		\mathit{fdom}\left(a,b\right)=\mathit{ub}\left(a\right)\leq\mathit{lb}\left(b\right).
		\]
	}
	\item \textbf{Global upper bounds dominance} \textbf{relation}: 
	\[
	\mathit{gubdom}\left(a\right)=\mathit{ub}^{*}\leq\mathit{lb}\left(a\right).
	\]
\end{enumerate}
Then both $\mathit{fdom}$ and $\mathit{gubdom}$ satisfies the monotonicity.
\begin{proof}
	To prove the dominance valid, we nee to verify the thinning algorithm
	defined using $\mathit{fdom}$ satisfies the universal property, which
	is equivalent to verifying the following \emph{monotonicity} \citep{bird1996algebra}:
	\[
	\mathit{ub}\left(a\right)\leq\mathit{lb}\left(b\right)\Longrightarrow\mathit{ub}\left(f^{n}\left(a\right)\right)\leq\mathit{lb}\left(f^{n}\left(b\right)\right)
	\]
	The implication holds because $\mathit{ub}\left(f^{n}\left(a\right)\right)\leq\mathit{ub}\left(a\right)\leq\mathit{lb}\left(b\right)\leq\mathit{lb}\left(f^{n}\left(b\right)\right)$.
	Similarly, to prove $\mathit{gubdom}$, we need to verify 
	\[
	\mathit{ub}^{*}\leq\mathit{lb}\left(a\right)\Longrightarrow\mathit{ub}^{*}\leq\mathit{lb}\left(f^{n}\left(a\right)\right)
	\]
	which holds because $\mathit{ub}^{*}\leq\mathit{lb}\left(a\right)\leq\mathit{lb}\left(f^{n}\left(a\right)\right)$.
\end{proof}
For the global upper bound dominance relation, the thinning function
$\mathit{thin}:\left[\mathcal{A}\right]\to\left[\mathcal{A}\right]$
can be efficiently implemented using a simple $\mathit{filter}$ \citet{he2025ROF}.

\paragraph{Examples of lower bound in literature}

We now introduce two useful and generic lower bounds proposed by \citet{demirovic2022murtree}
and \citet{zhang2023optimal} for classification and regression problems,
respectively.

One particularly interesting and widely applicable bounding technique
was introduced by \citet{demirovic2022murtree,lin2020generalized,brita2025optimal}
for classification problems. This technique is based on the following
observation, although it relies on an assumption that has not yet
been formally proven.
\begin{definition}
	\emph{Similarity lower bound}. Let $\mathit{xs}_{1}$ and $\mathit{xs}_{2}$
	denotes two different sets, and $\mathit{xs}_{\text{1}}-\mathit{xs}_{2}$
	denote the list of elements in $\mathit{xs}_{1}$ but not in $\mathit{xs}_{2}$,
	let $E_{\text{0-1}}\left(\mathit{odt}_{\text{depth}}\left(d,\mathit{xs}_{1}\right),\mathit{xs}_{2}\right)$
	denote the number of misclassifications when the optimal decision
	tree of fixed depth $d$ trained on data list $\mathit{xs}_{1}$,
	is evaluated on data list $\mathit{xs}_{2}$, (For readability, we
	write $\mathit{odt}_{\text{depth}}\left(\mathit{xs}\right)$ with
	$d$ fixed by default.)
\end{definition}
The \emph{similarity lower bound} is based on the following \emph{observation}:
\begin{equation}
	E_{\text{0-1}}\left(\mathit{odt}_{\text{depth}}\left(\mathit{xs}_{\text{2}}\right),\mathit{xs}_{\text{2}}\right)\geq E_{\text{0-1}}\left(\mathit{odt}_{\text{depth}}\left(\mathit{xs}_{1}\cap\mathit{xs}_{\text{2}}\right),\mathit{xs}_{1}\cap\mathit{xs}_{\text{2}}\right)\geq E_{\text{0-1}}\left(\mathit{odt}_{\text{depth}}\left(\mathit{xs}_{\text{1}}\right)\right)-\left|\mathit{xs}_{1}-\mathit{xs}_{2}\right|\label{eq: contree bound}
\end{equation}
The last inequality states that removing samples $\mathit{xs}_{1}-\mathit{xs}_{2}$
from data list $\mathit{xs}_{1}$ can reduce the number of misclassifications
by at most $\left|\mathit{xs}_{1}-\mathit{xs}_{2}\right|$. Although
looks intuitively obvious. In the worst case, assume all removed samples
were misclassified. Then: 
\begin{equation}
	E_{\text{0-1}}\left(\mathit{odt}_{\text{depth}}\left(\mathit{xs}_{1}\cap\mathit{xs}_{\text{2}}\right),\mathit{xs}_{1}\cap\mathit{xs}_{\text{2}}\right)=E_{\text{0-1}}\left(\mathit{odt}_{\text{depth}}\left(\mathit{xs}_{\text{1}}\right),\mathit{xs}_{\text{1}}\right)-\left|\mathit{xs}_{1}-\mathit{xs}_{2}\right|
\end{equation}
This equality implicitly suggests that the optimal decision tree trained
on the larger data list $\mathit{xs}_{\text{1}}$ is also optimal
on the smaller data list $\mathit{xs}_{1}\cap\mathit{xs}_{\text{2}}$.
If this were not true, then there would exist another tree $t_{\text{opt}}^{\prime}$
such that 
\begin{equation}
	E_{\text{0-1}}\left(t_{\text{opt}}^{\prime},\mathit{xs}_{1}\cap\mathit{xs}_{\text{2}}\right)<E_{\text{0-1}}\left(\mathit{odt}_{\text{depth}}\left(\mathit{xs}_{\text{1}}\right),\mathit{xs}_{\text{1}}\right)-\left|\mathit{xs}_{1}-\mathit{xs}_{2}\right|,
\end{equation}
contradicting the bound. This assumption—that the optimal tree for
a larger data list remains optimal for its subset—is not formally
proven and is somewhat counterintuitive. Clarifying or proving this
property is necessary to ensure the validity of the similarity lower
bound.

\paragraph{Example 2: $K$-means lower bound}

Another interesting lower bound was introduced in the context of the
optimal regression tree problem by \citet{zhang2023optimal}. It captures
the key insight that the regression error within a leaf cannot be
lower than the optimal solution to the one-dimensional $K$-means
clustering problem over the corresponding labels.
\begin{theorem}
	$K$-means lower bound\emph{. Given a set of rules $\mathit{rs}_{K}$
		rules, a list of data points $\mathit{xs}$, and a list of corresponding
		labels $\mathit{ys}:\left[\mathbb{R}\right]$ associated with $\mathit{xs}$,
		let the objective function $E_{2}$ denote the ordinary squared loss.
		Define $\text{kmeans-1D}\left(k,\mathit{ys}\right)$ as the optimal
		cost of performing $k$-means clustering over a one-dimensional data
		list $\mathit{ys}:\left[\mathbb{R}\right]$. Then, we have the following
		lower bound:
		\begin{equation}
			E_{2}\left(\mathit{sodt}\left(\mathit{xs},\mathit{rs}_{K}\right),\mathit{ys}\right)\geq\text{kmeans-1D}\left(K+1,\mathit{ys}\right)
		\end{equation}
	}
\end{theorem}
\begin{proof}
	The result follows directly, as a decision tree can be viewed as a
	method of clustering a one-dimensional data list—a label vector. Its
	clustering performance cannot surpass that of exact 1D $K$-means
	clustering, which is optimal with respect to squared loss. Consequently,
	any other clustering over $\mathit{ys}$, including that induced by
	the decision tree, will exhibit a greater squared loss.
\end{proof}

\paragraph{Fusion for thinning algorithm}

We now derive the program of fusing thinning algorithm in the recursion
of $\mathit{odt}_{\text{depth}}$ when the dominance relation is defined
as $\mathit{fdom}$ or $\mathit{gubdom}$. The thinning algorithm
for $\mathit{sodt}_{\text{rec}}$ can be derived in a similar manner.
\begin{lemma}
	Thinning fusion by finite dominance relation\emph{. Given a pessimistic}
	\emph{upper bound $\mathit{ub}$ and an optimistic lower bound $\mathit{lb}$,
		which defines a }finite dominance relation\emph{ $\mathit{fdom}_{1}\left(t,\mathit{t}^{\prime}\right)=\mathit{ub}\left(d,\mathit{xs}\right)\leq\mathit{lb}\left(d,\mathit{xs}\right)$
		and $\mathit{fdom}_{2}\left(t,\mathit{t}^{\prime}\right)=\mathit{ub}\left(\mathit{odt}_{\text{depth}}\left(d,\mathit{xs}\right)\right)\leq\mathit{lb}\left(\mathit{odt}_{\text{depth}}\left(d,\mathit{xs}\right)\right)$,
		then the solution of 
		\begin{equation}
			\begin{aligned}\mathit{odt}_{\text{thindep}} & :\mathbb{N}\times\mathcal{D}\to\mathit{DTree}\left(\mathcal{R},\mathcal{D}\right)\\
				\mathit{odt}_{\text{thindep}} & \left(0,\mathit{xs}\right)=\mathit{DL}\left(\mathit{xs}\right)\\
				\mathit{odt}_{\text{thindep}} & \left(1,\mathit{xs}\right)=\mathit{min}_{E}\left[DN\left(\mathit{DL}\left(\mathit{xs}^{+}\right),r,\mathit{DL}\left(\mathit{xs}^{-}\right)\right)\mid r\leftarrow\mathit{gen}_{\mathit{splits}}\left(\mathit{xs}\right)\right]\\
				\mathit{odt}_{\text{thindep}} & \left(d,\mathit{xs}\right)=\mathit{min}_{E}\left(\mathit{mapL}_{f}\left(\mathit{thin}_{\mathit{fdom}_{1}}\left(\left[\left(r,d,\mathit{xs}\right)\mid r\leftarrow\mathit{gen_{splits}}\left(\mathit{xs}\right)\right]\right)\right)\right)
			\end{aligned}
			\label{eq: odt_Depth with thinning}
		\end{equation}
	}\textup{ where }\emph{$\ensuremath{f\left(r,d,\mathit{xs}\right)=\mathit{DN}\left(\mathit{odt}_{\text{depth}}\left(d-1,\mathit{xs}^{+}\right),r,\mathit{odt}_{\text{depth}}\left(d-1,\mathit{xs}^{-}\right)\right)}$,
		is also a solution of $\mathit{odt}_{\text{depth}}$. In particular
		when $r$ is the defined as the global upper bound with a fiction
		configuration $\mathit{ub}^{*}$, we can implement $\mathit{thin}$
		using $\mathit{filter}$, then we have have 
\[
\begin{aligned}
	\mathit{odt}_{\text{thindep}} & : \mathbb{N} \times \mathcal{D} \to \mathit{DTree}\left(\mathcal{R}, \mathcal{D}\right) \\
	\mathit{odt}_{\text{thindep}} & \left(0, \mathit{xs}\right) = \mathit{DL}\left(\mathit{xs}\right) \\
	\mathit{odt}_{\text{thindep}} & \left(1, \mathit{xs}\right) = \mathit{min}_{E} \left[ \mathit{DN}\left( \mathit{DL}\left(\mathit{xs}^{+}\right), r, \mathit{DL}\left(\mathit{xs}^{-}\right) \right) \mid r \leftarrow \mathit{gen}_{\mathit{splits}}\left(\mathit{xs}\right) \right] \\
	\mathit{odt}_{\text{thindep}} & \left(d, \mathit{xs}\right) = \begin{cases}
		t^{*} & \mathit{ub}^{*} \leq \mathit{lb}\left(d, \mathit{xs}\right) \\
		\begin{aligned}
			& \mathit{min}_{E} \Big( \Big[ \mathit{DN}\Big( \mathit{odt}_{\text{depth}}\left(d-1, \mathit{xs}^{+}\right), r, \mathit{odt}_{\text{depth}}\left(d-1, \mathit{xs}^{-}\right) \Big) \mid\\
			& \quad \quad r \leftarrow \mathit{gen}_{\mathit{splits}}\left(\mathit{xs}\right) \Big] \Big)
		\end{aligned} & \text{otherwise}
	\end{cases}
\end{aligned}
\]
		where $E\left(t^{*}\right)=\inf$.}
\end{lemma}
\begin{proof}
	We prove \ref{eq: odt_Depth with thinning} by following equational
	reasoning 
	\[
	\begin{aligned} & \mathit{min}_{E}\left(\left[\mathit{DN}\left(\mathit{odt}_{\text{depth}}\left(d-1,\mathit{xs}^{+}\right),r,\mathit{odt}_{\text{depth}}\left(d-1,\mathit{xs}^{-}\right)\right)\mid r\leftarrow\mathit{gen_{splits}}\left(\mathit{xs}\right)\right]\right)\\
		= & \text{ \{ thin-introduction law\}}\\
		& \mathit{min}_{E}\left(\mathit{thin}_{R_{2}}\left(\left[\mathit{DN}\left(\mathit{odt}_{\text{depth}}\left(d-1,\mathit{xs}^{+}\right),r,\mathit{odt}_{\text{depth}}\left(d-1,\mathit{xs}^{-}\right)\right)\mid r\leftarrow\mathit{gen_{splits}}\left(\mathit{xs}\right)\right]\right)\right)\\
		= & \text{ \{ \ensuremath{\mathit{odt}_{\text{depth}}\left(d,\mathit{xs}\right)} is abstracted as \ensuremath{\mathit{mapL}_{f}}\}}\\
		& \mathit{min}_{E}\left(\mathit{thin}_{R_{1}}\left(\mathit{mapL}_{f}\left(\left[\left(r,d,\mathit{xs}\right)\mid r\leftarrow\mathit{gen_{splits}}\left(\mathit{xs}\right)\right]\right)\right)\right)\\
		\subseteq & \text{ \{ thin-map law, when \ensuremath{R_{1}=\mathit{fdom}_{1}}\}}\\
		& \mathit{min}_{E}\left(\mathit{mapL}_{f}\left(\mathit{thin}_{fdom_{1}}\left(\left[\left(r,d,\mathit{xs}\right)\mid r\leftarrow\mathit{gen_{splits}}\left(\mathit{xs}\right)\right]\right)\right)\right)
	\end{aligned}
	\]
	In particular, when $\ensuremath{\ensuremath{R_{1}=\mathit{gubdom}}}$,
	then thinning algorithm $\mathit{thin}_{R_{2}}$ is just the filter
	function $\mathit{filter}$, then we have 
\[
\begin{aligned}
	& \mathit{min}_{E}\left(\mathit{thin}_{R_{1}}\left(\mathit{mapL}_{f}\left(\left[\left(r,d,\mathit{xs}\right) \mid r \leftarrow \mathit{gen_{splits}}\left(\mathit{xs}\right)\right]\right)\right)\right) \\
	\subseteq & \text{ \{ thin-map law, when \ensuremath{R_{1}=\mathit{gubdom}}\}} \\
	& \mathit{min}_{E}\left(\mathit{mapL}_{f}\left(\mathit{filter}_{p_{\left(lb,\mathit{ub}^{*}\right)}}\left(\left[\left(r,d,\mathit{xs}\right) \mid r \leftarrow \mathit{gen_{splits}}\left(\mathit{xs}\right)\right]\right)\right)\right) \\
	= & \text{ \{ definition of \ensuremath{\mathit{filter}}\}} \\
	& \begin{cases}
		t^{*} & \mathit{ub}^{*} \leq \mathit{lb}\left(d, \mathit{xs}\right) \\
		\begin{aligned}
			& \mathit{min}_{E} \Big( \Big[ \mathit{DN}\Big( \mathit{odt}_{\text{depth}}\left(d-1, \mathit{xs}^{+}\right), d r, \mathit{odt}_{\text{depth}}\left(d-1, \mathit{xs}^{-}\right) \Big) \mid \\
				& \quad \quad r \leftarrow \mathit{gen}_{\mathit{splits}}\left(\mathit{xs}\right) \Big] \Big)
		\end{aligned} & \text{otherwise}.
	\end{cases}
\end{aligned}
\]
\end{proof}
Note that the thinning algorithm does not necessarily guarantee a
speedup compared to the original definition of $\mathit{odt}_{\text{depth}}$.
Its performance depends on both the \textbf{implementation} of $\mathit{thin}$
and the \textbf{effectiveness of the relation} $R$. \citet{he2025ROF}
provides various recursive definitions of $\mathit{thin}$ and the
effectiveness of each definition is problem-dependent.

\subsection{Comparison of four methods\label{subsec:Comparison-of-four algs}}

\subsubsection{Comparison between size- and depth-constrained trees algorithms\label{subsec:Comparision-between-leave-depth}}

As noted, depth-constrained decision trees are commonly used in ODT
studies with relatively small combinatorics, such as axis-parallel
trees \citep{mazumder2022quant,brita2025optimal} trees over binary-feature
data \citep{demirovic2022murtree,verwer2019learning,nijssen2007mining,aglin2020learning,nijssen2010optimal}.
These existing methods typically process recursion in a \emph{depth-first}
manner, unlike the \emph{breadth-first} formulation we defined in
\ref{eq: DP solution to odt_Depth}. In contrast, the size-constrained
ODT algorithm has been far less studied; \citet{hu2019optimal,lin2020generalized,zhang2023optimal}
have applied it to solve the ODT problem over binary-feature data,
but the correctness of their algorithms remains unresolved.

This preference is not coincidental. The discussion below clarifies
why researchers favor depth-constrained approaches and why the depth-constrained
ODT algorithm \ref{eq: DP solution to odt_Depth} is unsuitable for
problems with intractable combinatorics, such as hypersurface decision
trees. The primary reason is that such problems require parallelization
to achieve reasonable solution times, yet depth-constrained trees
are difficult to implement in an \emph{embarrassingly parallel} manner—parallelization
requires minimal or \textbf{no communication} between processors.

We first analyze the size-constrained decision tree and explain why
it is suitable for parallel implementation. In \ref{eq: simplified ODT problem},
the original size-constrained ODT problem can be factorized as:
\begin{equation}
	\mathit{min}_{E}\circ\mathit{concatMapL}{}_{\mathit{sodt}\left(\mathit{xs}\right)}\circ\mathit{kcombs}_{K}=\mathit{odt}_{K}
\end{equation}
This factorization indicates that solving the challenging $\mathit{odt}$
problem can be achieved by solving many smaller instances of $\mathit{sodt}$.
As \citet{he2025CGs} demonstrated, there exists an elegant and efficient
divide-and-conquer definition for $\mathit{kcombs}_{K}$, which decomposes
the problem into smaller, independent subproblems. These subproblems
can be processed in parallel and mapped efficiently onto modern memory
hierarchies, particularly when combined with recursively blocked arrays
\citep{gupta2000automatic}. Unfortunately, no similar factorization
has yet been found for the depth-constrained ODT problem that would
enable embarrassingly parallel execution. Consequently, the optimal
decision tree algorithm derived from \ref{spec: ODT_depth} can only
be executed sequentially or parallelized post hoc, making it difficult
to achieve the same level of speed-up as embarrassingly parallel algorithms.

For hyperplane decision trees, each recursive step may involve up
to $O\left(N^{D}\right)$ possible rules. Adopting a breadth-first
search strategy would result in extremely slow branching and high
memory consumption. When exact solutions are infeasible—as is the
case for most data lists with large $d$ and $N$—depth-first search
becomes necessary, as it can provide approximate solutions within
a fixed time limit.

Although depth-first search effectively reduces memory usage, it significantly
complicates parallelization compared to breadth-first strategies.
Previous research on ODT problems with smaller combinatorics remains
tractable even without parallelization because the number of possible
splits at each recursive step is limited to $O\left(N\times D\right)$
for axis-parallel trees. However, this approach becomes ineffective
when applied to the larger combinatorics of \ref{spec: ODT_depth}
where splitting rules are defined as hyperplanes. Hence, parallelization
is likely the only viable solution for problems with such large combinatorial
complexity.

Nevertheless, developing a combinatorial characterization for depth-constrained
decision trees that allows parallel factorization, similar to the
size-constrained ODT problem, represents an interesting direction
for future research.

\subsubsection{Comparison between size-constrained optimal decision tree algorithms\label{subsec:Comparsion-between-rec and seq}}

\begin{table}
	\begin{centering}
		\footnotesize%
		\begin{tabular}{>{\centering}p{2cm}>{\raggedright}p{4cm}>{\raggedright}p{4cm}>{\raggedright}p{4cm}}
			Aspect & $\mathit{sodt}_{\text{rec}}$ & $\mathit{sodt}_{\text{vec}}$ & $\mathit{sodt}_{\text{kperms}}$\tabularnewline
			\hline 
			Advantages & \begin{itemize}
				\item Lower computational cost
				\item Elegant and easily implemented recursive definition
				\item Facilitates recursive tree evaluation
				\item Low probability of worst-case complexity
				\item Memory efficient
			\end{itemize}
			& \begin{itemize}
				\item Trackable, iterative computation enabling full vectorization
				\item Support incremental objective evaluation
			\end{itemize}
			& \begin{itemize}
				\item Trackable, iterative computation enabling full vectorization
				\item Moderate probability of worst-case complexity
			\end{itemize}
			\tabularnewline
			\hline 
			Disadvantages & \begin{itemize}
				\item Risk of stack overflow in deep recursion due to stack frame allocation
				\item Limited support for tail call optimization (TCO) in imperative languages
				\item Challenges in parallelization due to sequential call stack
			\end{itemize}
			& \begin{itemize}
				\item Expensive $\textit{unique}$ operation for large $K$
				\item Costly feasibility testing ($O\left(K^{2}\right)$ in worst case)
				\item High probability of worst-case complexity
				\item Memory intensive due to vectorized implementation
			\end{itemize}
			& \begin{itemize}
				\item Generates duplicate $K$-permutations corresponds to the same tree
				\item Costly feasibility testing ($O\left(K^{2}\right)$ in worst case)
				\item Memory intensive due to vectorized implementation
				\item Does not support incremental objective evaluation
			\end{itemize}
			\tabularnewline
		\end{tabular}
		\par\end{centering}
	\caption{Comparison of advantages and disadvantages of $\mathit{sodt}_{\text{rec}}$,
		$\mathit{sodt}_{\text{vec}}$ and $\mathit{sodt}_{\text{kperms}}$
		algorithms \label{tab:Comparison of leaf-algs}}
\end{table}
\begin{table}
	\begin{centering}
		\begin{tabular}{ccccc}
			\hline 
			Methods & \begin{cellvarwidth}[t]
				\centering
				Theoretical
				
				Computational
				
				Efficiency
			\end{cellvarwidth} & \begin{cellvarwidth}[t]
				\centering
				Memory
				
				Usage
			\end{cellvarwidth} & Parallelizability & \begin{cellvarwidth}[t]
				\centering
				Recursive
				
				Evaluability
			\end{cellvarwidth}\tabularnewline
			\hline 
			$\mathit{odt}_{\text{kperms}}$ & 2 & 3 & 1 & 5\tabularnewline
	
			$\mathit{odt}_{\text{rec}}$ & 1 & 2 & 2 & 1\tabularnewline

			$\mathit{odt}_{\text{vec}}$ & 2 & 4 & 1 & 3\tabularnewline

			$\mathit{odt}_{\text{depth}}$ & 1 & 1 & 4 & 1\tabularnewline
	
		\end{tabular}
		\par\end{centering}
	\caption{Comparison $\mathit{odt}_{\text{rec}}$, $\mathit{odt}_{\text{vec}}$,
		$\mathit{odt}_{\text{kperms}}$, and $\mathit{odt}_{\text{depth}}$
		algorithms, we provide a ranking on a scale from 1 to 5: 1 indicates
		easy, 2 indicates relatively easy but less efficient than 1, 3 indicates
		difficult but feasible, 4 indicates very difficult or involving a
		highly complex process, and 5 indicates infeasible or requiring prohibitive
		computational resources.\label{tab: ranks of different methods}}
\end{table}

Having discussed the advantages of depth- and size-constrained decision
trees, we believe size-constrained ODT algorithms are more suitable
candidates for solving complex ODT problems. We present three distinct
definitions of $\mathit{odt}_{\text{size}}$: $\mathit{odt}_{\text{rec}}$,
$\mathit{odt}_{\text{vec}}$ and $\mathit{odt}_{\text{kperms}}$—naturally
raising the question: which is most suitable?

In machine learning research, the interplay between software, hardware,
and algorithms is often overlooked. An algorithm's success may derive
not from its intrinsic superiority but from the tools—hardware and
software—used in its implementation. This phenomenon, termed the ``hardware
lottery'' by \citet{hooker2021hardware} highlights how the siloed
development of hardware, software, and algorithms has historically
shaped which approaches prevail. A prominent example is the success
of deep neural networks, which benefited substantially from optimized
hardware and software.

Similarly, in the ODT problem, a comparable ``lottery'' exists between
the vectorized method $\mathit{odt}_{\text{vec}}$ and recursive method
$\mathit{odt}_{\text{rec}}$ for solving optimal decision tree problems.
Because $\mathit{odt}_{\text{size}}$ can be factorized into many
smaller instances solved using $\mathit{sodt}$, the algorithmic behavior
is largely determined by the performance of $\mathit{sodt}$. Therefore,
we focus our discussion on the behavior of these algorithms.

Compare with $\mathit{sodt}_{\text{rec}}$, $\mathit{sodt}_{\text{vec}}$
and $\mathit{sodt}_{\text{kperms}}$, $\mathit{sodt}_{\text{rec}}$
not only requires provably fewer computations due to its DP solution
but also has a lower probability of incurring worst-case complexity.
Due to the use of tree datatype, the recursive pattern of $\mathit{sodt}_{\text{rec}}$
behaves like a divide-and-conquer algorithm, so the combinatorial
complexity of subtrees decreases \textbf{logarithmically} with respect
to the length of $\mathit{rs}$ in typical cases, whereas $\mathit{sodt}_{\text{vec}}$
decreases linearly due to its intrinsic sequential nature. Furthermore,
$\mathit{sodt}_{\text{rec}}$ creates no duplicate trees, eliminating
the need for a $\mathit{unique}$ operation. However, its drawbacks
are significant: it is difficult to optimize and carries a risk of
stack overflow due to recursive calls. Moreover, $\mathit{sodt}_{\text{rec}}$
is a \emph{data-dependent recursion}, in which the function's behavior
and termination depend on the structure or values of the input, produces
irregular and unpredictable branching and memory-access patterns that
are challenging to parallelize efficiently.

In contrast, $\mathit{sodt}_{\text{vec}}$ is purely sequential and
can be fully vectorized, offering substantial advantages. It enables
incremental evaluation of the tree objective, as feasible splits are
added recursively to leaves, requiring only simple leaf splits. Nevertheless,
it is less efficient than $\mathit{sodt}_{\text{rec}}$ because no
DP solution exists for the objective that we concerns, and additional
computations arise from feasibility checks using the ancestry relation
matrix $\boldsymbol{K}$ and the $\mathit{unique}$ operation. This
method is also more memory-intensive, particularly when processing
data in batches. For instance, storing $M$ trees with $K$ splitting
rules in a batch requires $N\times M\times\left(K+1\right)$ memory
for predictions—a significant burden for large $M$ and $K$. In practice,
this can quickly exceed GPU memory limits.

The $\mathit{odt}_{\text{kperms}}$ algorithm faces similar challenges
to $\mathit{sodt}_{\text{vec}}$. Representing decision trees via
$K$-permutations produces duplicate trees. Unlike $\mathit{sodt}_{\text{vec}}$,
duplicates can sometimes be removed using a $\mathit{unique}$ operation,
but permutations such as $\left[1,2,3\right]$ and $\left[1,3,2\right]$
correspond to the same tree require reconstructing the tree structure
first, which incurs additional computational cost.

Overall, we believe $\mathit{sodt}_{\text{rec}}$ and $\mathit{sodt}_{\text{vec}}$
are the most suitable methods for solving the HODT problem. However,
it remains unclear whether the hardware advantages of $\mathit{sodt}_{\text{vec}}$
will outweigh the computational efficiency of $\mathit{sodt}_{\text{rec}}$.
We summarized advantages and limitations of $\mathit{odt}_{\text{size}}$
algorithms in Table \ref{tab:Comparison of leaf-algs}.

Additionally, Table \ref{tab: ranks of different methods} ranks $\mathit{odt}$
algorithms—$\mathit{odt}_{\text{rec}}$, $\mathit{odt}_{\text{vec}}$,
$\mathit{odt}_{\text{kperms}}$, and $\mathit{odt}_{\text{depth}}$—based
on \emph{theoretical computational efficiency}, \emph{memory usage},
\emph{parallelizability}, and \emph{recursive evaluability}. Notably,
$\mathit{odt}_{\text{kperms}}$ receives the lowest recursive evaluability
rating of 5 because adding a new rule, such as inserting $r_{3}$
at the front of $\left[r_{1},r_{2}\right]$ may invalidates previously
computed results, as the new tree structure may place $r_{1}$ and
$r_{2}$ on different branches of $r_{3}$.

In Part II, we implement optimized versions of $\mathit{sodt}_{\text{rec}}$
and $\mathit{sodt}_{\text{vec}}$ using more efficient array-based
tree representations instead of tree structures. Experimental results
align with expectations: although $\mathit{sodt}_{\text{rec}}$ is
theoretically more efficient $\mathit{sodt}_{\text{vec}}$ outperforms
it in both sequential and parallel settings due to better hardware
compatibility. Future work may explore hardware tailored to $\mathit{sodt}_{\text{rec}}$,
enabling a more balanced comparison between these approaches.

\section{Geometric foundation\label{sec:Geometric-foundation}}

Having developed a comprehensive algorithmic theory for ODT problems,
along with four distinct solutions for both size- and depth-constrained
cases, we note that these theories rely on the assumption that the
splitting rules ($\mathit{rs}$) or the method for generating these
rules ($\mathit{gen}_{\mathit{splits}}$) are known. However, in machine
learning, such rules are typically not readily available; they must
be inferred from data. Moreover, our theory is grounded in the axioms
of the proper decision tree, but not all splitting rules satisfy these
axioms—for example, those used in ODT problems over binary feature
data.

In Section \ref{subsec: combinatorial geometry}, we present a detailed
analysis of the combinatorial and geometric aspects of hypersurfaces
by drawing on results from the theory of \emph{hyperplane arrangements}
and \emph{algebraic geometry}. We show that, when data points are
in general position, hypersurfaces defined by \emph{fixed-degree polynomials}
can be represented as combinations of data items and can be exhaustively
generated in polynomial time. Furthermore, we demonstrate that such
hypersurfaces satisfy the axioms of the proper decision tree.

Section \ref{subsec: combinatorial geometry} explains the construction
of two splitting rule generators, that is, a concrete implementation
of $\mathit{gen}_{\mathit{splits}}$ for \emph{hypersurface} and \emph{mixed
	splitting rules}. Finally, Subsection \ref{subsec:Complexity-of ODT problems}
presents a detailed combinatorial analysis of the complexity of various
ODT problems.

\subsection{Classification meet combinatorial geometry \label{subsec: combinatorial geometry}}

Polynomial hypersurfaces, and their special case—hyperplanes—are natural
candidates for the splitting rules of a proper decision tree, as they
naturally satisfy Axioms 1, 2, and 3. However, for Axiom 4, a subtle
question arises: why can't the relations $h_{i}^{M}\searrow h_{j}^{M}$,
$h_{i}^{M}\swarrow h_{j}^{M}$ coexist? At first glance, by Bézout's
theorem, any two polynomial hypersurfaces must intersect at some points.
Intuitively, this suggests that a hypersurface could lie on both sides
of another by being ``cut'' into different regions along their intersection
points, thus both $h_{i}^{M}\searrow h_{j}^{M}$ and $h_{i}^{M}\swarrow h_{j}^{M}$
should coexist because $h_{i}$ can cut $h_{j}$ and $h_{j}$ can
cut $h_{i}$ into two parts. While this is true in general, the situation
is different in the context of machine learning, where we are primarily
concerned with the partition of data induced by hypersurfaces rather
than the hypersurfaces themselves.

By enforcing a natural assumption on the data distribution, we show
that the hypersurfaces generated from data can only lie on one side
of another hypersurface, ensuring that the coexistence of $h_{i}^{M}\searrow h_{j}^{M}$
and $h_{i}^{M}\swarrow h_{j}^{M}$ is impossible.

\subsubsection{Polynomial hypersurface and the $W$-tuple Veronese embedding}

A \emph{monomial} $M$ with respect to a $D$-tuple $x=\left(x_{1},x_{2}\ldots,x_{D}\right)$
is a product of the form $M\left(x\right)=x_{1}^{\alpha_{1}}\cdot x_{2}^{\alpha_{2}}\ldots\cdot x_{D}^{\alpha_{D}},$
where $\alpha_{1},\alpha_{2}\ldots,\alpha_{D}$ are \emph{non-negative}
integers, denote $\boldsymbol{\alpha}=\left(\alpha_{1},\alpha_{2}\ldots,\alpha_{D}\right)$.
The\emph{ total degree} of this monomial is the sum $\left|\boldsymbol{\alpha}\right|=\alpha_{1}+\cdot\cdot\cdot+\alpha_{n}$.
\begin{definition}
	\emph{Polynomial}. A polynomial $P\left(x\right)$, or $P$ in short,
	is a finite linear combination (with coefficients in the field $\mathbb{R}$)
	of monomials in the form 
	\begin{equation}
		P\left(x\right)=\sum_{i}w_{i}x^{\boldsymbol{\alpha}_{i}},w_{i}\in\mathbb{R}.
	\end{equation}
	We denote the set of all polynomials with variables $x_{1},x_{2}\ldots,x_{D}$
	and coefficients in $\mathbb{R}$ is denoted by $\mathbb{R}\left[x_{1},x_{2}\ldots,x_{D}\right]$
	or $\mathbb{R}\left[x\right]$.
\end{definition}
The \emph{maximal degree} of $P$, denoted $deg\left(P\right)$, is
the maximum $\left|\boldsymbol{\alpha}_{i}\right|$ such that the
coefficient $\alpha_{j}$ is nonzero. For instance, the polynomial
$P\left(x\right)=5x_{1}^{2}+3x_{1}x_{2}+x_{2}^{2}+x_{1}+x_{2}+3$
for $x\in\mathbb{R}^{2}$ has six terms and maximal degree two.

The number of possible monomial terms of a degree $K$ polynomial
is equivalent to the number of ways of selecting $K$ variables from
the multisets of $D+1$ variables\footnote{There are $D+1$ variables considering polynomials in homogeneous
	coordinates, i.e. \emph{projective space} $\mathbb{P}^{D}$.}. This is equivalent to the \emph{size $K$ combinations of $D+1$
	elements taken with replacement}. In other words, selecting $K$ variables
from the variable set $\left(x_{0},x_{1},\ldots,x_{D}\right)$ in
homogeneous coordinates with repetition. This lead to the following
fact.
\begin{fact}
	\emph{If polynomial $P$ in $\mathbb{R}\left[x_{1},x_{2}\ldots,x_{D}\right]$
		has maximal degree $M$, then polynomial $P$ has $\left(\begin{array}{c}
			D+M\\
			D
		\end{array}\right)$ monomial terms (including constant term) at most.}
\end{fact}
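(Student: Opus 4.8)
The plan is to reduce the counting of monomials to a standard stars-and-bars argument, following the homogenization hint already present in the statement. First I would observe that a monomial $x^{\boldsymbol{\alpha}} = x_1^{\alpha_1}\cdots x_D^{\alpha_D}$ occurring in $P$ is uniquely determined by its exponent vector $\boldsymbol{\alpha} = (\alpha_1,\ldots,\alpha_D)$ of nonnegative integers, and that the hypothesis $\deg(P) \le M$ forces every such vector to satisfy $|\boldsymbol{\alpha}| = \alpha_1 + \cdots + \alpha_D \le M$. Hence the number of distinct monomials that can appear in $P$ is bounded above by the cardinality of $A_{D,M} = \{\boldsymbol{\alpha} : \alpha_i \ge 0,\ |\boldsymbol{\alpha}| \le M\}$, and it suffices to show $|A_{D,M}| = \binom{D+M}{D}$.

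Next I would introduce a slack coordinate $\alpha_0 := M - |\boldsymbol{\alpha}| \ge 0$, which is exactly the passage to homogeneous coordinates in $\mathbb{P}^D$ noted in the statement: degree-$\le M$ monomials in the $D$ affine variables correspond bijectively to degree-exactly-$M$ monomials in the $D+1$ projective variables $(x_0, x_1, \ldots, x_D)$. Concretely this gives a bijection between $A_{D,M}$ and the set $B_{D+1,M} = \{(\alpha_0,\alpha_1,\ldots,\alpha_D) : \alpha_i \ge 0,\ \alpha_0 + \cdots + \alpha_D = M\}$, the inverse simply dropping the forced coordinate $\alpha_0$. Therefore $|A_{D,M}| = |B_{D+1,M}|$.

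Finally I would count $B_{D+1,M}$ by the standard combinations-with-replacement argument: the number of nonnegative integer solutions of $\alpha_0 + \cdots + \alpha_D = M$ equals the number of size-$M$ multisets drawn from $D+1$ symbols, which is $\binom{(D+1)+M-1}{M} = \binom{D+M}{M} = \binom{D+M}{D}$. This establishes the claimed bound, with equality precisely when every admissible monomial carries a nonzero coefficient.

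I expect no genuine obstacle here; the only subtlety is to phrase the conclusion as an upper bound rather than an exact count, since the fact asserts ``at most'' and a particular $P$ may have vanishing coefficients. An alternative route avoiding the homogenization bijection would fix the degree $k$ of a monomial, count the $\binom{k+D-1}{D-1}$ exponent vectors with $|\boldsymbol{\alpha}| = k$, and sum over $0 \le k \le M$; the hockey-stick identity collapses this telescoping sum to $\binom{D+M}{D}$. Both routes are elementary, but I would present the homogenization argument as primary because it matches the projective-space viewpoint the paper adopts for hypersurfaces.
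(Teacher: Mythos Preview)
Your argument is correct and follows essentially the same route as the paper: the paper justifies the fact (in the paragraph immediately preceding it) by passing to homogeneous coordinates in $\mathbb{P}^D$ and counting degree-$M$ monomials in the $D+1$ variables $(x_0,x_1,\ldots,x_D)$ as size-$M$ combinations with replacement, which is exactly your slack-variable bijection followed by stars-and-bars. Your write-up simply makes that sketch rigorous and adds the hockey-stick alternative, so there is nothing to correct.
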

Given polynomial $P\in\mathbb{R}\left[x_{1},x_{2}\ldots,x_{D}\right]$
of degree $M$, a hypersurface $h^{M}$ defined by $P$ is the set
of \emph{zeros} of $P$ is $h^{M}=\left\{ \left(x_{1},x_{2}\ldots,x_{D}\right)\in\mathbb{R}^{D}:P\left(x_{1},x_{2}\ldots,x_{D}\right)=0\right\} $.
Geometrically, the zero set of a polynomial $P$ determines a subset
of $\mathbb{R}^{D}$. This subset is called a \emph{surface} if its
dimension is less than $D$ and a \emph{hypersurface} if its dimension
is $D-1$. Specifically, if this subset is an affine subspace defined
by a \emph{degree-one} polynomial, it is referred to as a \emph{flat}.
A $D-1$-dimensional affine flat is known as a \emph{hyperplane}.

\subsubsection{0-1 loss linear classification theorem}

A hyperplane arrangement is a dissection of $\mathbb{R}^{D}$ induced
by a finite set of hyperplanes. At first glance, a hyperplane arrangement
may appear to carry more structural information than a set of data
points (a \emph{point configuration}). However, an effective approach
to studying geometric structures involving points and hyperplanes
is to examine the transformations between these two representations.
By analyzing the \textbf{duality} between point configurations and
hyperplane arrangements, it becomes evident that both possess equally
rich combinatorial structures.

Although the space of all possible hyperplanes seems to exhibit infinite
combinatorial complexity—since each hyperplane is defined by a continuous
normal vector $\boldsymbol{w}_{k}$\LyXZeroWidthSpace —the finiteness
of data imposes strong constraints. Specifically, the number of \emph{distinct
	data partitions} induced by hyperplanes is \emph{finite}, which introduces
an \textbf{equivalence relation} among hyperplanes.

In 1965, Cover provided a precise formula for counting the number
of possible partitions induced by affine hyperplanes. Given $N$ data
points in general position, the number of \emph{linearly separable
	dichotomies} is:\emph{ }
\begin{equation}
	\mathcal{S}_{\mathrm{Cover}}\left(N,D+1\right)=2\sum_{d=0}^{D}\left(\begin{array}{c}
		N-1\\
		d
	\end{array}\right).
\end{equation}
This result implies that the optimal 0-1 loss solution obtained by
exhaustively searching over Cover's space $\mathcal{S}_{\mathrm{Cover}}$
is equivalent to the optimal solution obtained by searching the entire
continuous space $\mathbb{R}^{D}$. Formally, given $N$ data points
in general position, we have 
\begin{equation}
	\mathit{minR}{}_{E_{\text{0-1}}}\left\{ \boldsymbol{w}\mid\boldsymbol{w}\in\mathcal{S}_{\mathrm{Cover}}\left(N,D+1\right)\right\} \subseteq\mathit{minR}{}_{E_{\text{0-1}}}\left\{ \boldsymbol{w}\mid\boldsymbol{w}\in\mathbb{R}^{D}\right\} ,\label{eq: cover's equivalence}
\end{equation}
where $\mathit{minR}{}_{E_{\text{0-1}}}$ selects one of the optimal
solutions with respect to $E_{\text{0-1}}$, and $E_{\text{0-1}}\left(\boldsymbol{w},\mathit{xs}\right)=\sum_{\left(x,t\right)\in\mathcal{D}}\mathbf{1}\left[\boldsymbol{w}^{T}x\neq t\right]$,
we omit dataset $\mathit{xs}$ when evaluating $\boldsymbol{w}$ in
$\mathit{minR}{}_{E_{\text{0-1}}}$. Equation (\ref{eq: cover's equivalence})
shows that exhaustive enumeration of $\mathcal{S}_{\mathrm{Cover}}\left(N,D+1\right)$
yields an optimal solution without searching an infinitely large hypothesis
space.

However, Cover's formula only \textbf{counts} the number of possible
partitions; it does \textbf{not} provide an explicit method for enumerating
the equivalence classes of hyperplanes. To address this, \citet{he2023efficient}
introduced a \emph{dual transformation} that maps a set of data points
into a hyperplane arrangement while preserving the incidence relations
between hyperplanes and data points, revealing that hyperplanes and
points are isomorphic in a combinatorial sense. In particular, we
have following results.
\begin{theorem}
	\emph{Under the }general position\emph{ assumption and when optimizing
		the 0-1 loss, for any partition induced by a hyperplane $h$ in $\mathbb{R}^{D}$
		with respect to a dataset $\mathit{xs}$, there exists $D$ points
		in $\mathit{xs}$ such that the partition induced by hyperplanes passing
		through these $D$ points coincides with the partition induced by
		$h$}
\end{theorem}
Consequently, the 0-1 loss linear classification problem can be solved
by exhaustively enumerating $\left(\begin{array}{c}
	N\\
	D
\end{array}\right)=O\left(N^{D}\right)$ possible combinations of data points. More formally, we have the
following theorem.
\begin{theorem}
	\emph{0-1 loss linear classification theorem. Consider a data list
		$\mathit{xs}:\left[\mathbb{R}^{D}\right]$ of $N$ points in general
		position, along with their associated labels. Then 
\begin{equation}
	\begin{aligned}
		& \mathit{minR}_{E_{\text{0-1}}}\left\{ \min_{E_{\text{0-1}}}\left(\boldsymbol{w}_{s}, -\boldsymbol{w}_{s}\right) \mid s \in \mathcal{S}_{\text{kcombs}}\left(D, \mathit{xs}\right) \right\} \subseteq \\
		& \quad \mathit{minR}_{E_{\text{0-1}}}\left\{ \boldsymbol{w} \mid \boldsymbol{w} \in \mathcal{S}_{\mathrm{Cover}}\left(N, D+1\right) \right\} \subseteq \mathit{minR}_{E_{\text{0-1}}}\left\{ \boldsymbol{w} \mid \boldsymbol{w} \in \mathbb{R}^{D} \right\}
	\end{aligned}
\end{equation}
		and $\boldsymbol{w}_{s}\in\mathbb{R}^{D}$ denotes the normal vector
		determined uniquely by $D$ data points (in general position, a unique
		hyperplane passes through any such $D$ points, which can be computed
		via matrix inversion).\label{0-1 loss linear classification thoerm}}
\end{theorem}
\begin{proof}
	See \citet{he2023efficient}.
\end{proof}

\subsubsection{0-1 loss polynomial hypersurface classification theorem}

\begin{table}
	\begin{centering}
		\tiny%
		\begin{tabular}{>{\centering}p{2.5cm}cc>{\centering}p{2.4cm}}
			\hline 
			Polynomial hypersurfaces & Embedded data & Polynomial expression & Number of data required\tabularnewline
			\hline 
			Axis-parallel hyperplane & $\left(0,\ldots,x_{i},\ldots0\right)$ & $x_{i}=a$, $\forall i\in\left[D\right]$ & 1\tabularnewline

			Affine hyperplane & $\left(x_{1},\ldots,x_{D},1\right)$ & $w_{1}x_{1}+w_{2}x_{2}+\ldots+w_{D}x_{D}+1=0$,$w\bar{\boldsymbol{x}}=0$, & $D$\tabularnewline

			Hypersphere & $\left(x_{1},\ldots,x_{D},x_{1}^{2},\ldots,x_{D}^{2},1\right)$ & $w_{1}x_{1}+\ldots w_{D}x_{D}\ldots+w_{2D}x_{D}^{2}+1=0$ & $2D$\tabularnewline

			Conic section & $\left(x_{1},\ldots,x_{1}x_{D},\ldots,x_{D}^{2},1\right)$ & $w_{1}x_{1}+\ldots w_{g}x_{1}x_{2}\ldots+w_{G}x_{D}^{2}+1=0$,$w\bar{\boldsymbol{x}}=0$, & $G=\left(\begin{array}{c}
				D+2\\
				2
			\end{array}\right)-1$\tabularnewline

			Arbitrary degree-$M$ polynomials & $\left(x_{1},\ldots,x_{1}x_{2}^{M-1},\ldots,x_{D}^{M},1\right)$ & $w_{1}x_{1}+\ldots w_{g}x_{1}x_{2}^{M-1}\ldots+w_{G}x_{D}^{M}+1=0$ & $G=\left(\begin{array}{c}
				D+M\\
				D
			\end{array}\right)-1$\tabularnewline

		\end{tabular}
		\par\end{centering}
	\caption{Examples of polynomial hypersurfaces with the corresponding polynomial
		expression, and the number data item requires to obtain the embedding.\label{tab:Examples-of-polynomial}}
\end{table}

Based on the point-hyperplane duality, equivalence relations for linear
classifiers on finite sets of data were established above. However,
a linear classifier is often too restrictive in practice, as many
problems require more complex decision boundaries. It is natural to
ask whether it is possible to extend the theory to non-linear classification.
This section explore a well-known concept in algebraic geometry, the
\emph{$M$-tuple Veronese embedding}, which allows the generalization
of the previous strategy for solving classification problem with \emph{hyperplane
	classifier }to problems involving \emph{hypersurface classifiers},
with a worst-case time complexity $O\left(N^{G}\right)$, where\emph{
}$G=\left(\begin{array}{c}
	D+M\\
	D
\end{array}\right)-1$, and $M$ is the degree of the polynomial defining the hypersurface.
If both $M$ and $D$ are fixed constants, this again gives a polynomial-time
algorithm for solving the 0-1 loss hypersurface classification problem.
\begin{theorem}
	\emph{Given variables $x_{1},\ldots x_{D}$ in space $\mathbb{R}^{D}$
		(which is isomorphic to the affine space $\mathbb{R}^{D}$ when ignoring
		the points at infinity \citep{cox1997ideals}), let $M_{0},M_{1},\ldots M_{G}$
		be all monomials of degree $M$ with variables $x_{0},x_{1},\ldots x_{D}$,
		where $G\leq\left(\begin{array}{c}
			D+M\\
			D
		\end{array}\right)-1$. Define an embedding $\rho_{M}:\mathbb{P}^{D+1}\to\mathbb{P}^{G}$
		which sends the point $\bar{x}=\left(1,x_{1},\ldots x_{D}\right)\in\mathbb{R}^{D+1}$
		to the point $\rho_{M}\left(\bar{x}\right)=\left(M_{0}\left(\bar{x}\right),M_{1}\left(\bar{x}\right),\ldots M_{G}\left(\bar{x}\right)\right)$,
		we refer to the data list $\rho_{M}\left(\mathit{xs}\right)=\left[\left(M_{0}\left(\bar{x}\right),M_{1}\left(\bar{x}\right),\ldots M_{G}\left(\bar{x}\right)\right)\mid x\leftarrow\mathit{xs}\right]$
		as the }embedded\emph{ }data list\emph{. The hyperplane classification
		over the embedded data lists $\rho_{M}\left(\mathit{xs}\right)$ is
		isomorphic to the polynomial hypersurface classification (defined
		by a degree $M$ polynomial) over the original data list $\mathit{xs}$.\label{thm:Polynomial-embedding.-Given}}
\end{theorem}
\begin{proof}
	First, we need to prove the existence of the hypersurface. For any
	$G$ distinct points $\mathit{s}$ in $\mathbb{R}^{D+1}$, there exist
	a polynomial hypersurface $S$ defined by a polynomial $P$ with degree
	at most $M$ such that $\mathit{s}$ lies on $S$. This is because
	the vector space $V$ of a polynomial $P$ in $\mathbb{R}\left[x_{1},\ldots x_{D}\right]$of
	degree $M$ has a dimension $\left(\begin{array}{c}
		D+W\\
		D
	\end{array}\right)$. For any data list $xs$ of size $G<\left(\begin{array}{c}
		D+M\\
		D
	\end{array}\right)$, the linear map from $V$ to $\mathbb{R}^{G}$ must has non-trivial
	kernel (the linear system has more variables than equations), thus
	the polynomial hypersurface $S$ must exist. Next, consider a hyperplane
	$\boldsymbol{w}^{T}\bar{y}=0$ defined over variables $\bar{y}=\left(1,y_{1},\ldots y_{G}\right)\in\mathbb{R}^{G+1}$,
	we have $\text{\ensuremath{\boldsymbol{w}^{T}\bar{y}}\ensuremath{\ensuremath{\geq}}0}$
	or $\text{\ensuremath{\boldsymbol{w}^{T}\bar{y}}<0}$ implies $\boldsymbol{w}^{T}\rho_{M}\left(\bar{x}\right)\geq0$
	or $\boldsymbol{w}^{T}\rho_{M}\left(\bar{x}\right)<0$, because the
	closure of the real numbers under polynomial operation, thus $M\left(\bar{x}\right)\in\mathbb{R}$
	if $\bar{x}\in\mathbb{R}^{D+1}$, for any monomial $M$.
\end{proof}
Theorem \ref{thm:Polynomial-embedding.-Given} show that the $M$-degree
polynomial hypersurface classification problem over data lists in
\emph{$\mathbb{R}^{D}$} is equivalent to a linear classification
problem in the embedded space $\mathbb{R}^{G}$.\emph{ }By analogy
with the linear classification result in Theorem \ref{0-1 loss linear classification thoerm},
we can formalize the 0-1 Loss Polynomial Hypersurface Classification
Theorem as follows.
\begin{theorem}
	\emph{0-1 loss polynomial hypersurface classification theorem. Consider
		a data list $\mathit{xs}$ of $N$ data points in $\mathbb{R}^{D}$
		in general position, along with their associated labels. Denote $E_{\text{0-1}}\left(\boldsymbol{w}\right)$
		as the 0-1 loss for evaluating normal vector $\boldsymbol{w}\in\mathbb{R}^{G}$
		with respect to data lists $\rho_{M}\left(\mathit{xs}\right)$. Then
		we have}
	
	\emph{
		\begin{equation}
			\mathit{minR}{}_{E_{\text{0-1}}}\left\{ min_{E_{\text{0-1}}}\left(\boldsymbol{w}_{s},-\boldsymbol{w}_{s}\right)\mid s\in\mathcal{S}_{\text{kcombs}}\left(G,\rho_{M}\left(\mathit{xs}\right)\right)\right\} \subseteq\mathit{minR}{}_{E_{\text{0-1}}}\left\{ \boldsymbol{w}\mid\boldsymbol{w}\in\mathbb{R}^{G}\right\} 
		\end{equation}
		where $\boldsymbol{w}_{s}\in\mathbb{R}^{G}$ denote as the normal
		vector determined by $G$ data points in $\mathcal{S}_{\text{kcombs}}\left(G,\mathit{xs}\right)$.
		\label{thm: 0-1-loss hypersurface classify}}
\end{theorem}

\subsubsection{Hypersurface decision trees are proper}

We now show that the hyperplane classification tree satisfies the
axioms of a proper decision tree, as verified by the following lemma.
\begin{lemma}
	\emph{Given a set of data $\mathit{xs}$ and list of $K$ hyperplanes
		$\mathit{hs}_{K}=\left[h_{1},h_{2},\ldots h_{K}\right]$ defined by
		normal vectors $ws=\left[w_{1},w_{2},\ldots w_{K}\right]$ in $\mathbb{R}^{D+1}$,
		where each hyperplane passes through exactly $D$ data points. Define
		the positive and negative regions of each hyperplane as }$h_{i}^{+}=\left\{ x\mid x\in\mathbb{R}^{D},\boldsymbol{w}_{i}^{T}\bar{x}\geq0\right\} $
	\emph{and }$h_{i}^{-}=\left\{ x\mid x\in\mathbb{R}^{D},\boldsymbol{w}_{i}^{T}\bar{x}<0\right\} $.\emph{
		Then, the decision tree constructed using these hyperplanes as splitting
		rules is a proper decision tree.}
\end{lemma}
\begin{proof}
	Axioms 1 and 2 follow directly from the definition of $h^{\pm}$.
	Axioms 3 and 4 hold because, for any hyperplane $h$ defined by a
	$D$-tuple of data points $\mathit{xs}_{D}$ we have
	\[
	\mathit{xs}_{D}\subseteq\bigcap_{p\in P}h_{p}^{\pm}\implies\mathit{xs}_{D}\subseteq h_{p}^{\pm},\forall p\in P
	\]
	and
	\[
	\mathit{xs}_{D}\subseteq h^{\pm}\implies\mathit{xs}_{D}\nsubseteq h^{\mp}.
	\]
	Thus axiom 3, 4 holds.
\end{proof}
By the equivalence between hyperplane and hypersurface, as established
in Theorem \ref{thm:Polynomial-embedding.-Given}, it follows immediately
that decision trees whose splitting rules are characterized by hypersurfaces
passing through $G$-combinations of data points are also proper.
We next show that the optimal decision tree constructed from these
hypersurfaces, with $G$ data points lying on them, is also optimal
among all decision trees constructed from general hypersurfaces in
\emph{$\mathbb{R}^{D}$.}
\begin{lemma}
	0-1 classification theorem for hypersurfaces decision tree model\emph{.
		Consider a data list $xs$ of $N$ data points of dimension $D$ in
		general position, along with their associated labels. The optimal
		decision tree constructed using $M$-degree polynomial hypersurfaces,
		each passing through $G$ points, achieves the same minimum 0-1 loss
		as the optimal decision tree constructed using arbitrary hypersurfaces
		in $\mathbb{R}^{D}$.}
\end{lemma}
\begin{proof}
	We prove this by induction. The base case, involving a single linear
	hypersurface, has already been established in Theorem \ref{thm: 0-1-loss hypersurface classify}.
	For the inductive step, assume two subtrees $t_{1}$ and $t_{2}$
	of size $i$ and $K-i-1$, respectively, are optimal for the corresponding
	subregions $\mathit{xs}^{+}$ and $\mathit{xs}^{-}$, determined by
	root $h$. We show that the tree $t=DN\left(t_{1},h,t_{2}\right)$
	is optimal with respect to root $h$. Since $t_{1}$ and $t_{2}$
	are optimal, any subtree of the same size constructed in $\mathit{xs}^{+}$
	and $\mathit{xs}^{-}$ will have a equal or greater 0-1 loss than
	$t_{1}$ and $t_{2}$. By definition of the objective function, we
	have 
	\[
	E\left(t^{\prime},xs\right)=E\left(t_{1}^{\prime},xs^{+}\right)+E\left(t_{2},xs^{-}\right)\geq E\left(t,xs\right)=E\left(t_{1},xs^{+}\right)+E\left(t_{2},xs^{-}\right)
	\]
	Thus, $t=DN\left(t_{1},h,t_{2}\right)$ is optimal for the fixed root
	$h$. Furthermore, for any arbitrary hypersurface defined by normal
	vector $\boldsymbol{w}\in\mathbb{R}^{G}$, there exists a hypersurface
	passing through $G$ points that induces the same partition $\mathit{xs}^{+}$
	and $\mathit{xs}^{-}$. Therefore, the optimal decision tree constructed
	using hypersurfaces through $G$ points attains the same 0-1 loss
	as the optimal tree constructed from arbitrary hypersurfaces.
\end{proof}
Although we assume that the data are in general position—which excludes
axis-parallel hyperplanes, as they do not satisfy the general position
assumption—the results also hold for axis-parallel hyperplanes, since
they can be infinitesimally shifted to become hyperplanes in general
position without affecting the predictions. Therefore, for axis-parallel
hyperplane splitting rule, a single data point is required to characterize
the hyperplane. By definition, an axis-parallel hyperplane, $x_{i}=c$
is uniquely determined by the $i$-th coordinate of any point lying
on it. That is, for any point $\left(x_{1},x_{2},\ldots,x_{D}\right)\in\mathbb{R}^{D}$,
the hyperplane parallel to the $i$-th axis passing through it is
uniquely determined by $x_{i}$. Since each data point has $D$ coordinates,
there are $N\times D$ possible axis-parallel splitting rules in total.

\subsection{Splitting rule generators\label{subsec:Splitting-rule-generator}}

Note that both $\mathit{odt}_{\text{size}}:\left[\mathcal{R}\right]\to\mathit{DTree}\left(\mathcal{R},\mathcal{D}\right)$
and $\mathit{odt}_{\text{depth}}:\mathcal{D}\to\mathit{DTree}\left(\mathcal{R},\mathcal{D}\right)$
cannot yet be directly applied to solve the optimal decision tree
problem in machine learning. This is because we have not explained
how to generate the input $\mathit{rs}:\left[\mathcal{R}\right]$
for $\mathit{odt}_{\text{size}}$ or the function $\mathit{gen}_{\mathit{splits}}:\mathcal{D}\to\left[\mathcal{R}\right]$
used in $\mathit{odt}_{\text{depth}}$. It turns out that both functions
are the same function. In this section, we describe how to construct
$\mathit{gen}_{\mathit{splits}}$ for solving various ML problems,
including axis-parallel, hyperplane, hypersurface, and mixed-splitting
ODT problems.

Specifically, the general paradigm for solving the size-constrained
optimal decision tree problem requires composing $\mathit{odk}_{\text{size}}\left(K\right)$
with $\mathit{gen}_{\mathit{splits}}^{M}$. In other words, we obtain
the following specification:
\begin{equation}
	\mathit{odt}_{\text{size}}\left(K\right)\circ\mathit{gen}_{\mathit{splits}}=\mathit{min}_{E_{\text{0-1}}}\circ\mathit{concatMap}_{\mathit{sodt}\left(\mathit{xs}\right)}\circ\mathit{kcombs}\left(K\right)\circ\mathit{gen}_{\mathit{splits}}\label{eq:specification of odt in ml}
\end{equation}
where $\mathit{gen}_{\mathit{splits}}$ is precisely the splitting
rule generator used in the definition of $\mathit{odk}_{\text{depth}}$.

In principle, as long as the rules $\mathit{rs}:\left[\mathcal{R}\right]$
generated by $\mathit{gen}_{\mathit{splits}}^{M}$ satisfy the decision
tree axioms \ref{def:Axioms-for Proper-DT}, $\mathit{odt}_{\text{size}}\left(K\right)$
can solve the size-constrained ODT problem exactly with respect to
$\mathit{rs}$. This \textbf{modularized} design provides substantial
flexibility for generalizing our algorithm to different problems.
In practice, we need only redefine $\mathit{gen}_{\mathit{splits}}^{M}$
to adapt the program to new problems, while the main program $\mathit{odt}_{\text{size}}\left(K\right)$
remains unchanged, which demonstrates one of the primary advantage
of adopting a general formalism in algorithm design.

For example, in the axis-parallel decision tree problem, $\mathit{gen}_{\mathit{splits}}^{0}:\mathcal{D}\to\left[\mathcal{H}^{0}\right]$
can be defined as the set of all possible axis-parallel hyperplanes
passing through each data point in $\mathit{xs}:\mathcal{D}$. Thus,
there are $N\times D$ such hyperplanes in total. More concretely,
given a dataset $\mathit{xs}:\mathcal{D}$ in $\mathbb{R}^{D}$,

\begin{equation}
	\mathit{gen}_{\mathit{splits}}^{0}\left(\mathit{xs}\right)=\left[x_{i}\mid x_{i}\leftarrow x,x\leftarrow\mathit{xs}\right],
\end{equation}
By substituting $\mathit{gen}_{\mathit{splits}}^{0}$ into \ref{def:Axioms-for Proper-DT}
\[
\mathit{aodt}_{K}=\mathit{odt}_{\text{size}}\left(K\right)\circ\mathit{gen}_{\mathit{splits}}^{0},
\]
where $\mathit{aodt}$ is short for ``axis-parallel optimal decision
tree problem,'' which yields an efficient solution for solving the
classical optimal decision tree problem in machine learning.

Constructing generators for more complex splitting rules is more involved,
but their structured form allows the composed function $\mathit{kcombs}\left(K\right)\circ\mathit{gen}_{\mathit{splits}}$
to be \emph{fused} into a more efficient algorithm. In this section,
we detail the construction of fused programs for both r $\mathit{gen}_{\mathit{splits}}^{M}$—the
generator of\emph{ hypersurface splitting rules}—and $\mathit{gen}_{\mathit{splits}}^{m}$
($0\leq m\leq M$) —the generator of \emph{mixed splitting rules}
that combine hypersurface, hyperplane, and axis-parallel rules.

\subsubsection{Generating fixed polynomial degree hypersurface splitting rules}

\begin{algorithm}
	\footnotesize
	\begin{enumerate}
		\item \textbf{Input}: $\mathit{xs}$: input data list of length $N$; $K$:
		Outer combination size (nested combination); $G$: inner-combination
		size (ordinary combination), determined by polynomial degree $M$
		of hypersurface
		\item \textbf{Output}: Array of $\left(K,G\right)$-nested-combinations\emph{}\\
		\item $\mathit{css}=\left[\left[\left[\:\right]\right],\left[\right]^{k}\right]$
		// initialize combinations
		\item $\mathit{ncss}=\left[\left[\left[\:\right]\right],\left[\right]^{k}\right]$
		// initialize nested-combinations
		\item $\mathit{asgn}^{+},\mathit{asgn}^{-}=\mathit{empty}\left(\left(\begin{array}{c}
			N\\
			D
		\end{array}\right),N\right)$ // initialize $\mathit{asgn}^{+},\mathit{asgn}^{-}$ as two empty
		$\left(\begin{array}{c}
			N\\
			D
		\end{array}\right)\times N$ matrix
		\item \textbf{for} $n\leftarrow\mathit{range}\left(0,N\right)$ \textbf{do}:
		//\textbf{$\mathit{range}\left(0,N\right)=\left[0,1,\ldots,N-1\right]$}
		\item $\quad$\textbf{for} $j\leftarrow\mathit{reverse}\left(\mathit{range}\left(G,n+1\right)\right)$
		\textbf{do}:
		\item $\quad$$\quad$$\mathit{updates}=\mathit{reverse}\left(\mathit{map}\left(\cup\rho_{M}\left(\mathit{xs}\right)\left[n\right],\mathit{css}\left[j-1\right]\right)\right)$
		\item $\quad$$\quad$$\mathit{css}\left[j\right]=\mathit{css}\left[j\right]\cup\mathit{updaets}$
		// update $\mathit{css}$ to generate combinations in revolving door
		ordering,
		\item $\quad$$\mathit{asgn}^{+},\mathit{asgn}^{-}=\mathit{genModels}\left(\mathit{css}\left[G\right],\mathit{asgn}^{+},\mathit{asgn}^{-}\right)$
		// use $G$-combination to generate the positive prediction and negative
		prediction of each hyperplanes and stored in $\mathit{asgn}^{+},\mathit{asgn}^{-}$
		\item $\quad$$C_{1}=\left(\begin{array}{c}
			n\\
			G-1
		\end{array}\right)$, $C_{2}=\left(\begin{array}{c}
			n\\
			G
		\end{array}\right)$
		\item $\quad$\textbf{for} $i\leftarrow\mathit{range}\left(C_{1},C_{2}\right)$
		\textbf{do}:
		\item $\quad$$\quad$$\quad$\textbf{for} $k\leftarrow\mathit{reverse}\left(\mathit{range}\left(K,i+1\right)\right)$
		\textbf{do}:
		\item $\quad$$\quad$$\quad$$\quad$$\mathit{ncss}\left[k\right]=\mathit{map}\left(\cup\left[i\right],\mathit{ncss}\left[k-1\right]\right)\cup\mathit{ncss}\left[k\right]$
		// update nested combinations
		\item $\quad$$\mathit{ncss}\left[K\right]=\left[\:\right]$
		\item \textbf{return} $\mathit{ncss}\left[K\right]$
	\end{enumerate}
	\caption{$\mathit{nestedCombs}$\label{alg:nested combination generator -sequential}}
\end{algorithm}

Due to the isomorphism between hyperplane classification and hypersurface
classification established in Theorem \ref{thm:Polynomial-embedding.-Given},
we can define the optimal hypersurface decision tree algorithm $\mathit{hodt}^{M}:\mathbb{N}\times\mathcal{D}\to\mathit{DTree}\left(\mathcal{H}^{M},\mathcal{D}\right)$
as the composition of $\mathit{odk}_{\text{size}}\left(K\right)$
with a \emph{hypersurface splitting rule generator} $\mathit{gen}_{\mathit{splits}}^{M}$,
together with an embedding function $\mathit{embed}_{M}$ (the programmatic
definition of $\rho_{M}$). In other words, we obtain the following
specification:
\begin{equation}
	\begin{aligned}\mathit{hodt}_{K}^{M} & =\mathit{odk}_{\text{size}}\left(K\right)\circ\mathit{gen}_{\mathit{splits}}^{M}\circ\mathit{embed}_{M}\\
		& =\mathit{min}_{E_{\text{0-1}}}\circ\mathit{concatMap}_{\mathit{sodt}\left(\mathit{xs}\right)}\circ\mathit{kcombs}\left(K\right)\circ\mathit{gen}_{\mathit{splits}}^{M}\circ\mathit{embed}_{M}
	\end{aligned}
	\label{eq: hodt specification}
\end{equation}

As established in Theorems \ref{0-1 loss linear classification thoerm}
and \ref{thm: 0-1-loss hypersurface classify}, both hyperplane and
hypersurface decision boundaries can be exhaustively enumerated by
generating $G$-combinations of data points. Consequently, the search
space of hypersurface decision boundaries can be represented as $\mathcal{S}_{\text{kcombs}}\left(G,\mathit{ds}\right)$,
and the generator for this search space can be implemented directly
using a combination generator: 
\begin{equation}
	\mathit{gen}_{\mathit{splits}}^{M}\left(\mathit{ds}\right)=!\left(G\right)\circ\mathit{kcombs}\left(G,\mathit{ds}\right),
\end{equation}
where $\mathit{kcombs}\left(G,\mathit{ds}\right):\mathbb{N}\times\mathcal{D}\to\left[\left[\mathcal{D}\right]\right]$
generate all combinations with size smaller than $G$, and $!\left(G\right)$
extract combinations of size exactly $G$. Hence, the composition
\begin{equation}
	\mathit{kcombs}\left(K\right)\circ\mathit{gen}_{\mathit{splits}}^{M}=\mathit{kcombs}\left(K\right)\circ!\left(G\right)\circ\mathit{kcombs}\left(G,\mathit{ds}\right),
\end{equation}
corresponds to a combination-of-combinations generator, i.e., \emph{nested
	combinations}, denote as $\left(K,G\right)$-nested combinations.

Once $G$-combinations are used to generate $K$-combinations, these
intermediate combinations are no longer needed and can be discarded.
By introducing an additional set-empty operation, $\mathit{setEmpty}\left(G,\mathit{xs}\right)$
which sets the $G$th element of the list $\mathit{xs}$ to an empty
list, we can modify the nested combination generator $\mathit{kcombs}\left(K\right)\circ\mathit{gen}_{\mathit{splits}}^{M}$
by introducing $\mathit{setEmpty}$. This modified generator, denoted
as $\mathit{nestedCombs}\left(K,G\right)$, can be compositionally
specified as:
\begin{equation}
	\begin{aligned}\mathit{nestedCombs}\left(K,G\right) & =\left\langle \mathit{setEmpty}\left(G\right)\circ\mathit{kcombs}\left(G\right),\mathit{kcombs}\left(K\right)\circ!\left(G\right)\circ\mathit{kcombs}\left(G\right)\right\rangle \\
		& =\left\langle \mathit{setEmpty}\left(G\right),\mathit{kcombs}\left(K\right)\circ!\left(G\right)\right\rangle \circ\mathit{kcombs}\left(G\right)
	\end{aligned}
	\label{eq: nested-combiantion specification}
\end{equation}
where $\left\langle f,g\right\rangle \left(x\right)=\left(f\left(x\right),g\left(x\right)\right)$,
and $G=1$ if $M=0$ and $G=\left(\begin{array}{c}
	M+D\\
	D
\end{array}\right)-1$ if $M\geq1$.

Then we can rewrite \ref{eq: hodt specification} by following 
\begin{align*}
	\mathit{hodt}_{K}^{M} & =\mathit{min}_{E_{\text{0-1}}}\circ\mathit{concatMap}_{\mathit{sodt}\left(\mathit{xs}\right)}\circ\mathit{kcombs}\left(K\right)\circ\mathit{gen}_{\mathit{splits}}^{M}\circ\mathit{embed}_{M}\\
	& \subseteq\mathit{min}_{E_{\text{0-1}}}\circ\mathit{concatMap}_{\mathit{sodt}\left(\mathit{xs}\right)}\circ\mathit{nestedCombs}\left(K,G\right)\circ\mathit{embed}_{M}
\end{align*}
However, the naive implementation \ref{eq: nested-combiantion specification}
is inefficient due to the overhead of storing intermediate results:
$\mathit{kcombs}\left(G\right)$ produces $O\left(N^{G}\right)$ combinations
for an input of size $N$, which is both memory-intensive and computationally
costly. Additionally, its non-recursive nature prevents the application
of thinning techniques.

Our goal is now to fuse the two-step definition of $\mathit{nestedCombs}\left(K,G\right)$,
given in \ref{eq: nested-combiantion specification} into a single
recursive function. \citet{he2025ROF} presents both divide-and-conquer
(D\&C) and sequential versions for the fused program of $\mathit{nestedCombs}$.
While the D\&C version is more easy to be parallelized, it complicates
element ordering. Since ordering is critical in our applications—where
combinations represent hyperplanes that encode essential predictive
information—we adopt the sequential version of $\mathit{nestedCombs}$
developed by \citet{he2025ROF}. The pseudo-code for this sequential
version is shown in Algorithm \ref{alg:nested combination generator -sequential}.

One source of efficiency in $\mathit{nestedCombs}$ is the representation
of inner combinations as integers ranging from $1$ to $\left(\begin{array}{c}
	N\\
	G
\end{array}\right)$. This is achieved through the clever use of the $\mathit{reverse}$
function, which reverses the order of elements in a list. With this
modification, combinations are arranged in\emph{ revolving-door order}
\citet{he2025CGs}, avoiding explicit representation of combinations.
Instead, their ``ranks'' are used, allowing the information of each
hyperplane to be indexed in $O\left(1\right)$ time using the integer
representation.

\subsubsection{Generating mixed polynomial degree hypersurface splitting rules}

\begin{algorithm}
	\footnotesize
	\begin{enumerate}
		\item \textbf{Input}: $\mathit{xs}$: input data list of length $N$; $K$:
		Outer combination size (nested combination); $\mathit{Gs}=\left[G_{1},G_{2},\ldots G_{l}\right]$:
		a list of inner-combination sizes (ordinary combination), corresponds
		to a list of polynomial degree $\mathit{Ms}=\left[M_{1},M_{2},\ldots M_{l}\right]$
		\item \textbf{Output}: List of $\left(K,Gs\right)$-nested-combinations
		arrays\emph{}\\
		\item $\mathit{css}=\left[\left[\left[\:\right]\right],\left[\right]^{k}\right]$
		// initialize combinations
		\item $\mathit{ncss}=\left[\left[\left[\:\right]\right],\left[\right]^{k}\right]$
		// initialize nested-combinations
		\item $\mathit{asgn}^{+},\mathit{asgn}^{-}=\mathit{empty}\left(\left(\begin{array}{c}
			N\\
			G_{1}
		\end{array}\right)+\left(\begin{array}{c}
			N\\
			G_{2}
		\end{array}\right)+\ldots+\left(\begin{array}{c}
			N\\
			G_{l}
		\end{array}\right),N\right)$ // initialize $\mathit{asgn}^{+},\mathit{asgn}^{-}$ as two empty
		$\left(\begin{array}{c}
			N\\
			D
		\end{array}\right)\times N$ matrix
		\item \textbf{for} $n\leftarrow\mathit{range}\left(0,N\right)$ \textbf{do}:
		//\textbf{$\mathit{range}\left(0,N\right)=\left[0,1,\ldots,N-1\right]$}
		\item $\quad$$\mathit{candInds}=\left[\:\right]$
		\item $\quad$\textbf{for} $j\leftarrow\mathit{reverse}\left(\mathit{range}\left(G_{l},n+1\right)\right)$
		\textbf{do}:
		\item $\quad$$\quad$$\mathit{updates}=\mathit{reverse}\left(\mathit{map}\left(\cup\rho_{M}\left(\mathit{xs}\right)\left[n\right],\mathit{css}\left[j-1\right]\right)\right)$
		\item $\quad$$\quad$$\mathit{css}\left[j\right]=\mathit{css}\left[j\right]\cup\mathit{reverse}\left(\mathit{map}\left(\cup\rho_{M}\left[n\right],\mathit{updates}\right)\right)$
		// update $\mathit{css}$ to generate combinations in revolving door
		ordering,
		\item $\quad$$\quad$\textbf{if} $G\in\mathit{Gs}$ \textbf{do}:
		\item $\quad$$\quad$$\quad$$\mathit{asgn}^{+},\mathit{asgn}^{-}=\mathit{genModels}\left(\mathit{updates},\mathit{asgn}^{+},\mathit{asgn}^{-}\right)$
		// use $G$-combination to generate the positive prediction and negative
		prediction of each hyperplanes
		\item $\quad$$\quad$$\quad$$C_{1}=\left(\begin{array}{c}
			n\\
			G-1
		\end{array}\right)$, $C_{2}=\left(\begin{array}{c}
			n\\
			G
		\end{array}\right)$
		\item $\quad$$\quad$$\quad$\textbf{for} $G^{\prime}\leftarrow\mathit{Gs}$
		\textbf{and $G^{\prime}<G$ do}:
		\item $\quad$$\quad$$\quad$$\quad$$C_{1}=C_{1}+\left(\begin{array}{c}
			N\\
			G^{\prime}
		\end{array}\right)$, $C_{2}=C_{2}+\left(\begin{array}{c}
			N\\
			G^{\prime}
		\end{array}\right)$
		\item $\quad$$\quad$$\quad$$\mathit{candInds}=\mathit{candInds}+\!\!+\mathit{range}\left(C_{1},C_{2}\right)$
		\item $\quad$\textbf{for} $i\leftarrow\mathit{candInds}$ \textbf{do}:
		\item $\quad$$\quad$$\quad$\textbf{for} $k\leftarrow\mathit{reverse}\left(\mathit{range}\left(K,i+1\right)\right)$
		\textbf{do}:
		\item $\quad$$\quad$$\quad$$\quad$$\mathit{ncss}\left[k\right]=\mathit{map}\left(\cup\left[i\right],\mathit{ncss}\left[k-1\right]\right)\cup\mathit{ncss}\left[k\right]$
		// update nested combinations
		\item $\quad$$\mathit{ncss}\left[K\right]=\left[\:\right]$
		\item \textbf{return} $\mathit{ncss}\left[K\right]$
	\end{enumerate}
	\caption{$\mathit{nestedCombs}^{\prime}$\label{alg:nested combination generator with multiple inner -sequential}}
\end{algorithm}

SiSince axis-parallel hyperplanes, general hyperplanes, and hypersurfaces
can all be represented as combinations of data items, it is natural
to ask whether we can construct a single generator capable of producing
all these rules. A decision tree built from such a rule list would
incorporate mixed splitting rules, thereby offering much greater flexibility.

Consider a list of polynomial degrees $\mathit{Ms}=\left[M_{1},M_{2},\ldots M_{l}\right]$,
which corresponds to a list of integers $\mathit{Gs}=\left[G_{1},G_{2},\ldots G_{l}\right]$
ordered in ascending fashion ($G_{1}<G_{2}\ldots<G_{l}$), with $G_{l}$
the largest. We can specify $\mathit{modt}^{\mathit{\mathit{Ms}}}:\mathbb{N}\times\mathcal{D}\to\mathit{DTree}\left(\mathcal{H}^{Ms},\mathcal{D}\right)$
(where $\mathcal{H}^{Ms}=\mathcal{H}^{M_{1}}\cup\mathcal{H}^{M_{1}}\ldots\cup\mathcal{H}^{M_{1}}$
) as
\[
\mathit{modt}_{M}=\mathit{min}_{E_{\text{0-1}}}\circ\mathit{concatMap}_{\mathit{sodt}\left(\mathit{xs}\right)}\circ\mathit{kcombs}\left(K\right)\circ\mathit{gen}_{\mathit{splits}}^{\mathit{Ms}}\circ\mathit{embed}_{\mathit{Ms}}
\]
where $\mathit{gen}_{\mathit{splits}}^{\mathit{Ms}}:\mathcal{D}\to\left[\mathcal{H}^{Ms}\right]$
is the generator that produces all possible splitting rules in $\mathcal{H}^{Ms}$.

Similar to the hypersurface generator, the composed function $\mathit{kcombs}\left(K\right)\circ\mathit{gen}_{\mathit{splits}}^{\mathit{Ms}}$
can also be fused. A key observation about Algorithm \ref{alg:nested combination generator -sequential}
is that, it can generate not only $G$-combinations but all $G^{\prime}$-combinations
for $0\leq G^{\prime}\le G$. However, the nested combination generator
$\mathit{nestedCombs}\left(K,G\right)$ uses only the $G$-combinations
and ignores smaller $G^{\prime}$-combination ($G^{\prime}<G$). To
accomodate this, we need to modify specification \ref{eq: nested-combiantion specification}
to the following 
\begin{equation}
	\mathit{nestedCombs}^{\prime}\left(K,\mathit{Gs}\right)=\left\langle \mathit{setEmpty}\left(G_{l}\right),\mathit{kcombs}\left(K\right)\cdot!!\left(\mathit{Gs}\right)\right\rangle \cdot\mathit{kcombs}\left(G\right),\label{eq: nested-combs-multi-inner specification}
\end{equation}
where $!!\left(\mathit{Gs},\mathit{xss}\right)=\mathit{concat}\left[!\left(G,\mathit{xss}\right)\mid G\leftarrow\mathit{Gs}\right]$
and $\mathit{Gs}=\left[G_{1},G_{2},\ldots G_{l}\right]$ is an ascending
list of integers ($G_{1}<G_{2}\ldots<G_{l}$), with $G_{l}$ being
the largest, corresponds to a list of polynomial degree $\mathit{Ms}=\left[M_{1},M_{2},\ldots M_{l}\right]$.
Specification \ref{eq: nested-combs-multi-inner specification} selects
several $G$-combinations, such that $G\in\mathit{Gs}$, and constructs
$K$-nested combinations from them.

As usual, the naive specification \ref{eq: nested-combs-multi-inner specification}
is inefficient. \citet{he2025CGs} provide an efficient solution by
defining the recursive pattern of $\mathit{nestedCombs}^{\prime}\left(K,\mathit{Gs}\right)$
as 
\begin{equation}
	\begin{aligned} & \bigg<\mathit{setEmpty}\left(G_{l}\right)\cdot\mathit{KcombsAlg}\left(K\right)\cdot\mathit{Ffst},\\
		& \quad\quad\mathit{\mathit{\mathit{KcombsAlg}}\left(K\right)}\cdot\left\langle \mathit{Kcombs}\left(K\right)\cdot!!\left(\mathit{Gs}\right)\cdot convol_{\text{new}}\left(\cup,\circ\right)\cdot\mathit{Ffst},\mathit{KcombsAlg}\left(K\right)\cdot\mathit{Fsnd}\right\rangle \bigg>
	\end{aligned}
\end{equation}
where the detailed definitions for $\mathit{Fst}$, $\mathit{Fsnd}$,
$\mathit{setEmpty}$, $\mathit{KcombsAlg}$ and $\mathit{setEmpty}$
are provided in \citet{he2025CGs} and are omitted here due to space
constraints. Specifically, $convol_{\text{new}}$ is defined as
\begin{equation}
	\begin{aligned}
		& \mathrm{convol}_{\text{new}}\left(\cup, \circ, \mathit{css}_{1}, \mathit{css}_{2}\right) = \\
		& \quad \left[ \mathit{concat}\left( \mathit{zipWith}\left( \circ, \mathit{init}\left( \mathit{tail}\left( \mathinner{css} \right) \right), \mathit{init}\left( \mathit{tail}\left( \mathit{css}_{2} \right) \right) \right) \right) \mid \mathinner{css} \leftarrow \mathit{inits}\left( \mathit{css}_{1} \right) \right].
	\end{aligned}
\end{equation}
The sequential definition of $\mathit{nestedCombs}^{\prime}$ is provided
in Algorithm \ref{alg:nested combination generator with multiple inner -sequential}.
Unlike the original $\mathit{nestedCombs}$, which represents inner
combinations using integers ranging from $1$ to $\left(\begin{array}{c}
	N\\
	G
\end{array}\right)$. $\mathit{nestedCombs}^{\prime}$must account for chains of $G_{1}$-/$G_{2}$...-ombinations
connected consecutively. For a $G$-combinations with rank $r$, its
index in the algorithm is $i=r+\mathit{sum}\left[\left(\begin{array}{c}
	N\\
	G^{\prime}
\end{array}\right)\mid G\in\mathit{Gs},G^{\prime}\le G\right]$, which corresponds to the calculation in lines 13–16 of the algorithm.

Composing $\mathit{nestedCombs}^{\prime}\left(K,\mathit{Gs}\right)$
with $\mathit{min}_{E_{\text{0-1}}}\circ\mathit{concatMap}_{\mathit{sodt}\left(\mathit{xs}\right)}$,
we obtain the algorithm for solving the \emph{optimal decision tree
	problem with mixed splitting rules }(MODT)\emph{
	\begin{equation}
		\mathit{modt}_{K}^{\mathit{Ms}}=\mathit{min}_{E_{\text{0-1}}}\circ\mathit{concatMap}_{\mathit{sodt}\left(\mathit{xs}\right)}\circ\mathit{kcombs}\left(K\right)\circ\mathit{nestedCombs}^{\prime}\left(K,\mathit{Gs}\right)\circ\mathit{embed}_{\mathit{Ms}}
	\end{equation}
}

\subsection{Complexity of solving optimal decision tree problems\label{subsec:Complexity-of ODT problems}}

\subsubsection{Complexity of size-constrained trees}

We present a theoretical analysis of the computational complexity
of size-constrained decision trees. The complexity of depth-constrained
trees can be upper-bounded by noting that a tree of depth $d$ has
at most $2^{d+1}-1$ splitting rules.

\paragraph{Complexity of the hyperplane decision tree problem}

The computational complexity of the optimal hypersurface decision
tree problem is relatively straightforward to analyze. There are $O\left(N^{GK}\right)$
possible combinations of degree $M$ hypersurfaces. The subprogram
$\mathit{sodt}$ has a worst-case complexity of $O\left(K!\right)$.
Thus, solving the HODT problem requires
\begin{equation}
	O\left(N^{G}\times D^{3}\times N+K!\times N\times N^{GK}\right)=O\left(N^{GK+1}\right).
\end{equation}
where $O\left(N^{G}\times D^{3}\times N\right)$ represents the time
required to compute hyperplane predictions. Although this complexity
appears daunting, Part II demonstrates that the actual complexity
of the hypersurface decision tree problem is significantly lower.

Furthermore, the average-case complexity of $\mathit{sodt}_{\text{rec}}$
is substantially lower than its theoretical upper bound. The worst-case
complexity of $\mathit{sodt}_{\text{rec}}$ occurs only when the tree
forms a ``chain'' structure, which, as shown in the empirical analysis
in Part II, is a rare occurrence.

\paragraph{Complexity of the axis-parallel decision tree problem}

For the axis-parallel decision tree (AODT) problem, there are $O\left(N\times D\right)$
possible splits defined by axis-parallel hyperplanes. Consequently,
the complexity of the AODT problem is
\begin{equation}
	O\left(N\times D+K!\times N\times\left(N\times D\right)^{K}\right)=O\left(N^{K}\right).
\end{equation}
Notably, when $K$ splitting rules are fixed, both hypersurface and
axis-parallel decision trees have a worst-case complexity bounded
by $O\left(K!\right)$. However, the number of feasible trees in the
hypersurface case is significantly smaller than in the axis-parallel
case. This difference arises because, in axis-parallel decision trees,
every pair of hyperplanes is \emph{mutually ancestral }(a pair of
hyperplanes is mutually ancestral if both $h_{i}\left(\swarrow\vee\searrow\right)h_{j}$
and $h_{j}\left(\swarrow\vee\searrow\right)h_{i}$ are viable, a case
that rarely occurs in hypersurface decision trees).

\paragraph{Complexity of the decision tree problem over binary feature data}

The complexity of decision trees with binary feature data differs
significantly. Due to axiom 4, if $h_{1}\swarrow h_{2}$ or $h_{1}\searrow h_{2}$,
then $h_{1}\searrow h_{2}$ or $h_{1}\swarrow h_{2}$ is impossible,
i.e. if $h_{2}$ is in the left branch of $h_{1}$, it cannot lie
in the right branch of it.

In contrast, for binary feature data, hyperplanes represent the selection
or non-selection of features, allowing any pair of branch nodes to
be \emph{mutually ancestral} and \emph{symmetric}. Symmetry implies
that any branch node can serve as either the left or right child of
the root. For example, when selecting features $x_{i}$ and $x_{j}$,
both $x_{i}\swarrow x_{j}$ and $x_{i}\searrow x_{j}$ are possible
when $x_{i}$ is the root, and $x_{j}\swarrow x_{i}$ and $x_{j}\searrow x_{i}$
are possible with $x_{j}$ as root. This holds because, in all previous
cases, hyperplanes are characterized by data points, and similarly,
in the axis-parallel tree case, only \textbf{one} data item is needed
to define each hyperplane, and this item always lies on one side of
another hyperplane. However, for binary feature data, this method
of characterization splitting rules as data points are no longer applicable.
In the case of binary feature data, it is only necessary to consider
whether a particular feature is selected or not and there are $K$
features in total used to define decision trees of size $K$. Hence,
for binary feature data, the number of possible decision trees depends
not only on the shape of the tree (which can be counted using the
Catalan number) but also the labels.

Therefore, assume $D$ is much larger than $K$, the worst-case complexity
for the decision tree problem with binary feature data is
\begin{equation}
	O\left(\left(\begin{array}{c}
		D\\
		K
	\end{array}\right)+K!\times\mathrm{Catalan}\left(K\right)\times\left(\begin{array}{c}
		D\\
		K
	\end{array}\right)\right)=O\left(D^{K}\right),
\end{equation}
where $\mathrm{Catalan}\left(n\right)$ is the Catalan number of $n$.

\subsubsection{Complexity of depth-constrained trees}

The combinatorial complexity of the depth-constrained ODT problem
is challenging to compute precisely. A key observation is that, for
arbitrary decision tree of depth-$d$ decision tree it should contains
at least $d$ internal nodes (splitting rules). Consequently, the
search space of depth-$d$ decision trees with respect to data list
$\mathit{xs}$, denoted $\mathcal{S}_{\text{depth}}\left(d,\mathit{xs}\right)$,
is at least as large as the search space of decision trees with $d$
splitting rules, $\mathcal{S}_{\text{size}}\left(d,\mathit{xs}\right)$.
In other words, we have 
\begin{equation}
	\mathcal{S}_{\text{depth}}\left(d,\mathit{xs}\right)\geq\mathcal{S}_{\text{size}}\left(d,\mathit{unique}\left(\mathit{gen}_{\mathit{splits}}^{0}\left(\mathit{xs}\right)\right)\right).
\end{equation}
where $\mathit{unique}\left(\mathit{gen}_{\mathit{splits}}^{0}\left(\mathit{xs}\right)\right)$
denote the all possible \emph{unique} splitting rules generated by
$\mathit{gen}_{\mathit{splits}}^{0}\left(\mathit{xs}\right)$, as
there might have duplicate splitting rules (for categorical data,
distinct data points may have same feature values).

For the axis-parallel decision tree problem, any pair of splitting
rules $r_{i}$ and $r_{j}$ defined by two data points $x_{i}$ and
$x_{j}$, satisfies both $r_{i}\swarrow r_{j}$ and $r_{i}\searrow r_{j}$.
Consequently, by Axiom 4, the relation $r_{i}\overline{\left(\swarrow\vee\searrow\right)}r_{j}$
does not hold for any pair of splitting rules. This implies that any
$d$-combination of splitting rules can construct at least one valid
decision tree, thus 
\begin{equation}
	\mathcal{S}_{\text{size}}\left(d,\mathit{unique}\left(\mathit{gen}_{\mathit{splits}}^{0}\left(\mathit{xs}\right)\right)\right)\geq\mathcal{S}_{\text{kcombs}}\left(d,\mathit{unique}\left(\mathit{gen}_{\mathit{splits}}^{0}\left(\mathit{xs}\right)\right)\right).
\end{equation}
Therefore, the following inequality holds:

\begin{equation}
	\begin{aligned}
		& \mathcal{S}_{\text{depth}}\left(d, \mathit{xs}\right) \geq \mathcal{S}_{\text{size}}\left(d, \mathit{unique}\left(\mathit{gen}_{\mathit{splits}}^{0}\left(\mathit{xs}\right)\right)\right) \geq \\
		& \quad \mathcal{S}_{\text{kcombs}}\left(d, \mathit{unique}\left(\mathit{gen}_{\mathit{splits}}^{0}\left(\mathit{xs}\right)\right)\right) = \left(\begin{array}{c}
			\left|\mathit{unique}\left(\mathit{gen}_{\mathit{splits}}^{0}\left(\mathit{xs}\right)\right)\right| \\
			d
		\end{array}\right)
	\end{aligned}
\end{equation}
As discussed, for axis parallel decision tree problem, $\left|\mathit{unique}\left(\mathit{gen}_{\mathit{splits}}\left(\mathit{xs}\right)\right)\right|$
is the sum of all distinct features with respect to the training data
list $\mathit{xs}$, which is at most $N\times D$ for a size $N$
data list in $\mathbb{R}^{D}$.

\paragraph{Remarks on Quant-BnB algorithm}

In the original source code of \citet{mazumder2022quant}'s Quant-BnB
algorithm, there is a function $\texttt{count\_remain\_3D}$, which
computes the combinatorial complexity of the search space, i.e., the
number of remaining trees in each iteration. The initial call to $\texttt{count\_remain\_3D}$,
calculates this complexity directly from the input data without bounding
parameters. For almost all datasets we have tested, $\texttt{count\_remain\_3D}$
computed results that is \emph{strictly smaller than} $\left(\begin{array}{c}
	\left|\mathit{unique}\left(\mathit{gen}_{\mathit{splits}}^{0}\left(\mathit{xs}\right)\right)\right|\\
	d
\end{array}\right)$.

For instance, in the $\texttt{bidding}$ dataset from \citet{mazumder2022quant}'s
GitHub repository, we observed the numbers of unique values per dimension
as $\mathit{l}=\left[452,377,3,4680,49,22,4589,72,5\right]$ for this
nine-dimensional dataset. Thus, the total number of distinct splitting
rules is $\mathit{unique}\left(\mathit{gen}_{\mathit{splits}}^{0}\left(\mathit{xs}\right)\right)=\mathit{sum}\left(\mathit{l}\right)=10249$.
A depth-3 decision tree should contain at least three splitting rules,
so the search space should contain at least $\left(\begin{array}{c}
	10249\\
	3
\end{array}\right)=179376727124$ possible decision trees. However, the first call to $\texttt{count\_remain\_3D}$
(before any search-space reduction) reports ``$\texttt{Total number of trees = 3781512036}$''.
This contradicts our lower bound, since every $3$-combination of
splitting rule should construct at least one possible decision tree
with depth smaller than 3.

Our analysis suggests a potential discrepancy in the algorithmic approach
used by \citet{mazumder2022quant} for solving the AODT problem, or
it may indicate that their work addresses a specialized variant of
the ODT problem. The lack of clarity in their problem definition makes
it challenging to verify their claims. Further clarification of their
counting formula would be valuable to confirm the validity of their
approach and advance the understanding of this problem.

\section{Conclusion\label{sec:Conclusion}}

The primary distinction between our research and other studies on
ODT algorithms using combinatorial methods lies in philosophy: we
argue that deriving optimal algorithms requires the highest level
of rigor and clarity. Consequently, in this first of two papers, we
introduce a generic algorithmic framework for deriving optimal algorithms
directly from problem specifications. We begin by establishing a formal
standard for designing optimal algorithms, identifying several key
principles that highlight the level of rigor required for problem
definitions in the study of optimal algorithms. These principles resemble
the notion of a ``standard form'' in the study of MIP solvers. Importantly,
when a problem specification satisfies these principles, proving optimality
becomes straightforward. This is analogous to general-purpose MIP
solvers: once a problem is expressed in a standardized form, the solver
can compute the solution directly, while the challenging part lies
in producing the appropriate specification.

To demonstrate the importance and effectiveness of these standards,
we present a concrete example by outlining the algorithmic and geometric
foundations for solving the ODT problem. In the algorithmic section,
we provide four generic and formal definitions of optimal decision
tree problems using the ``proper decision tree'' datatype, a recursive
datatype with four associated axioms. Building on this, we develop
four formal definitions of the ODT problem over arbitrary splitting
rules and objective functions within a specified form. Furthermore,
these formulations yield four solutions to the ODT problem: we prove
the existence of a dynamic programming solution for two and the non-existence
of such a solution for the remaining two. To the best of our knowledge,
these definitions and their corresponding solutions are both generic
and novel.

Note that in this part, we have not yet presented a concrete implementation
of any algorithm for solving optimal decision tree problems. Although
most algorithms presented in this part are ready for implementation,
a critical assumption remains unaddressed—the generation of the ancestry
relation matrix $\boldsymbol{K}$ for a given $K$ splitting rules.
In the discussion of $\mathit{sodt}_{\text{rec}}$, we assume $\boldsymbol{K}$
is pre-stored in memory; however, generating it actually requires
an additional computational process. However, the construction of
the ancestry relation matrix is problem-dependent, which is why we
have not yet defined it, as our goal is to provide a generic solution
rather than algorithms for special cases.

Indeed, in the second paper (Part II) of this series, one of the central
focuses is to develop an incremental algorithm for efficiently constructing
the ancestry relation matrix for hypersurface splitting rules, which
lays the foundation for solving the hypersurface optimal decision
tree problem. Building on the algorithmic and geometric foundations
established here, Part II will present the first two algorithms designed
to address optimal hypersurface decision tree problems.

Specifically, we focus on two algorithms, $\mathit{odt}_{\text{vec}}$
and $\mathit{odt}_{\text{rec}}$—the former offering superior hardware
compatibility, while the latter achieving stronger theoretical efficiency.
We provide comprehensive experimental results analyzing the computational
efficiency of $\mathit{odt}_{\text{vec}}$ and $\mathit{odt}_{\text{rec}}$.
Additionally, we conduct a thorough assessment of the generalization
capabilities of hypersurface decision tree models and axis-parallel
decision tree algorithms, including those learned by CART and \citet{brita2025optimal}'s
ConTree algorithm, using both synthetic and real-world datasets.

\newpage

\vskip 0.2in
\bibliography{Bibliography}

\end{document}